\documentclass[a4paper, 1p, oneside, onecolumn, preprint]{elsarticle}
\usepackage{microtype}
\usepackage{graphicx}
\usepackage{subfigure}
\usepackage{booktabs} 
\usepackage{lineno}
\modulolinenumbers[5]
\usepackage{hyperref}
\usepackage{algorithmic}
\usepackage[tbtags]{amsmath}
\usepackage{amssymb}
\usepackage{amsthm}
\usepackage{bbm}
\usepackage{xcolor}
\usepackage{algorithm}
\usepackage[OT1]{fontenc} 
\usepackage{xr}
\usepackage{tabulary,booktabs}
\usepackage{multirow}
\usepackage{bm}
\externaldocument{supplementary}

\newcolumntype{M}[1]{>{\centering\arraybackslash}m{#1}}

\def\reals{\mathbb{R}}
\def\E{\mathbb{E}}
\def\v{\text{vec}}
\def\x{\bm{x}}

\def\W{\bm{W}}
\def\X{\bm{X}}
\def\Y{\bm{Y}}
\def\O{\mathcal{O}}
\def\I{\bm{I}}
\def\B{\bm{B}}
\def\A{\bm{A}}
\def\E{\bm{E}}
\def\U{\bm{U}}
\def\H{\bm{H}}
\def\y{\bm{y}}
\def\M{\bm{M}}
\def\w{\bm{\theta}}
\def\w{\bm{w}}
\def\0{\bm{0}}
\def\G{\mathbf{G}}
\def\varphib{\bm{\varphi}}
\def\psib{\bm{\psi}}
\def\xib{\bm{\xi}}
\def\phib{\bm{\phi}}
\def\iotab{\bm{\iota}}

\newtheorem{lemma}{Lemma}

\newtheorem{theorem}{Theorem}

\begin{document}
\title{A Convergence Analysis of Nesterov's Accelerated Gradient Method  in Training Deep Linear Neural Networks}
\author[1]{Xin Liu}\ead{1036870846@qq.com}
\author[2]{Wei Tao} \ead{wtao_plaust@163.com}
\author[1]{Zhisong Pan\corref{cor1}}\ead{hotpzs@hotmail.com}
\tnotetext[t1]{This work was supported by National Natural Science Foundation of China (No.62076251 and No.62106281).}
\cortext[cor1]{Corresponding author}

\affiliation[1]{organization={Command $\&$ Control Engineering College, Army Engineering University of PLA}, 
                 postcode={210007}, 
                 city={Nanjing}, 
                 country={P.R. China.}}
\affiliation[2]{organization={Center for Strategic Assessment and Consulting, Academy of Military Science}, 
                 postcode={100091}, 
                 city={Beijing}, 
                 country={P.R. China.}}

\begin{abstract}
Momentum methods, including heavy-ball~(HB) and Nesterov's accelerated gradient~(NAG), are widely used in training neural networks for their fast convergence.
However, there is a lack of theoretical guarantees for their convergence and acceleration since the optimization landscape of the neural network is non-convex.
Nowadays, some works make progress towards understanding the convergence of momentum methods in an over-parameterized regime, where the number of the parameters exceeds that of the training instances.
Nonetheless, current results mainly focus on the two-layer neural network, which are far from explaining the remarkable success of the momentum methods in training deep neural networks.
Motivated by this, we investigate the convergence of NAG with constant learning rate and  momentum parameter in training two architectures of deep linear networks: deep fully-connected linear neural networks and deep linear ResNets.
Based on the over-parameterization regime, we first analyze the residual dynamics induced by the training trajectory of NAG for a deep fully-connected linear neural network under the random Gaussian initialization.
Our results show that NAG can converge to the global minimum at a $(1 - \O(1/\sqrt{\kappa}))^t$ rate, where $t$ is the iteration number and $\kappa > 1$ is a constant depending on the condition number of the feature matrix.
Compared to the $(1 - \O(1/{\kappa}))^t$ rate of GD, NAG achieves an acceleration over GD.
To the best of our knowledge, this is the first theoretical guarantee for the convergence of NAG to the global minimum in training deep neural networks.
Furthermore, we extend our analysis to deep linear ResNets and derive a similar convergence result.
\end{abstract}
\begin{keyword}
Deep linear neural network \sep  Over-parameterization \sep Nesterov's accelerated gradient method  
\end{keyword}
\maketitle

\section{Introduction}
Deep learning has achieved great empirical success in various areas, such as image classification~\cite{KrizhevskySH12}, natural language processing~\cite{OtterMK21} and game playing~\cite{SilverHMGSDSAPL16}.
In practice, they often involve networks with stacked layers, whose depth vary from 16~\cite{SimonyanZ14a} to 96~\cite{brown2020language} and even more.
Typically, deep neural networks are trained using first-order methods, which only exploit the objective values and gradients.
Gradient descent~(GD) is the most well-known first-order method, whose history can be dated back to the XIX century~\cite{cauchy1847methode}.
Later on, various variants of GD have been developed by adding momentums to improve its performance, such as heavy-ball~(HB)~\cite{P64} and Nesterov's accelerated gradient~(NAG)~\cite{nesterov1983method}.
In practice, NAG is widely used in training neural networks and attains faster convergence over GD~\cite{SMDH13, Schmidt2021}.
Moreover, it has been the default momentum scheme implementing in many popular deep learning libraries such as PyTorch~\cite{DBLP:conf/nips/PaszkeGMLBCKLGA19}, Keras~\cite{gulli2017deep} and TensorFlow~\cite{DBLP:conf/osdi/AbadiBCCDDDGIIK16}.

Despite the fact that the optimization problem for training neural networks is non-convex, first-order methods are capable of achieving near-zero training loss~\cite{ZBHRV17}. 
However, there is a lack of theoretical guarantees for gradient-based methods to find the global minimum for non-convex problems~\cite{DBLP:journals/mp/MurtyK87}.
Recently, some progress has been achieved via analyzing the optimization landscape of the neural network~\cite{K16, HM16,LK17,ZL18,LvB18}.
But these works do not provide the convergence results.
In addition to the landscape analysis, some works aim at studying the training trajectory of GD for a two-layer neural network in an over-parameterized regime~\cite{ACGH19,LY17,ADHLSW19_icml}, where the number of parameters is larger than that of training instances.
\cite{DH19, HXP20} further showed that GD converges to the global minimum for training deep linear fully-connected neural networks with different initialization schemes.
Nonetheless, these results are limited to GD, and momentum methods are rarely explored except three works~\cite{bu2020dynamical,DBLP:journals/corr/abs-2107-01832,wang2020provable}.
Wang~\textit{et al.}~\cite{wang2020provable} showed that HB converges to the global minimum in training over-parameterized neural networks.
The other two works~\cite{bu2020dynamical,DBLP:journals/corr/abs-2107-01832} provided the convergence results of NAG on the two-layer neural network, which is far from explaining the remarkable success of NAG in training deep neural networks.

In this work, we extend and generalize the existing analysis of NAG in training the two-layer neural network to deep linear networks, including deep fully-connected linear neural networks and deep linear ResNets.
Although deep linear networks have simple frameworks, their optimization landscapes are high-dimensional and non-convex.
Meanwhile, the deep linear neural has a layered structure.
These features are similar to deep non-linear networks, leading to increased interest in characterizing their properties.
As a result, analyzing the deep linear network will be helpful in providing insights into understanding deep non-linear networks.
Our work is inspired by recent advances in deep linear neural networks~\cite{DH19,wang2020provable, DBLP:conf/iclr/ZouLG20}.
The main technical challenge lies in analyzing the residual dynamics induced by the training trajectory of NAG.
Our contribution can be summarized as follows:
\begin{enumerate}
	\item 	We first establish the convergence of NAG in training an $L$-layer fully-connected linear neural network under the random Gaussian initialization.
	Specifically, utilizing the gram matrix defined on the feature matrix and the parameters of the network, we derive the corresponding residual dynamics.
	When the width of the neural network satisfies $m = \tilde{\Omega}(L)$\footnote{{We omit the dependence on other parameters here. The details of the requirement is referred to Theorem~\ref{thm:LinearNet}}}, with high probability, we show that the residual error of NAG can reach zero at a $(1-\O(1/\sqrt{\kappa}))^t$ rate, where $t$ is the iteration number and $\kappa = \O(\|\X\|^2/\sigma_{min}^2(\X))>1$ ($\X$ is the feature matrix).
	Compared to the convergence rate $(1-\O(1/{\kappa}))^t$ of GD~\cite{DH19}, NAG achieves an acceleration over GD.
	\item Based on the same analysis framework, we extend the convergence result to the deep linear ResNet.
	 We demonstrate that NAG can achieve convergence to the global minimum at a similar rate as the result of the deep fully-connected linear neural network, where the corresponding convergecne rate is also faster than that of GD as proved in~\cite{DBLP:conf/iclr/ZouLG20}.
	Moreover, the requirement of the width $m$ for the deep linear ResNet has no dependence on depth $L$.
\end{enumerate}
To the best of our knowledge, this is the first theoretical convergence and acceleration guarantee for NAG in training deep linear neural networks, which may shed light on understanding the optimization behavior of NAG for deep non-linear neural networks.

\section{Related works} \label{sec: rel}

\subsection{Momentum methods}
Momentum methods date back to the seminal work by Polyak~\cite{P64}, in which the HB method was proposed. 
When the objective function is  twice differentiable, strongly convex and smooth, they proved HB converges to the global minimum at an asymptotic linear rate.
By blending  gradients and iterates, Nesterov~\cite{nesterov1983method} proposed the NAG method for smooth convex problems, which attains the accelerated convergence rate $\mathcal{O}(1/t^2)$ compared to the rate $\mathcal{O}(1/t)$ of GD.

In the non-convex regime, some works provided convergence results of momentum methods in terms of the first-order stationary point instead of global minimum~\cite{chen2018convergence,carmon2017convex}.
Recently,  remarkable progress has been achieved in deriving the global convergence of momentum methods in the over-parameterization regime.
Wang \textit{et al.}~\cite{wang2020provable} provided the convergence of HB in training a two-layer neural network and a deep linear network under the identity initialization.
They proved that HB linearly converges to a global minimum at a faster rate than that of GD.
Liu \textit{et al.}~\cite{DBLP:journals/corr/abs-2107-01832} established the convergence result of NAG in training a two-layer ReLU neural network.
From a continuous view, Bu \textit{et al.}~\cite{bu2020dynamical} derived the global convergence of HB and NAG by exploiting the approximation between optimizers with infinitesimal learning rate and ordinary differential equation.
However, the above results of NAG are all limited to the two-layer neural network.

\subsection{Deep linear networks}
Recent works investigated the loss landscape of deep linear networks\cite{K16, HM16,LK17, ZL18,DBLP:conf/iclr/0001B17}, which showed that all local minimums are global minimums with certain assumptions.
Their results only provide the existence of the global minimum,  but can not explain why neural networks trained by gradient-based methods are capable to attain the global minimum as shown in~\cite{ZBHRV17}.

Based on the over-parameterization assumption, a series of works studied the convergence of gradient-based methods in training neural networks.
Du \textit{et al.}~\cite{DH19} showed that GD converges to the global minimum for training a deep linear network with the random Gaussian initialization. 
The requirement of the width of the hidden layers  depends linearly on the depth of the network.
Hu \textit{et al.}~\cite{HXP20} extended the analysis to deep linear networks with the orthogonal initialization, and its requirement of the width is independent of the depth.
Wang \textit{et al.}~\cite{wang2020provable} generalized the above result to HB, and provided theoretical guarantee for the acceleration of HB over GD.
Zou \textit{et al.}~\cite{DBLP:conf/iclr/ZouLG20} focused on the deep residual linear network and provided the convergence of GD and stochastic GD.

\section{Preliminaries} \label{sec: pro}

\subsection{Notations}
We use lowercase, lowercase boldface and uppercase boldface letters to represent  scalars, vectors and matrices, respectively.
We denote $\W^{j:i} = \prod_{l=i}^j \W^l$ for $1\leq i\leq j$ and $\W^{i-1:i} = \I$.
In addition, we denote $\I_n$ and $\mathbf{0}_n$ as the identity matrix and zero matrix with $n \times n$ dimension.
We use $\|\cdot\|$ as the $\ell_2$ norm of a vector or the spectral norm of a matrix, and use $\|\cdot\|_F$ as the Frobenius norm of a matrix.
We denote $\v(A)$ as the vectorization of a matrix $A$ in column-first order.
We denote $\otimes$ as the Kronecker product.
We use the standard $\mathcal{O}(\cdot)$, $\Omega(\cdot)$ and $\Theta(\cdot)$ asymptotic notations for hiding constant factors.

\subsection{Problem Setup}
In this paper, we consider an empirical risk minimization problem with the square loss
\begin{eqnarray}
\label{objective}
	\min_{\w} \ell(\w) = \frac{1}{2}\sum_{i=1}^n(f_{\w}(\x_i) - \mathbf{y}_i)^2,
\end{eqnarray}
where $\w$ is the parameter of the model $f$, $\x_i \in \reals^{d_x}$ and $\y_i\in \reals^{d_y}$ denote the feature and label of the $i-$th training instance, respectively.

GD is the most widely used method for optimizing the model $f$, it follows
\begin{eqnarray}
\label{procudure: GD}
\w_{t+1} = \w_t - \eta \nabla_{\w} \ell(\w_t),
\end{eqnarray}
where $\eta>0$ is the learning rate and $t$ is the iteration number.
To accelerate the convergence, NAG was proposed by combining the history of gradients into the current learning procedure as
\begin{eqnarray}
\label{procudure: NAG_1}
\bm{v}_{t+1} &=& \w_t -  \eta \nabla_{\w} \ell(\w_t) \nonumber\\
\w_{t+1} &=& \bm{v}_{t+1} + \beta(\bm{v}_{t+1} - \bm{v}_t)g,
\end{eqnarray}
where $0\leq\beta\leq 1$ is the momentum parameter.
The model is initialized as $\bm{v}_0 = \w_0 = \w_{-1}$.
Meanwhile, NAG has an equivalent form as
\begin{eqnarray}
\label{procudure: NAG_2}
	\bm{M}_t &=& \beta \bm{M}_{t-1} -\eta \beta \left(\nabla_{\w} \ell(\w_t) - \nabla_{\w} \ell(\w_{t-1})\right)-\eta\nabla_{\w} \ell(\w_t) \nonumber\\
	\w_{t+1} &=& \w_t + \bm{M}_t,
\end{eqnarray}
where $\bm{M}$ is the momentum term with initialized value $\bm{M}_{-1} = 0$.

In this work, we consider the following two architectures of deep linear networks.
\begin{itemize}
	\item \textbf{Deep fully-connected linear neural network:}
	Following the work~\cite{DH19}, we consider a fully-connected linear neural network with $L$ hidden layers 
\begin{equation}
\label{model: linear}
	f_{\W}(\x) = \frac{1}{\sqrt{m^{L-1} d_{y}}}\W^L\cdots \W^1 \x,
\end{equation}
where $\W^1 \in \reals^{m \times d_{x}}$, $\W^2, \cdots, \W^{L-1 }\in \reals^{m \times m}$ and $\W^L \in \reals^{d_{y} \times m}$ are the parameters of each layer.
All the elements of $\W^1, \cdots, \W^L$ are i.i.d initialized with standard Gaussian distribution $\mathcal{N}(0, 1)$.
This initialization scheme is also known as the Xavier initialization~\cite{GlorotB10}.
Note that $\frac{1}{\sqrt{m^{L-1} d_{y}}}$ is a scaling factor according to~\cite{DH19}.

\item \textbf{Deep linear ResNet:}
Deep ResNet architecture was proposed by He \textit{et al.}~\cite{Rnet16}, which applies the residual links to enable gradient-based methods to optimize deeper networks.
	For the deep linear ResNet, we consider the following architecture as studied in~\cite{DBLP:conf/iclr/ZouLG20}
\begin{equation}
\label{model: resnets}
	f_{\W}(\x) = \B(\I+\W^L)\cdots(\I+ \W^1)\A \x,
\end{equation}
where $\{\W^i \in \reals^{m \times m}, \; i \in [L]\}$ denotes the hidden layers, $\A \in \reals^{m \times d_x}$ and $\B \in \reals^{d_y \times m}$ denote the input and output layers, respectively.
We adopt the initialization scheme as~\cite{DBLP:conf/iclr/ZouLG20}, in which the hidden layers are initialized with zero matrices, the initialization of the input and output layers uses $\mathcal{N}(0, 1)$.
In addition, we follow the settings in~\cite{DBLP:conf/iclr/ZouLG20}
that only train the hidden layers and keep $\A$ and $\B$ fixed during training.
\end{itemize}

\section{Theoretical results}
In this section, we introduce the main convergence results of NAG.
To start with, we briefly state the procedures of our proof.
\begin{enumerate}
	\item Firstly, we establish the residual dynamics of NAG as $\begin{bmatrix}
\xib_{t+1} \\
\xib_{t} 
\end{bmatrix}
= 
\G
\begin{bmatrix}
\xib_{t} \\
\xib_{t-1} 
\end{bmatrix}
+
\begin{bmatrix}
\varphib_t \\ \bm{0}
\end{bmatrix}$, where $\xib_{t}$ denotes the residual error at iteration $t$, $\G$ is a fixed coefficient matrix and $\varphib_t$ is a perturbed term.
	By recursively applying the residual dynamics, it has $\begin{bmatrix}
\xib_{t+1} \\
\xib_{t} 
\end{bmatrix} = \G^{t+1}\begin{bmatrix}
\xib_{0} \\
\xib_{-1} 
\end{bmatrix} + \sum_{s=0}^t \G^{t-s} \begin{bmatrix}
\varphib_s \\ \bm{0}
\end{bmatrix}$.
	Additionally, there is a bound for the multiplication between the powers of $\G$ and any vector $\x$ that $\|\G^i\x\|\leq c \rho^i \|\x\|$, where $0<\rho<1$ and $c>0$ is a constant.
	\item Secondly, we introduce the inductive hypotheses. 
	Assume that (i) the distance between the parameter $\w_i$ and its initial value $\w_0$ has a bound $R > 0$ as $\| \w_i - \w_0\| \leq R$, and (ii) the residual dynamics satisfies $\left\|\begin{bmatrix}
\xib_{i} \\
\xib_{i-1} 
\end{bmatrix}\right\| \leq 2c\theta^i \left\|\begin{bmatrix}
\xib_{0} \\
\xib_{-1} 
\end{bmatrix}\right\|$ for any $i\leq t$, where  $\rho < \theta <1$.
	\item Finally, we prove the convergence of NAG by induction.
	The above two hypotheses trivially hold for the base case $i = 0$. Assume them hold for any $i \leq t$.
	Applying the inductive hypotheses, we can derive the upper bounds for $\|\varphib_t\|$ and the distance $\|\w_{t+1} - \w_0\|$.
	Combining the bound of $\|\varphib_t\|$ and $\|\G^i \x\| \leq c\rho^i\|\x\|$, one can prove $\left\|\sum_{s=0}^t \G^{t-s} \begin{bmatrix}
\varphib_s \\ \bm{0}
\end{bmatrix}\right\| \leq c\theta^{t+1}\left\|\begin{bmatrix}
\xib_{0} \\
\xib_{-1} 
\end{bmatrix}\right\|$.
In the end, it has $\left\|\begin{bmatrix}
\xib_{t+1} \\
\xib_{t} 
\end{bmatrix}\right\| \leq \left\|\G^{t+1}\begin{bmatrix}
\xib_{0} \\
\xib_{-1} 
\end{bmatrix}\right\| + \left\|\sum_{s=0}^t \G^{t-s} \begin{bmatrix}
\varphib_s \\ \bm{0}
\end{bmatrix}\right\|  \leq  2c \theta^{t+1} \left\|\begin{bmatrix}
\xib_{0} \\
\xib_{-1} 
\end{bmatrix}\right\|$.
\end{enumerate}

Denote $\X = (\x_1, \cdots, \x_n) \in \reals^{d_x \times n}$ as the feature matrix and $\Y = (\y_1, \cdots, \y_n) \in \reals^{d_y \times n}$ as the corresponding label matrix.
When $m \geq d_y$, it is noted that the two models we analyzed have the same expressive power as the linear model.
Following the assumption in~\cite{wang2020provable,DH19,HXP20},
we assume there exists a $\W^*$ satisfying $\Y = \W^*\X$, $\X \in \reals^{d_x \times r}$, and $r = rank(\X)$ without losing generality (refer to Appendix B in~\cite{DH19} for details).


\subsection{Deep fully-connected linear neural network}

Denote $\U=\frac{1}{\sqrt{m^{L-1} d_{y}}}\W^{L:1} \X$ as the outputs of the network.
	Denote $\M_t^l$ as the momentum term in the $t$-th iteration for the 
$l$-th layer.
We first introduce the residual dynamics of NAG in training an $L$-layer fully-connected linear neural network.
\begin{lemma} 
\label{deepl: lemma1}
Denote
\[
\textstyle \H_t^{lin} \textstyle = \frac{1}{ m^{L-1} d_y } \sum_{l=1}^L [ (\W^{l-1:1}_t \X)^\top (\W^{l-1:1}_t \X ) 
\otimes
  \W^{L:l+1}_t (\W^{L:l+1}_t)^\top ]   \in \reals^{d_y n \times d_y n}.
\]
Applying NAG  for training a fully-connected linear neural network with L hidden layers, the residual dynamics follows
\begin{equation}
\label{deepl: dynal}
\begin{bmatrix}
\xib_{t+1} \\
\xib_{t} 
\end{bmatrix}
= 
\begin{bmatrix}
(1+\beta)(\I_{d_y n} - \eta \H_0^{lin}) & \beta ( -\I_{d_y n} + \eta \H_{0}^{lin}  )   \\
\I_{d_y n} & \bm{0}_{d_y n} 
\end{bmatrix}
\begin{bmatrix}
\xib_{t} \\
\xib_{t-1} 
\end{bmatrix}
+
\begin{bmatrix}
\varphib_t \\ \bm{0}_{d_y n}
\end{bmatrix},
\end{equation}
where
\[
\begin{aligned}
\xib_t &    = \text{vec}(\U_t - \Y), \;\;
\varphib_t  = \phib_t + \psib_t  + \iotab_t, \\
 \phib_t &  = \frac{1}{\sqrt{m^{L-1} d_y} } \v\left( \Phi_t \X\right) \text{ with }
\Phi_t    = \Pi_l ( \W^{l}_t + \M_t^l )
- \W^{L:1}_t  - \sum_{l=1}^L \W^{L:l+1}_t \M_t^l \W^{l-1:1}_t,\\
\psib_t & =\frac{1}{\sqrt{m^{L-1} d_y} } 
\v\left( (L-1) \beta \W^{L:1}_{t}  \X + \beta  \W^{L:1}_{t-1} \X 
 - \beta \sum_{l=1}^L \W^{L:l+1}_t \W^{l}_{t-1} \W^{l-1:1}_{t} \X \right) \\
 &+ \frac{\eta\beta}{\sqrt{ m^{L-1} d_y}}\v\left( (\sum_{l=1}^L \W^{L:l+1}_{t} \frac{ \partial \ell(\W^{L:1}_{t-1})}{ \partial \W^{l}_{t-1} } \W^{l-1:1}_{t} - \sum_{l=1}^L \W^{L:l+1}_{t-1} \frac{ \partial \ell(\W^{L:1}_{t-1})}{ \partial \W^{l}_{t-1} } \W^{l-1:1}_{t-1})\X\right),\\
  \iotab_t  & = -\eta(1+\beta) (\H_t^{lin} - \H_0^{lin}) \xib_t + \eta\beta(\H_{t-1}^{lin} - \H_0^{lin})\xib_{t-1}.
\end{aligned}
\]
\end{lemma}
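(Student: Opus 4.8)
The plan is to track one NAG step of the end-to-end product $\W^{L:1}$ and to read the residual dynamics off from it. Writing $c:=1/\sqrt{m^{L-1}d_y}$, so that $\U_t=c\,\W_t^{L:1}\X$ and $\xib_t=\v(\U_t-\Y)$, the first step is the identity
\[
\xib_{t+1}-\xib_t = c\,\v\big((\W_{t+1}^{L:1}-\W_t^{L:1})\X\big),
\]
which turns the whole problem into expanding the product difference. Inserting the NAG update $\W_{t+1}^l=\W_t^l+\M_t^l$ and multiplying out $\W_{t+1}^{L:1}=\prod_{l}(\W_t^l+\M_t^l)$, I would separate the result into the base term $\W_t^{L:1}$, the first-order term $\sum_{l=1}^L\W_t^{L:l+1}\M_t^l\W_t^{l-1:1}$, and the sum of all products containing at least two momentum factors; the latter is exactly $\Phi_t$. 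The base term cancels against $-\W_t^{L:1}$ in the difference, and $\Phi_t$ contributes the piece $\phib_t=c\,\v(\Phi_t\X)$ of $\varphib_t$, so it remains only to rewrite the first-order term.

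Reading $\W_t^l=\W_{t-1}^l+\M_{t-1}^l$ off the update gives $\M_{t-1}^l=\W_t^l-\W_{t-1}^l$, and substituting this into the equivalent NAG momentum recursion yields the three-way split
\[
\M_t^l = -\eta(1+\beta)\frac{\partial\ell(\W_t^{L:1})}{\partial\W_t^l} + \eta\beta\frac{\partial\ell(\W_{t-1}^{L:1})}{\partial\W_{t-1}^l} + \beta\big(\W_t^l-\W_{t-1}^l\big).
\]
I would treat the three summands inside $\sum_l\W_t^{L:l+1}\M_t^l\W_t^{l-1:1}$ in turn, using the gradient formula $\frac{\partial\ell(\W_t^{L:1})}{\partial\W_t^l}=c\,(\W_t^{L:l+1})^\top(\U_t-\Y)(\W_t^{l-1:1}\X)^\top$ and the vectorization identity $\v(AZB)=(B^\top\otimes A)\v(Z)$. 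For the first summand the sandwiching factors and the gradient are all at time $t$, so after multiplying by $c\X$ and vectorizing, the sum over $l$ collapses by the definition of $\H_t^{lin}$ to $-\eta(1+\beta)\H_t^{lin}\xib_t$. Splitting $\H_t^{lin}=\H_0^{lin}+(\H_t^{lin}-\H_0^{lin})$ then produces the target coefficient $-\eta(1+\beta)\H_0^{lin}\xib_t$ plus the first term of $\iotab_t$.

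For the second summand the gradient is evaluated at $t-1$ while the sandwiching factors remain at $t$, so the Kronecker collapse does not directly give $\H_{t-1}^{lin}$. The idea is to add and subtract the ``clean'' all-$(t-1)$ version $c\,\v(\sum_l\W_{t-1}^{L:l+1}\frac{\partial\ell(\W_{t-1}^{L:1})}{\partial\W_{t-1}^l}\W_{t-1}^{l-1:1}\X)$: this clean version collapses to $\eta\beta\H_{t-1}^{lin}\xib_{t-1}=\eta\beta\H_0^{lin}\xib_{t-1}+\eta\beta(\H_{t-1}^{lin}-\H_0^{lin})\xib_{t-1}$, supplying the target term $\eta\beta\H_0^{lin}\xib_{t-1}$ and the second term of $\iotab_t$, while the mismatch between the actual mixed-time term and this clean version is precisely the second line of $\psib_t$. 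For the velocity summand $\beta(\W_t^l-\W_{t-1}^l)$, I would use the telescoping identity $\sum_l\W_t^{L:l+1}\W_t^l\W_t^{l-1:1}=L\,\W_t^{L:1}$ to reduce it to $\beta c\,\v\big((L\,\W_t^{L:1}-\sum_l\W_t^{L:l+1}\W_{t-1}^l\W_t^{l-1:1})\X\big)$; subtracting $\beta(\xib_t-\xib_{t-1})=\beta c\,\v((\W_t^{L:1}-\W_{t-1}^{L:1})\X)$ leaves exactly the first line of $\psib_t$. Collecting the first-order contributions now gives the three target terms $-\eta(1+\beta)\H_0^{lin}\xib_t$, $\eta\beta\H_0^{lin}\xib_{t-1}$ and $\beta(\xib_t-\xib_{t-1})$, together with $\phib_t+\psib_t+\iotab_t=\varphib_t$; rearranging the difference form back into $\xib_{t+1}$ reproduces the constant coefficient matrix $\G$ in the stated dynamics.

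The main obstacle is the bookkeeping forced by NAG's mixed time indices: the momentum $\M_t^l$ combines gradients at $t$ and at $t-1$ with the velocity $\W_t^l-\W_{t-1}^l$, whereas the factors $\W_t^{L:l+1}$ and $\W_t^{l-1:1}$ that sandwich it are frozen at time $t$. Forcing this into a recursion governed by the single fixed Gram matrix $\H_0^{lin}$ (needed for a constant $\G$) is exactly what generates $\psib_t$, which repairs the time-index mismatch in the Gram structure, and $\iotab_t$, which replaces $\H_t^{lin}$ and $\H_{t-1}^{lin}$ by $\H_0^{lin}$; keeping track of every such mismatch without dropping a term is the delicate part, whereas isolating the higher-order product terms $\Phi_t$ is routine.
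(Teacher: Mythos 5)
Your proposal is correct and follows essentially the same route as the paper's proof: expand $\W^{L:1}_{t+1}=\Pi_l(\W^l_t+\M^l_t)$ into the zeroth-, first-, and higher-order pieces (the latter giving $\Phi_t$), substitute the three-way NAG momentum split, use the telescoping identity $\sum_l \W^{L:l+1}_t\W^l_t\W^{l-1:1}_t=L\W^{L:1}_t$ and the add-and-subtract of the all-$(t-1)$ gradient term to isolate $\psib_t$, vectorize via $\v(AZB)=(B^\top\otimes A)\v(Z)$ to expose $\H^{lin}_t$ and $\H^{lin}_{t-1}$, and finally replace them by $\H^{lin}_0$ at the cost of $\iotab_t$. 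The only difference is presentational (you work with $\xib_{t+1}-\xib_t$ and treat the three momentum summands one at a time, while the paper regroups everything at the matrix level before vectorizing), so no further comparison is needed.
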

Lemma~\ref{deepl: lemma1} shows that the residual errors of two consecutive iterates follow a linear dynamical system with a perturbed term $[\varphib; \0]$.
When the gram  matrix $\H_0^{lin}$ is positive-definite, the spectral norm of the constant coefficient matrix of (\ref{deepl: dynal}) is less than 1 with specific hyperparameters $\eta$ and $\beta$ according to Lemma~\ref{supportlemma1} in~\ref{supporting} 
The details of the proof for verifying the positive-definite of $\H^{lin}_0$ are referred to (\ref{deep1: bound1}). 
If the perturbed term is small enough, we can bound the residual error.

In the following Lemma, we show that the three parts of $\varphib$ can be bounded based on the inductive hypotheses (i) the residual error decrease at a linear rate, and (ii) the parameters of the network are not far from initial values.

\begin{lemma} \label{deepl: lemma2}
Following the settings in Theorem~\ref{thm:LinearNet}, for any $s \leq t$, assume (i) the residual dynamics satisfies
$ \left\|
\begin{bmatrix}
\xib_{s} \\
\xib_{s-1} 
\end{bmatrix}
\right\| \leq 24\sqrt{\kappa}\theta^{s} 
\left\|
 \begin{bmatrix}
\xib_{0} \\
\xib_{-1} 
\end{bmatrix}\right\|$, and (ii) $\forall l \in [L], \forall s \leq t$,
the distance between the parameter $\W^{l}_s$ and its initial has a bound as $\| \W^{l}_s - \W^{l}_0 \|_F \leq R^{lin} = 
\frac{792 \| \X \| \sqrt{d_y\kappa}}{ L \sigma_{\min}^2(\X) } B_0$,
then 
\[
\begin{aligned}
\| \phib_t \| 
\leq \frac{1}{180\sqrt{\kappa}}\theta^{2t}\|\U_0 -\Y\|_F &, \quad
\| \psib_t \| 
\leq  \frac{1}{90\sqrt{\kappa}} \theta^{2t}\| \U_0 -\Y \|_F + \frac{2}{23\sqrt{\kappa}}\theta^t \|\U_0 -\Y\|_F \\
 \| \iotab_t \|  &\leq  \frac{5}{39\sqrt{\kappa}}\theta^t  \|\U_0 -\Y\|_F.
\end{aligned}
\]
Consequently,  $\varphib_t$ in Lemma~\ref{deepl: lemma1} can be bounded by
\[
\| \varphib_t \| \leq
\frac{1}{60\sqrt{\kappa}} \theta^{2t}\| \U_0 -\Y \|_F+
 \frac{5}{23\sqrt{\kappa}} \theta^t\|\U_0 -\Y\|_F .
\]
\end{lemma}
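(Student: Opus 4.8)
The plan is to bound each of the three pieces $\phib_t,\psib_t,\iotab_t$ separately, all in terms of the residual magnitude $\|\U_0-\Y\|_F=\|\xib_0\|$, using hypotheses (i) and (ii) to control every spectral norm that appears. First I would assemble the preliminary estimates that the two hypotheses buy us. From the Gaussian initialization and standard concentration (as used in the cited width requirement with $m=\tilde\Omega(L)$), together with $\|\W^l_s-\W^l_0\|_F\le R^{lin}$, I get uniform spectral bounds on the layer weights $\|\W^l_s\|$ and, by submultiplicativity, on every partial product $\|\W^{j:i}_s\|$; in particular the normalized products $\frac{1}{\sqrt{m^{L-1}d_y}}\|\W^{L:l+1}_s\|\,\|\W^{l-1:1}_s\|$ stay $\O(1)$. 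Since $\frac{\partial\ell}{\partial\W^l_s}=\frac{1}{\sqrt{m^{L-1}d_y}}(\W^{L:l+1}_s)^\top(\U_s-\Y)(\W^{l-1:1}_s\X)^\top$, hypothesis (i) then yields the gradient bound $\|\frac{\partial\ell}{\partial\W^l_s}\|\lesssim\theta^s\|\xib_0\|$ up to the constants above. Unrolling the momentum recursion $\M^l_s=\beta\M^l_{s-1}-\eta\beta(\cdots)-\eta(\cdots)$ and summing the resulting $\beta$-weighted geometric series of gradient norms gives $\|\M^l_s\|\lesssim\eta\,\theta^s\|\xib_0\|$; and because $\W^l_t-\W^l_{t-1}=\M^l_{t-1}$, a telescoping of products turns this into $\|\W^{j:i}_t-\W^{j:i}_{t-1}\|\lesssim\eta\,\theta^t\|\xib_0\|$. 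These four families of estimates feed all three bounds.

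For $\phib_t$, I would observe that $\Phi_t$ is exactly the Taylor remainder of the product: expanding $\prod_l(\W^l_t+\M^l_t)$, the zeroth- and first-order terms are precisely $\W^{L:1}_t$ and $\sum_l\W^{L:l+1}_t\M^l_t\W^{l-1:1}_t$, so $\Phi_t$ collects only the terms containing at least two momentum factors. Grouping by the number $k\ge2$ of $\M$-factors, each group has $\binom{L}{k}$ terms, each bounded by submultiplicativity via $(\max_l\|\W^l_t\|)^{L-k}(\max_l\|\M^l_t\|)^k$; since $\|\M^l_t\|\lesssim\eta\theta^t\|\xib_0\|$, every such term carries a factor $\theta^{kt}\le\theta^{2t}$. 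Summing the binomial series and absorbing the depth-dependent combinatorial factor into the width condition and the $1/\sqrt{\kappa}$ budget yields the stated $\|\phib_t\|\le\frac{1}{180\sqrt{\kappa}}\theta^{2t}\|\U_0-\Y\|_F$.

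The term $\psib_t$ is where the key cancellation happens. In its first (momentum-free) block I would substitute $\W^l_{t-1}=\W^l_t-\M^l_{t-1}$ into $\sum_l\W^{L:l+1}_t\W^l_{t-1}\W^{l-1:1}_t$ and apply the product-difference identity for $\W^{L:1}_t-\W^{L:1}_{t-1}$; after the algebra the naive $\O(\theta^t)$ contributions cancel and the block collapses to $\sum_l\W^{L:l+1}_t(\W^l_t-\W^l_{t-1})(\W^{l-1:1}_t-\W^{l-1:1}_{t-1})$, a sum of double differences each of order $\theta^{2t}$, producing the $\frac{1}{90\sqrt{\kappa}}\theta^{2t}$ part. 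For the second block I would keep the explicit $\eta\beta$ prefactor and bound the two gradient sums directly with the gradient estimate above; the prefactor makes even the crude $\O(\theta^t)$ bound small enough to give the $\frac{2}{23\sqrt{\kappa}}\theta^t$ part. For $\iotab_t$ I would control the Gram-matrix drift $\|\H^{lin}_s-\H^{lin}_0\|$ by telescoping each Kronecker factor's deviation from initialization, which is governed by $R^{lin}$; then $\|\iotab_t\|\le\eta(1+\beta)\|\H^{lin}_t-\H^{lin}_0\|\,\|\xib_t\|+\eta\beta\|\H^{lin}_{t-1}-\H^{lin}_0\|\,\|\xib_{t-1}\|\lesssim\eta R^{lin}\theta^t\|\xib_0\|$, giving the $\frac{5}{39\sqrt{\kappa}}\theta^t$ bound. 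Adding the three estimates, collecting $\theta^{2t}$ and $\theta^t$ coefficients, and using $\|\xib_0\|=\|\U_0-\Y\|_F$ produces the final bound on $\|\varphib_t\|$.

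The main obstacle is twofold, and both difficulties are tied to the depth $L$. First, one must keep all partial-product spectral norms and their consecutive-iterate differences controlled uniformly in $L$, so that the depth-dependent combinatorial factors (the $\binom{L}{k}$ in $\phib_t$, the $L$ summands in $\psib_t$ and in $\H^{lin}$) are reabsorbed by $m=\tilde\Omega(L)$ and the $1/\sqrt{\kappa}$ slack; this is what fixes the concrete constants $180,90,23,39$. Second, and more delicate, is the cancellation in the first block of $\psib_t$: a term that naively decays only like $\theta^t$ must be shown to collapse to a double difference of order $\theta^{2t}$, since any genuine $\theta^t$ growth there (without the saving $\eta\beta$ prefactor) would destroy the geometric contraction and break the outer induction of Theorem~\ref{thm:LinearNet}.
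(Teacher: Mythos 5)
Your overall strategy matches the paper's: the same preliminary estimates (spectral bounds on partial products under the $R^{lin}$ hypothesis, the gradient bound $\|\partial\ell/\partial\W^l_s\|_F\lesssim \sqrt{\kappa}\,\theta^s\|\U_0-\Y\|_F$, the momentum bound by unrolling the $\beta$-geometric series), the same binomial treatment of $\Phi_t$ as the order-$\ge 2$ remainder, and the same Gram-drift argument for $\iotab_t$. Your closed-form identity for the first block of $\psib_t$ --- that $(L-1)\beta\W^{L:1}_t+\beta\W^{L:1}_{t-1}-\beta\sum_l\W^{L:l+1}_t\W^l_{t-1}\W^{l-1:1}_t$ collapses exactly to $\beta\sum_l\W^{L:l+1}_t\M^l_{t-1}(\W^{l-1:1}_t-\W^{l-1:1}_{t-1})$ --- is correct and is a cleaner packaging of the paper's coefficient-counting argument ($\E_0=\E_1=0$, coefficient of magnitude $(p-1)\beta$ for order $p\ge2$).

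The gap is in the second block of $\psib_t$. You propose to ``bound the two gradient sums directly'' and let the $\eta\beta$ prefactor absorb everything; that cannot close. Each of the two sums $\frac{1}{\sqrt{m^{L-1}d_y}}\sum_l\W^{L:l+1}\frac{\partial\ell(\W^{L:1}_{t-1})}{\partial\W^l_{t-1}}\W^{l-1:1}\X$ has size $\Theta(\lambda_{max}\|\xib_{t-1}\|)=\Theta\bigl(\lambda_{max}\sqrt{\kappa}\,\theta^{t-1}\|\U_0-\Y\|_F\bigr)$ under hypothesis (i), so after multiplying by $\eta\beta\le\frac{1}{2\lambda_{max}}$ the crude bound is $\Theta(\sqrt{\kappa})\,\theta^{t-1}\|\U_0-\Y\|_F$, which exceeds the target $\frac{2}{23\sqrt{\kappa}}\theta^t\|\U_0-\Y\|_F$ by a factor of $\Theta(\kappa)$ --- enough to destroy the outer induction in Theorem~\ref{thm:LinearNet}. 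The paper instead preserves the difference structure, splitting the block as $\sum_l(\W^{L:l+1}_t-\W^{L:l+1}_{t-1})\frac{\partial\ell(\W^{L:1}_{t-1})}{\partial\W^l_{t-1}}\W^{l-1:1}_t\X+\sum_l\W^{L:l+1}_{t-1}\frac{\partial\ell(\W^{L:1}_{t-1})}{\partial\W^l_{t-1}}(\W^{l-1:1}_t-\W^{l-1:1}_{t-1})\X$ and using $\|\W^{L:l+1}_t-\W^{L:l+1}_{t-1}\|\le\frac{1}{750\kappa}m^{(L-l)/2}$ (obtained from hypothesis (ii) via the triangle inequality through $\W^{L:l+1}_0$); that extra $1/\kappa$ is exactly what converts $\sqrt{\kappa}$ into $1/\sqrt{\kappa}$. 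You already list the needed consecutive-iterate estimate among your preliminaries, so the repair is to apply it in this block rather than bounding the two sums separately.
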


From Lemma~\ref{deepl: lemma2}, it is observed that the bounds of $\phib$, $\psib$ and $\iotab$ all decrease at a linear rate, leading to a controllable perturbed term $\varphib$.
Before introducing the convergence result of NAG, we provide the bound of the distance between $\W_{t+1}^l$ and its initial for any $l \in [m]$.

\begin{lemma}~\label{deepl: lemma_distance}
Following the settings in Theorem~\ref{thm:LinearNet}, for any $s \leq t$, assume the residual dynamics satisfies
$\textstyle \left\|
\begin{bmatrix}
\xib_{s} \\
\xib_{s-1} 
\end{bmatrix}
\right\| \leq \theta^{s} 
24\sqrt{\kappa}
\left\|
 \begin{bmatrix}
\xib_{0} \\
\xib_{-1} 
\end{bmatrix}\right\|,
$ 
then 
\[
\| \W^{l}_{t+1} - \W^{l}_0 \|_F \leq R^{lin} = \frac{792 \| \X \| B_0 \sqrt{d_y\kappa}}{ L \sigma_{\min}^2(\X) } .
\]
\end{lemma}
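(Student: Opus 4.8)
The plan is to control the displacement $\|\W^l_{t+1}-\W^l_0\|_F$ by summing the per-iteration updates and converting each update into a statement about the residual. Because each NAG step moves the $l$-th layer by exactly its momentum term, the displacement telescopes as $\W^l_{t+1}-\W^l_0=\sum_{s=0}^t\M_s^l$, so it suffices to bound $\sum_{s=0}^t\|\M_s^l\|_F$. Unrolling the momentum recursion $\M_s^l=\beta\M_{s-1}^l-\eta(1+\beta)\frac{\partial\ell(\w_s)}{\partial\W^l_s}+\eta\beta\frac{\partial\ell(\w_{s-1})}{\partial\W^l_{s-1}}$ (with $\M_{-1}^l=\0$ and $\w_{-1}=\w_0$) writes each $\M_s^l$ as a $\beta$-geometrically weighted combination, with weights of size $\O(\eta)\beta^{s-\tau}$, of the layerwise gradients $\frac{\partial\ell(\w_\tau)}{\partial\W^l_\tau}$ for $\tau\le s$. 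Everything then reduces to a uniform bound on a single layerwise gradient followed by a geometric summation.

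The second step is to bound one layerwise gradient. Using the closed form $\frac{\partial\ell(\w_\tau)}{\partial\W^l_\tau}=\frac{1}{\sqrt{m^{L-1}d_y}}(\W^{L:l+1}_\tau)^\top(\U_\tau-\Y)(\W^{l-1:1}_\tau\X)^\top$, I would apply submultiplicativity to get $\|\frac{\partial\ell(\w_\tau)}{\partial\W^l_\tau}\|_F\le\frac{1}{\sqrt{m^{L-1}d_y}}\|\W^{L:l+1}_\tau\|\,\|\W^{l-1:1}_\tau\X\|\,\|\xib_\tau\|$, where $\|\U_\tau-\Y\|_F=\|\xib_\tau\|$. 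The two weight-product factors are then controlled by the near-initialization spectral estimates $\|\W^{L:l+1}_\tau\|\lesssim\sqrt{m^{L-l}}$ and $\|\W^{l-1:1}_\tau\X\|\lesssim\sqrt{m^{l-1}}\|\X\|$, which hold with high probability under the Gaussian initialization as long as every layer stays within an $R^{lin}$-neighborhood of its initial value (exactly the perturbation-of-initialization bounds established in the supporting lemmas). The product $\sqrt{m^{L-1}}\|\X\|$ produced this way cancels the scaling $\sqrt{m^{L-1}}$ in the denominator, leaving $\|\frac{\partial\ell(\w_\tau)}{\partial\W^l_\tau}\|_F\lesssim\frac{\|\X\|}{\sqrt{d_y}}\|\xib_\tau\|$.

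The final step is the summation. Inserting hypothesis (i), namely $\|\xib_\tau\|\le 24\sqrt{\kappa}\,\theta^\tau\|\xib_0\|$, and exchanging the order of summation, the double sum $\sum_{s=0}^t\sum_{\tau\le s}\beta^{s-\tau}\|\frac{\partial\ell(\w_\tau)}{\partial\W^l_\tau}\|_F$ factorizes into two convergent geometric series, contributing a factor $\frac{1}{1-\beta}$ from the momentum weights and a factor $\frac{1}{1-\theta}$ from the residual decay. With the hyperparameters fixed in Theorem~\ref{thm:LinearNet}, under which $1-\beta$ and $1-\theta$ are both of order $1/\sqrt\kappa$ and $\eta$ carries the $d_y/(L\|\X\|^2)$ scaling, collecting these three factors of $\sqrt\kappa$ against $\eta$ and $\|\xib_0\|\le B_0$ yields a displacement of order $\frac{\sqrt{d_y}\,\|\X\|^2}{L\,\sigma_{\min}^3(\X)}B_0=\frac{\|\X\|\sqrt{d_y\kappa}}{L\,\sigma_{\min}^2(\X)}B_0$, and tracking the numerical constants reproduces exactly $R^{lin}$.

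The main obstacle is the apparent circularity in the gradient bound: controlling the weight products at iteration $\tau$ requires that $\W^l_\tau$ already lie inside the $R^{lin}$-ball, which is the very conclusion being established. I would resolve this by a nested induction on $\tau\le t$: assuming $\|\W^l_{\tau'}-\W^l_0\|_F\le R^{lin}$ for all $\tau'\le\tau$ validates the near-initialization estimates needed to bound all gradients up to iteration $\tau$, and the partial sum then certifies the same bound at $\tau+1$; the geometric decay supplied by hypothesis (i) is what guarantees that the running sum never escapes $R^{lin}$. A secondary technical point is that NAG evaluates gradients at two consecutive iterates $\w_s$ and $\w_{s-1}$ within each momentum update, so both must be shown to remain in the neighborhood for the same estimate to apply; this is automatic once the nested induction is in place, and the extra $\eta\beta\,\partial\ell(\w_{s-1})/\partial\W^l_{s-1}$ term is absorbed into the same two geometric series without degrading the rate.
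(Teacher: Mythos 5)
Your proposal is correct and follows essentially the same route as the paper's proof: telescoping the displacement into a sum of momentum terms, unrolling the momentum recursion into $\beta$-weighted gradients, bounding each layerwise gradient by $\O(\|\X\|\sqrt{\kappa}/\sqrt{d_y})\,\theta^{\tau}\|\U_0-\Y\|_F$ via the near-initialization spectral estimates, and summing the resulting double geometric series against $\eta/(1-\theta)^2=\O(\kappa\eta)$ to land on $R^{lin}$. Your explicit nested induction to break the circularity between the gradient bound and the $R^{lin}$-neighborhood condition is a point the paper leaves implicit (its proof invokes the perturbation estimates of Lemma~\ref{deep1: lemm4.5} without restating that hypothesis), so your treatment is, if anything, slightly more careful.
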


Finally, with specific hyperparameters, NAG has the following convergence result in training the deep fully-connected linear neural network.
\begin{theorem} \label{thm:LinearNet}
Denote $\lambda_{min} = (0.8)^4 L \sigma^2_{min}(\X) / d_y$, $\lambda_{max} = (1.2)^4 L \sigma^2_{\max}(\X) / d_y$, $\kappa =  \lambda_{max}/\lambda_{min}$, $\theta = 1 - \frac{1}{2\sqrt{\kappa}}$ and $B_0^2 = \O(\max\{1, \frac{\log(r/\delta)}{d_y}, \|\W^*\|^2\}\|\X\|_F^2)$.
By setting $\eta = \frac{1}{2\lambda_{max}}$, $\beta = \frac{3\sqrt{\kappa} - 2}{3\sqrt{\kappa} + 2}$ and $m = \Omega(L\max\{r \kappa^5 d_y(1+\|\W^*\|^2), r\kappa^5\log\frac{r}{\delta}, \log L\})$, for any $t\geq 0$, with probability at least $1-\delta$ over the random Gaussian initialization,
the residual error of NAG in training an $L$-layer fully-connected linear neural network has the following bound for any $t\geq 0$
\begin{equation} \label{eq:thm-LinearNet}
\left\|
\begin{bmatrix}
\xib_t \\
\xib_{t-1} 
\end{bmatrix}
\right\| \leq 24 \sqrt{\kappa} \left( 1 - \frac{1}{2 \sqrt{\kappa} } \right)^{t} 
\left\|
 \begin{bmatrix}
\xib_0 \\
\xib_{-1}
\end{bmatrix}
\right\|.
\end{equation}
\end{theorem}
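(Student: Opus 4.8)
The plan is to prove~(\ref{eq:thm-LinearNet}) by strong induction on $t$, treating the residual dynamics of Lemma~\ref{deepl: lemma1} as a driven linear system and using Lemmas~\ref{deepl: lemma2} and~\ref{deepl: lemma_distance} to keep the drive $\varphib$ and the parameter drift under control. First I would unroll~(\ref{deepl: dynal}): writing $\G$ for the constant coefficient matrix there, the recursion gives
\[
\begin{bmatrix}\xib_{t+1}\\ \xib_{t}\end{bmatrix}
= \G^{t+1}\begin{bmatrix}\xib_{0}\\ \xib_{-1}\end{bmatrix}
+ \sum_{s=0}^{t}\G^{t-s}\begin{bmatrix}\varphib_{s}\\ \0\end{bmatrix},
\]
so that bounding $\|[\xib_{t+1};\xib_{t}]\|$ reduces to (a) a spectral estimate of the form $\|\G^{i}\x\|\le c\,\rho^{i}\|\x\|$ with $c=12\sqrt{\kappa}$ and a decay rate $\rho$ strictly below $\theta=1-\tfrac{1}{2\sqrt{\kappa}}$, and (b) the per-step bounds on $\varphib_s$ supplied by Lemma~\ref{deepl: lemma2}. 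The inductive hypotheses will be exactly the two premises shared by Lemmas~\ref{deepl: lemma2} and~\ref{deepl: lemma_distance}: the linear decay $\|[\xib_{s};\xib_{s-1}]\|\le 24\sqrt{\kappa}\,\theta^{s}\|[\xib_{0};\xib_{-1}]\|$ and the drift bound $\|\W^{l}_{s}-\W^{l}_{0}\|_F\le R^{lin}$, for all $s\le t$.

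The main obstacle is the spectral estimate (a), because $\G$ is a non-normal companion-type matrix, so its spectral radius alone does not control $\|\G^{i}\|$. The route I would take is to diagonalize the symmetric positive-definite matrix $\H_0^{lin}$ and note that $\G$ is then block-decomposed into $2\times2$ blocks, one per eigenvalue $\lambda$ of $\H_0^{lin}$, each of the form $\left[\begin{smallmatrix}(1+\beta)(1-\eta\lambda) & \beta(\eta\lambda-1)\\ 1 & 0\end{smallmatrix}\right]$. With $\eta=\tfrac{1}{2\lambda_{max}}$ one has $\eta\lambda\in[\tfrac{1}{2\kappa},\tfrac12]$, and a direct computation of the roots of the characteristic quadratic $z^2-(1+\beta)(1-\eta\lambda)z+\beta(1-\eta\lambda)=0$, together with the choice $\beta=\tfrac{3\sqrt\kappa-2}{3\sqrt\kappa+2}$, shows each block has spectral radius $\rho\le\sqrt{\beta}<\theta$; the factor $c=\O(\sqrt{\kappa})$ then arises from the conditioning of the (non-orthogonal) eigenbasis of these blocks. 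This is precisely the content of Lemma~\ref{supportlemma1}, which I would invoke. A prerequisite is that $\H_0^{lin}$ be positive definite with $\lambda_{min}(\H_0^{lin})\ge\lambda_{min}$ and $\lambda_{max}(\H_0^{lin})\le\lambda_{max}$; this is where the width requirement $m=\Omega(\cdots)$ and the random Gaussian initialization enter, via a concentration argument (sketched in~(\ref{deep1: bound1})) showing the products $\W^{l-1:1}_0$ and $\W^{L:l+1}_0$ concentrate so that $\H_0^{lin}$ lies in $[\lambda_{min},\lambda_{max}]$ with probability $\ge 1-\delta$.

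With (a) in hand, the induction runs as follows. The base case $t=0$ holds since $24\sqrt{\kappa}\ge1$. Assume both hypotheses hold for all $s\le t$. The drift bound $\|\W^{l}_{s}-\W^{l}_{0}\|_F\le R^{lin}$ together with the residual decay lets me apply Lemma~\ref{deepl: lemma2} to every $s\le t$, giving $\|\varphib_s\|\le\tfrac{1}{60\sqrt\kappa}\theta^{2s}\|\U_0-\Y\|_F+\tfrac{5}{23\sqrt\kappa}\theta^{s}\|\U_0-\Y\|_F$; simultaneously Lemma~\ref{deepl: lemma_distance} extends the drift bound to step $t+1$, preserving the second hypothesis. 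Applying (a) term-by-term to the driven sum and summing the geometric series — using $\rho<\theta$ so that $\sum_{s=0}^{t}\rho^{t-s}\theta^{s}\le\frac{\theta^{t+1}}{\theta-\rho}$ and $\theta^{2s}\le\theta^{s}$, together with $\|\U_0-\Y\|_F=\tfrac{1}{\sqrt2}\|[\xib_0;\xib_{-1}]\|$ (the initialization $\bm{v}_0=\w_0=\w_{-1}$ forces $\xib_{-1}=\xib_0$) — I would obtain
\[
\left\|\sum_{s=0}^{t}\G^{t-s}\begin{bmatrix}\varphib_{s}\\ \0\end{bmatrix}\right\|
\le c\,\theta^{t+1}\left\|\begin{bmatrix}\xib_{0}\\ \xib_{-1}\end{bmatrix}\right\|.
\]
Combining this with $\|\G^{t+1}[\xib_0;\xib_{-1}]\|\le c\,\rho^{t+1}\|[\xib_0;\xib_{-1}]\|\le c\,\theta^{t+1}\|[\xib_0;\xib_{-1}]\|$ via the triangle inequality gives $\|[\xib_{t+1};\xib_{t}]\|\le2c\,\theta^{t+1}\|[\xib_0;\xib_{-1}]\|=24\sqrt\kappa\,\theta^{t+1}\|[\xib_0;\xib_{-1}]\|$, closing the induction. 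The subtle point throughout is quantitative: the non-normality constant $c$ of $\G$ must equal $\tfrac12\cdot24\sqrt\kappa$, and the geometric-series gain $\frac{1}{\theta-\rho}=\O(\sqrt\kappa)$ has to be absorbed by the $\O(1/\sqrt\kappa)$ prefactors in the $\varphib_s$ bounds so that no constant accumulates across iterations — making the calibration of all the numerical constants the most error-prone part of the argument.
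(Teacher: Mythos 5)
Your proposal is correct and follows essentially the same route as the paper's proof: unrolling the residual dynamics of Lemma~\ref{deepl: lemma1}, invoking Lemmas~\ref{supportlemma1} and~\ref{supportlemma2} for the bound $\|\G^{i}\x\|\leq 12\sqrt{\kappa}\,\rho^{i}\|\x\|$ with $\rho=1-\tfrac{2}{3\sqrt{\kappa}}<\theta$, using Lemmas~\ref{deepl: lemma2} and~\ref{deepl: lemma_distance} under the same two inductive hypotheses to control $\varphib_s$ and the parameter drift, and closing the induction via the geometric-series estimate $\sum_{s}\rho^{t-1-s}\theta^{s}\leq 6\sqrt{\kappa}\,\theta^{t}$ and the triangle inequality to reach $24\sqrt{\kappa}\,\theta^{t}$. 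The only addition is your sketch of how the $2\times2$ block diagonalization underlies Lemma~\ref{supportlemma1}, which the paper simply cites.
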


\renewcommand\arraystretch{1.3}
\begin{table*}
\caption{Summary of the convergence results for the deep fully-connected linear neural network under the random Gaussian initialization. 
Let $m$, $L$ and $d_y$ denote the width, depth and output dimensions of neural network. 
Let $\X$ denotes the training feature matrix. 
Let $\delta$ denotes the failure probability. 
Let $t$ denotes the iteration number.
Define $r = rank(\X)$, $\lambda_{min} = 0.8^4 L \sigma_{min}^2(\X)/d_y$ , $\lambda_{max} = 1.2^4 L \sigma_{max}^2(\X)/d_y$ and $\kappa = \lambda_{max}/\lambda_{min}$. }
\label{table1}
\centering
\begin{tabular}{ | M{1.3cm}| M{4.5cm}| M{3.6cm}| M{2.5cm}|} 
\hline
{\bf Method} & Width $m$ & \small{Hyperparameters} & \small{Convergence rate} \\ \hline
GD~\cite{DH19} & \small{$\Omega(L\max\{r \kappa^3 d_y(1+\|\W^*\|^2)$, $r\kappa^3\log{r}/{\delta}, \log L\})$} &  $\eta =\O( \frac{d_y}{L\|\X^{\top}\X\|}) $ & $( 1 - \O(\frac{1}{{\kappa}}))^t$ \\ [1.4ex] \hline
{NAG} & \small{$\Omega(L\max\{r \kappa^5 d_y(1+\|\W^*\|^2)$, $r\kappa^5\log {r}/{\delta}, \log L\})$} & $\eta = \frac{1}{2 \lambda_{max}},\beta = \frac{3\sqrt{\kappa} - 2}{3\sqrt{\kappa} + 2} $ & $( 1 - \frac{1}{2 \sqrt{\kappa}})^t$ \\ [1.4ex] \hline
\end{tabular}
\end{table*}
\noindent\textbf{Remarks}.
The above theorem shows that NAG can reach the global minimum at a $( 1 - \frac{1}{2 \sqrt{\kappa}})^t$ rate.
Compared to the $( 1 - \O(\frac{1}{{\kappa}}))^t$ rate of GD~\cite{DH19}, our results demonstrate that NAG converges faster than GD.
The details of the over-parameterization and hyperparameters selection can be found in Table~\ref{table1}.
Note that the width $m$ linearly depends on the depth $L$, which means deeper fully-connected linear neural network needs more wider layers to obtain convergence for NAG.

\subsection{Deep linear ResNet}
In this subsection, we provide the convergence result of NAG in training an $L$-layer deep linear ResNet under the zero initialization.
For brevity, we define $\tilde{\W}^{l} = \I + \W^{l}$ and $\tilde{\W}^{j:i} = \Pi_{l=i}^j(\I+\W^{l})$.

\begin{lemma}
\label{deepres: lemm1}
Denote
\[
\textstyle \H_t^{res} \textstyle =  \sum_{l=1}^L \left[ \left( (\tilde{\W}^{l-1:1}_t \A\X)^\top (\tilde{\W}^{l-1:1}_t \A\X)
 \right)  \otimes \left( \B\tilde{\W}^{L:l+1}_t (\B\tilde{\W}^{L:l+1}_t)^\top \right)    \right ]   \in \reals^{d_y n \times d_y n}.
\]
Applying NAG for training an $L$-layer linear ResNet, the residual dynamics satisfies
\begin{equation}
\begin{bmatrix}
\xib_{t+1} \\
\xib_{t} 
\end{bmatrix}
 = 
\begin{bmatrix}
(1+\beta)(\I_{d_y n} - \eta \H_0^{res}) & \beta ( -\I_{d_y n} + \eta \H_{0}^{res}  )   \\
\I_{d_y n} & \0_{d_y n} 
\end{bmatrix}
\begin{bmatrix}
\xib_{t} \\
\xib_{t-1} 
\end{bmatrix}
+
\begin{bmatrix}
\varphib_t \\ \0_{d_y n}
\end{bmatrix}
\end{equation}
where
\[
\begin{aligned}
\xib_t &= \text{vec}(\U_t -\Y) \in \reals^{d_y n}, 
\varphib_t = \phib_t + \psib_t + \iotab_t \\
 \phib_t &  = \v( \B\Phi_t \A\X) \text{ with }
\Phi_t    = \Pi_l ( \tilde{\W}^{l}_t + \M_t^l )
- \tilde{\W}^{L:1}_t  - \sum_{l=1}^L \tilde{\W}^{L:l+1}_t \M_t^l \tilde{\W}^{l-1:1}_t, 
\\   \psib_t & = \v\left(  \B ( (L-1) \beta \tilde{\W}^{L:1}_{t} + \beta  \tilde{\W}^{L:1}_{t-1}
- \beta \sum_{l=1}^L \tilde{\W}^{L:l+1}_t \tilde{\W}^{l}_{t-1} \tilde{\W}^{l-1:1}_{t} ) \A\X \right) \\
&+ \v\left(\eta\beta \B (\sum_{l=1}^L \tilde{\W}^{L:l+1}_{t} \frac{ \partial \ell(\W^{L:1}_{t-1}) }{ \partial \W^{l}_{t-1} } \tilde{\W}^{l-1:1}_{t} - \sum_{l=1}^L \tilde{\W}^{L:l+1}_{t-1} \frac{ \partial \ell(\W^{L:1}_{t-1})}{ \partial \W^{l}_{t-1} } \tilde{\W}^{l-1:1}_{t-1}) \A\X\right),
\\ \iotab_t  & = -\eta(1+\beta) (\H_t^{res} - \H_0^{res}) \xib_t + \eta\beta(\H_{t-1}^{res} - \H_0^{res})\xib_{t-1}.
\end{aligned}
\]
\end{lemma}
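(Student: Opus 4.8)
The plan is to mirror the derivation of Lemma~\ref{deepl: lemma1}, replacing every elementary factor $\W^l$ by the residual block $\tilde{\W}^l = \I + \W^l$ and dropping the scaling factor $1/\sqrt{m^{L-1}d_y}$. I would start from the one-step increment of the residual: since $\U_t = \B\tilde{\W}^{L:1}_t\A\X$, the update $\W^l_{t+1} = \W^l_t + \M_t^l$ of (\ref{procudure: NAG_2}) gives $\tilde{\W}^l_{t+1} = \tilde{\W}^l_t + \M_t^l$, so that
\[
\xib_{t+1} - \xib_t = \v\!\left( \B\big( \tilde{\W}^{L:1}_{t+1} - \tilde{\W}^{L:1}_t \big)\A\X \right) = \v\!\left( \B\Big( \textstyle\sum_{l=1}^L \tilde{\W}^{L:l+1}_t \M_t^l \tilde{\W}^{l-1:1}_t + \Phi_t \Big)\A\X \right),
\]
where $\Phi_t$, defined exactly as in the statement, collects the terms of order at least two in $\{\M_t^l\}$ produced by expanding $\prod_l(\tilde{\W}^l_t + \M_t^l)$. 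The higher-order remainder is $\v(\B\Phi_t\A\X) = \phib_t$, so the task reduces to analysing the first-order sandwiched term.

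For that I would introduce the linear map $\mathcal{S}_t[\{\M^l\}] := \v\big(\B\sum_{l=1}^L \tilde{\W}^{L:l+1}_t\M^l\tilde{\W}^{l-1:1}_t\A\X\big)$ and substitute the three pieces of the NAG momentum $\M_t^l = \beta\M_{t-1}^l - \eta\beta(\nabla_t^l - \nabla_{t-1}^l) - \eta\nabla_t^l$, where $\nabla_s^l := \frac{\partial\ell(\W^{L:1}_s)}{\partial\W^l_s} = (\B\tilde{\W}^{L:l+1}_s)^\top(\U_s-\Y)(\tilde{\W}^{l-1:1}_s\A\X)^\top$. The decisive computation is the gradient piece: inserting $\nabla_t^l$ into $\mathcal{S}_t$ produces $\B\tilde{\W}^{L:l+1}_t(\B\tilde{\W}^{L:l+1}_t)^\top(\U_t-\Y)(\tilde{\W}^{l-1:1}_t\A\X)^\top\tilde{\W}^{l-1:1}_t\A\X$, and applying $\v(\bm{P}\bm{C}\bm{Q}) = (\bm{Q}^\top\otimes\bm{P})\v(\bm{C})$ together with the symmetry of both outer factors gives $\mathcal{S}_t[\{\nabla_t^l\}] = \H_t^{res}\xib_t$, and likewise $\mathcal{S}_{t-1}[\{\nabla_{t-1}^l\}] = \H_{t-1}^{res}\xib_{t-1}$. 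This is exactly what produces the Gram matrix $\H^{res}$ in the dynamics.

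I would then reassemble the contribution of each momentum piece. The heavy-ball piece uses $\M_{t-1}^l = \W^l_t - \W^l_{t-1} = \tilde{\W}^l_t - \tilde{\W}^l_{t-1}$, so that $\beta\mathcal{S}_t[\{\M_{t-1}^l\}] = \beta\mathcal{S}_t[\{\tilde{\W}^l_t\}] - \beta\mathcal{S}_t[\{\tilde{\W}^l_{t-1}\}]$; since $\tilde{\W}^{L:l+1}_t\tilde{\W}^l_t\tilde{\W}^{l-1:1}_t = \tilde{\W}^{L:1}_t$, the first summand collapses to $\beta L\,\v(\U_t)$ while the second is the mixed term carried by $\psib_t$. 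For the gradient-difference piece I split $\eta\beta\mathcal{S}_t[\{\nabla_{t-1}^l\}] = \eta\beta\H_{t-1}^{res}\xib_{t-1} + \eta\beta(\mathcal{S}_t - \mathcal{S}_{t-1})[\{\nabla_{t-1}^l\}]$, whose second summand is exactly the $\eta\beta$-term of $\psib_t$. Re-centring the Gram matrices then turns $-\eta(1+\beta)\H_t^{res}\xib_t + \eta\beta\H_{t-1}^{res}\xib_{t-1}$ into the coefficient-matrix part $-\eta(1+\beta)\H_0^{res}\xib_t + \eta\beta\H_0^{res}\xib_{t-1}$ plus precisely $\iotab_t$, and substituting $\v(\U_s) = \xib_s + \v(\Y)$ everywhere collapses the remaining identity-coefficient and label contributions into $(1+\beta)\xib_t - \beta\xib_{t-1}$ plus $\psib_t$; here the $\beta L\,\v(\Y)$ from the telescoped term matches the $(L-1)\beta\,\v(\Y) + \beta\,\v(\Y)$ hidden in the $\v(\U_t),\v(\U_{t-1})$ of $\psib_t$, and the coefficient $-\beta$ on $\xib_{t-1}$ appears only after this cancellation.

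The main obstacle is the bookkeeping rather than any single estimate: one must track four families of terms---the $\pm\I$ identity coefficients, the Gram-matrix terms, the telescoped outputs $\v(\U_t),\v(\U_{t-1})$, and the mixed sandwiched products $\tilde{\W}^{L:l+1}_t\tilde{\W}^l_{t-1}\tilde{\W}^{l-1:1}_t$---and confirm that after the substitution $\v(\U_s) = \xib_s + \v(\Y)$ the label pieces combine consistently so that the leftover is exactly $\phib_t + \psib_t + \iotab_t$. The only ResNet-specific subtleties relative to Lemma~\ref{deepl: lemma1} are the absence of the scaling factor $1/\sqrt{m^{L-1}d_y}$ and the use of $\tilde{\W}^l = \I + \W^l$, which must be applied consistently in both the gradient formula and the collapsing identity $\tilde{\W}^{L:l+1}_t\tilde{\W}^l_t\tilde{\W}^{l-1:1}_t = \tilde{\W}^{L:1}_t$; the second block-row of the dynamics is the trivial identity $\xib_t = \xib_t$, and every other step transcribes directly from the fully-connected argument.
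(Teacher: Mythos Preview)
Your proposal is correct and follows essentially the same approach as the paper: expand the product $\prod_l(\tilde{\W}^l_t+\M_t^l)$ to isolate the first-order sandwich and the higher-order remainder $\Phi_t$, substitute the NAG momentum, use the Kronecker identity to produce $\H_t^{res}\xib_t$ and $\H_{t-1}^{res}\xib_{t-1}$, add-and-subtract to extract $\beta(\tilde{\W}^{L:1}_t-\tilde{\W}^{L:1}_{t-1})$, and re-centre the Gram matrices to $\H_0^{res}$. The only cosmetic difference is that the paper handles the label terms more simply by observing $\v(\U_{s+1})-\v(\U_s)=\xib_{s+1}-\xib_s$ directly, so your discussion of matching $\beta L\,\v(\Y)$ against $(L-1)\beta\,\v(\Y)+\beta\,\v(\Y)$ is unnecessary bookkeeping, but it leads to the same identity.
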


\begin{lemma} \label{deepres: lemm2}
Following the settings in Theorem~\ref{thm:resnet}, for any $s \leq t$, assume (a) the residual dynamics satisfies
$ \left\|
\begin{bmatrix}
\xib_{s} \\
\xib_{s-1} 
\end{bmatrix}
\right\| \leq 24\sqrt{\kappa} \theta^{s}  
\left\|
 \begin{bmatrix}
\xib_{0} \\
\xib_{-1} 
\end{bmatrix}
\right\|,$
and (b) for all $l \in [L]$ and for any $s \leq t$,
$\| \W^{l}_s \|_F \leq R^{res} = 
1/(2000L\kappa)$,
then 
\begin{eqnarray}
\| \phib_t \| 
\leq 
\frac{1}{ 180\sqrt{\kappa}}   
  \theta^{2t}    \| \U_0 -\Y \|_F   , \! 
 \| \psib_t \| 
\leq 
\frac{2}{ 45\sqrt{\kappa}}   
  \theta^{2t}    \| \U_0 -\Y \|_F  , 
 \| \iotab_t \|  \leq \frac{3}{ 17\sqrt{\kappa}}   
  \theta^{t}    \| \U_0 -\Y \|_F .\nonumber
\end{eqnarray}
Consequently,  $\varphib_t$ in Lemma~\ref{deepres: lemm1} satisfies
\[
\| \varphib_t \| \leq
\frac{1}{ 30\sqrt{\kappa}}   
  \theta^{2t}    \| \U_0 -\Y \|_F  +\frac{3}{ 17\sqrt{\kappa}}   
  \theta^{t}    \| \U_0 -\Y \|_F  .
\]
\end{lemma}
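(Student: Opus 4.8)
The plan is to bound $\phib_t$, $\psib_t$ and $\iotab_t$ separately, each time converting residual factors into $\theta^{s}\|\U_0-\Y\|_F$ via hypothesis (a) (using $\xib_{-1}=\xib_0=\v(\U_0-\Y)$, so $\left\|[\xib_0;\xib_{-1}]\right\|=\sqrt{2}\,\|\U_0-\Y\|_F$) and converting layer-product factors into $O(1)$ constants via hypothesis (b). First I would assemble the auxiliary estimates that all three bounds share. Since $\tilde{\W}^{l}_0=\I$, hypothesis (b) gives $\|\tilde{\W}^{j:i}_s\|\le(1+R^{res})^L\le e^{1/(2000\kappa)}$, so every partial product, and hence $\|\B\tilde{\W}^{L:l+1}_s\|$ and $\|\tilde{\W}^{l-1:1}_s\A\X\|$, stays within a constant factor of its value at $t=0$. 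The gradient has the closed form $\partial\ell/\partial\W^{l}_s=(\B\tilde{\W}^{L:l+1}_s)^\top\,\mathrm{mat}(\xib_s)\,(\tilde{\W}^{l-1:1}_s\A\X)^\top$, so by (a), $\|\partial\ell/\partial\W^{l}_s\|_F$ is at most a constant times $\theta^{s}\|\U_0-\Y\|_F$. Unrolling the momentum form (\ref{procudure: NAG_2}) writes $\M^{l}_s$ as a $\beta$-weighted sum of these gradients; the geometric sum $\sum_j \beta^{j}\theta^{s-j}$ is controlled because $\beta=\frac{3\sqrt{\kappa}-2}{3\sqrt{\kappa}+2}<\theta=1-\frac{1}{2\sqrt{\kappa}}$, giving $\|\M^{l}_s\|_F\le C_M\,\theta^{s}\|\U_0-\Y\|_F$ with $C_M=O(\sqrt{\kappa}\,\eta\|\B\|\|\A\X\|)$; the same bound controls the product increments $\|\tilde{\W}^{j:i}_s-\tilde{\W}^{j:i}_{s-1}\|$.

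For $\phib_t$, the key observation is that expanding the ordered product $\Pi_l(\tilde{\W}^{l}_t+\M^{l}_t)$ over subsets $S\subseteq[L]$ indexing the positions that carry $\M$, the subtracted terms $\tilde{\W}^{L:1}_t$ and $\sum_l\tilde{\W}^{L:l+1}_t\M^{l}_t\tilde{\W}^{l-1:1}_t$ are exactly the $|S|=0$ and $|S|=1$ contributions, so $\Phi_t$ collects only terms with $|S|\ge 2$. Bounding these by $\sum_{k\ge2}\binom{L}{k}(\max_l\|\tilde{\W}^{l}_t\|)^{L-k}(\max_l\|\M^{l}_t\|)^{k}$, the quadratic-in-$\M$ term dominates and yields $\|\Phi_t\|\le O(L^2 C_M^2)\,\theta^{2t}\|\U_0-\Y\|_F^2$, after which $\|\phib_t\|\le\|\B\|\,\|\Phi_t\|\,\|\A\X\|_F$. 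The two momentum factors supply the $\theta^{2t}$, while the over-parameterization of Theorem~\ref{thm:resnet} (through $\eta=1/(2\lambda_{max})$ and the definition of $B_0$) forces the coefficient $\|\B\|\|\A\X\|L^2C_M^2\|\U_0-\Y\|_F$ down to $\tfrac{1}{180\sqrt{\kappa}}$.

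For $\psib_t$ I would first rewrite the non-gradient bracket. Using the telescoping identity $\tilde{\W}^{L:1}_t-\tilde{\W}^{L:1}_{t-1}=\sum_l\tilde{\W}^{L:l+1}_t(\W^{l}_t-\W^{l}_{t-1})\tilde{\W}^{l-1:1}_{t-1}$ together with $\sum_l\tilde{\W}^{L:l+1}_t\tilde{\W}^{l}_{t-1}\tilde{\W}^{l-1:1}_t=L\tilde{\W}^{L:1}_t-\sum_l\tilde{\W}^{L:l+1}_t(\W^{l}_t-\W^{l}_{t-1})\tilde{\W}^{l-1:1}_t$, the whole bracket collapses to $\beta\sum_l\tilde{\W}^{L:l+1}_t(\W^{l}_t-\W^{l}_{t-1})\big(\tilde{\W}^{l-1:1}_t-\tilde{\W}^{l-1:1}_{t-1}\big)$, a product of two increments (note $\W^{l}_t-\W^{l}_{t-1}=\M^{l}_{t-1}$). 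The gradient-difference bracket is handled the same way: $\tilde{\W}^{L:l+1}_t(\cdot)\tilde{\W}^{l-1:1}_t-\tilde{\W}^{L:l+1}_{t-1}(\cdot)\tilde{\W}^{l-1:1}_{t-1}$ is first order in the product increments while the sandwiched gradient $\partial\ell/\partial\W^{l}_{t-1}$ contributes its own residual factor. Both pieces are thus second order (one increment times one increment-or-gradient), each factor decaying like $\theta^{t}$, so $\|\psib_t\|\le O(1)\,\theta^{2t}\|\U_0-\Y\|_F$ and the constants collapse to $\tfrac{2}{45\sqrt{\kappa}}$.

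For $\iotab_t$, the triangle inequality gives $\|\iotab_t\|\le\eta(1+\beta)\|\H_t^{res}-\H_0^{res}\|\,\|\xib_t\|+\eta\beta\|\H_{t-1}^{res}-\H_0^{res}\|\,\|\xib_{t-1}\|$. Since $\tilde{\W}^{j:i}_0=\I$, the Gram deviation is first order in how far the partial products sit from the identity, which by (b) is $O(LR^{res})$; tracking the Kronecker structure gives $\|\H_s^{res}-\H_0^{res}\|\le O(LR^{res}\,\lambda_{max})$. Combining with $\eta=1/(2\lambda_{max})$, $R^{res}=1/(2000L\kappa)$, $(1+\beta)\le 2$, and $\|\xib_s\|\le 24\sqrt{\kappa}\,\theta^{s}\left\|[\xib_0;\xib_{-1}]\right\|$ from (a) yields $\|\iotab_t\|\le\tfrac{3}{17\sqrt{\kappa}}\theta^{t}\|\U_0-\Y\|_F$; the consequent bound on $\varphib_t$ follows from $\varphib_t=\phib_t+\psib_t+\iotab_t$ and the triangle inequality. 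The main obstacle is the momentum/increment estimate together with the $\psib_t$ telescoping: one must unroll the NAG recursion and control the $\beta$-weighted geometric sum using $\beta<\theta$, carry out the product-difference expansions exactly so that $\phib_t$ and $\psib_t$ are genuinely second order, and then verify that the over-parameterization and the precise constants $R^{res}$, $B_0$, $\eta$, $\beta$ and the $24\sqrt{\kappa}$ prefactor conspire to absorb the extra residual factor and land on the stated numerical coefficients.
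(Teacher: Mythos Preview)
Your plan is correct and largely parallels the paper's proof: both bound $\phib_t$, $\psib_t$, $\iotab_t$ separately after first establishing the gradient bound, the momentum bound (via unrolling and a $\beta$--$\theta$ geometric sum), and the partial-product bounds $\|\tilde{\W}^{j:i}_s\|\le(1+R^{res})^{L}\le 1.001$. The $\phib_t$ argument (only $|S|\ge2$ subsets survive, then a binomial-type summation) and the $\iotab_t$ argument (Gram deviation of order $LR^{res}\lambda_{\max}$, then $\eta=1/(2\lambda_{\max})$ and $R^{res}=1/(2000L\kappa)$ kill it) are essentially identical to the paper.

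The one genuinely different step is your treatment of the non-gradient bracket in $\psib_t$. The paper substitutes $\tilde{\W}^{l}_t=\tilde{\W}^{l}_{t-1}+\M^{l}_{t-1}$ throughout, expands all three terms as polynomials in the $\M^{l}_{t-1}$, and checks by hand that the zeroth and first order contributions cancel ($\E_0=\E_1=0$), so the remainder is bounded exactly like $\Phi_t$ via the same $\sum_{j\ge2}\binom{L}{j}(\cdot)^j$ machinery. Your telescoping identity
\[
(L-1)\tilde{\W}^{L:1}_t+\tilde{\W}^{L:1}_{t-1}-\sum_{l}\tilde{\W}^{L:l+1}_t\tilde{\W}^{l}_{t-1}\tilde{\W}^{l-1:1}_t
=\sum_l\tilde{\W}^{L:l+1}_t(\W^l_t-\W^l_{t-1})\bigl(\tilde{\W}^{l-1:1}_t-\tilde{\W}^{l-1:1}_{t-1}\bigr)
\]
is correct and yields the second-order-in-increments structure in one line, without the $\E_j$ bookkeeping; the paper's approach has the advantage of reusing the $\phib_t$ estimates verbatim. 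One minor slip: your stated $C_M=O(\sqrt{\kappa}\,\eta\|\B\|\|\A\X\|)$ misses a $\sqrt{\kappa}$---the gradient bound already carries a $24\sqrt{\kappa}$ and the geometric sum contributes another $O(\sqrt{\kappa})$ through $1/(1-\theta)=2\sqrt{\kappa}$ (the paper in fact uses the sharper $\beta\le\theta^2$), so $C_M=O(\kappa\,\eta\|\B\|\|\A\X\|)$; this does not affect the argument since the over-parameterization on $m$ absorbs the extra $\kappa$.
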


\begin{theorem} \label{thm:resnet}
Denote $\lambda_{min} = (0.9)^4 L \alpha^2\gamma^2 m^2 \sigma^2_{min}(\X) $, $\lambda_{max} = (1.1)^4 L \alpha^2\gamma^2 m^2 \sigma^2_{\max}(\X)$, $\kappa =  \lambda_{max}/\lambda_{min}$ and $\theta = 1 - \frac{1}{2\sqrt{\kappa}}$.
By setting $\eta = \frac{1}{2L\|\A\|^2\|\B\|^2\|\X\|^2}$, $\beta = \frac{3\sqrt{\kappa} -2}{3\sqrt{\kappa} + 2}$ and $m=\Omega(\max\{d_y r \kappa^5 \log(n/\delta), \sqrt{r}\kappa^{2.5}a\|\W^*\|/\alpha\gamma, d_x + d_y + \log(1/\delta)\})$, with probability at least $1-\delta$ over the random initialization,
the residual error of NAG in training an $L$-layer linear ResNet has the following bound for any $t\geq 0$,
\begin{equation}
\left\|
\begin{bmatrix}
\xib_t \\
\xib_{t-1} 
\end{bmatrix}
\right\| \leq 24 \sqrt{\kappa}\left( 1 - \frac{1}{2 \sqrt{\kappa} } \right)^{t}
\left\|
 \begin{bmatrix}
\xib_0 \\
\xib_{-1}
\end{bmatrix}
\right\|.
\end{equation} 
\end{theorem}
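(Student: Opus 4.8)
The plan is to follow the three-step induction scheme sketched at the start of Section~4, instantiated with the ResNet quantities of Lemma~\ref{deepres: lemm1} and Lemma~\ref{deepres: lemm2}. First I would pin down the spectral data of the constant coefficient matrix. Because the hidden layers are initialized at zero, $\tilde{\W}^{l-1:1}_0 = \tilde{\W}^{L:l+1}_0 = \I$, so $\H_0^{res} = L\,[(\A\X)^\top(\A\X)\otimes \B\B^\top]$, whose eigenvalues are $L$ times the products of the squared singular values of $\A\X$ and of $\B$. Invoking the concentration of the Gaussian input/output layers $\A,\B$ under the over-parameterization $m=\Omega(\cdots)$, I would show that with probability at least $1-\delta$ every eigenvalue of $\H_0^{res}$ lies in $[\lambda_{min},\lambda_{max}]$ for the stated $\lambda_{min},\lambda_{max}$ (the factors $(0.9)^4$ and $(1.1)^4$ are exactly the two-sided singular-value bounds), so $\H_0^{res}\succ 0$ with effective condition number $\kappa=\lambda_{max}/\lambda_{min}$.

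Next, using Lemma~\ref{supportlemma1}, I would diagonalize each $2\times2$ block of $\G$ attached to an eigenvalue $\lambda\in[\lambda_{min},\lambda_{max}]$ of $\H_0^{res}$. For $\eta=\frac{1}{2L\|\A\|^2\|\B\|^2\|\X\|^2}$ one has $\eta\lambda\in(0,\tfrac12]$, and with $\beta=\frac{3\sqrt{\kappa}-2}{3\sqrt{\kappa}+2}\in[0,1)$ the characteristic roots are complex conjugates of modulus $\rho=\sqrt{\beta(1-\eta\lambda)}$, which I would verify satisfies $\rho<\theta=1-\frac{1}{2\sqrt{\kappa}}$, while the similarity transform is well conditioned; this yields the uniform power bound $\|\G^i\x\|\le c\,\rho^i\|\x\|$ with $c=12\sqrt{\kappa}$. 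This is the step that supplies the acceleration factor $\sqrt{\kappa}$ rather than $\kappa$.

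With these ingredients I would run the induction on $t$ over the two hypotheses of Lemma~\ref{deepres: lemm2}: the residual bound (a) and the weight bound (b) $\|\W^l_s\|_F\le R^{res}=1/(2000L\kappa)$. The base case $t=0$ holds since $24\sqrt{\kappa}\ge 1$ and $\W^l_0=\0$. For the step, assuming both hypotheses for all $s\le t$, Lemma~\ref{deepres: lemm2} bounds $\|\varphib_s\|$, while a distance-tracking argument analogous to Lemma~\ref{deepl: lemma_distance} extends (b) to $s=t+1$. Unrolling the dynamics of Lemma~\ref{deepres: lemm1} and using $\|\U_0-\Y\|_F=\|\xib_0\|\le\|[\xib_0;\xib_{-1}]\|$ gives
\[
\left\|\begin{bmatrix}\xib_{t+1}\\\xib_t\end{bmatrix}\right\|
\le \left\|\G^{t+1}\begin{bmatrix}\xib_0\\\xib_{-1}\end{bmatrix}\right\|
+\sum_{s=0}^{t}\left\|\G^{t-s}\begin{bmatrix}\varphib_s\\\0\end{bmatrix}\right\|
\le c\,\theta^{t+1}\left\|\begin{bmatrix}\xib_0\\\xib_{-1}\end{bmatrix}\right\|
+c\sum_{s=0}^{t}\rho^{t-s}\|\varphib_s\|.
\]
Substituting the two-term bound on $\|\varphib_s\|$ and summing the mixed geometric series $\sum_{s}\rho^{t-s}\theta^s\le \theta^{t+1}/(\theta-\rho)$, the perturbation contribution is at most $c\,\theta^{t+1}\|[\xib_0;\xib_{-1}]\|$, so the total is $\le 2c\,\theta^{t+1}\|[\xib_0;\xib_{-1}]\|=24\sqrt{\kappa}\,\theta^{t+1}\|[\xib_0;\xib_{-1}]\|$, closing the induction and hence establishing the claim for all $t\ge0$.

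I expect the main obstacle to be closing the series cleanly. Since $\theta-\rho=\Theta(1/\sqrt{\kappa})$, each convolution with $\rho^{t-s}$ inflates the bound by a factor $\O(\sqrt{\kappa})$ through $c/(\theta-\rho)=\O(\kappa)$, so it is precisely the $\sqrt{\kappa}$ savings in the $\|\varphib_s\|$ estimates of Lemma~\ref{deepres: lemm2} — purchased by the $\kappa^5$ dependence in $m=\Omega(\cdots)$ — that keep the accumulated constants (the $\frac{1}{30\sqrt{\kappa}}$ and $\frac{3}{17\sqrt{\kappa}}$ prefactors) below the threshold $c\,\theta^{t+1}$. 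Verifying these numeric inequalities, while simultaneously maintaining the weight bound $R^{res}$ across all $t$ iterations of the momentum recursion, is where the bookkeeping concentrates.
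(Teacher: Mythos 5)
Your proposal is correct and follows essentially the same route as the paper: establish the spectral bounds on $\H_0^{res}$ via Lemma~\ref{lem:DLresnet_init}, obtain the power bound $\|\G^i\x\|\le 12\sqrt{\kappa}\,\rho^i\|\x\|$ with $\rho=1-\frac{2}{3\sqrt{\kappa}}$ from Lemmas~\ref{supportlemma1} and~\ref{supportlemma2}, then close the induction by unrolling the residual dynamics of Lemma~\ref{deepres: lemm1}, controlling the perturbation via Lemma~\ref{deepres: lemm2} and the weight drift via Lemma~\ref{deepres: lemma_distance}, and summing the mixed geometric series $\sum_s\rho^{t-1-s}\theta^s\le 6\sqrt{\kappa}\,\theta^t$ to land at the $24\sqrt{\kappa}\,\theta^t$ bound. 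Your closing observation that the $\O(\sqrt{\kappa})$ inflation from the convolution is exactly absorbed by the $1/\sqrt{\kappa}$ prefactors in the $\|\varphib_s\|$ estimates is precisely the accounting the paper performs in its step (c) of the final display.
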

\begin{table*}
\caption{Summary of the convergence results for the deep linear ResNet under the zero initialization. 
Let $m$ and $L$ denote the width and the depth of the neural network. 
Let $d_x$ and $d_y$ denote the input and output dimensions of the neural network.
Let $\X$ denotes the training feature matrix.
Let $n$ denotes the number of the training instances. 
Let $\delta$ denotes the failure probability. 
Let $t$ denotes the iteration number.
Define $a = \|\A\|\|\B\|\|\X\|$, $ r = rank(\X)$, $\lambda_{min} = (0.9)^4 L \alpha^2\gamma^2 m^2 \sigma^2_{min}(\X) $, $\lambda_{max} = (1.1)^4 L \alpha^2\gamma^2 m^2 \sigma^2_{\max}(\X)$ and $\kappa = \lambda_{max}/\lambda_{min}$. }
\label{table2}
\centering
\begin{tabular}{ | M{1.3cm}| M{5.4cm}| M{3.3cm}| M{2.4cm}|} 
\hline
{\bf Method} & Width $m$ & \small{Hyperparameters} & \small{Convergence rate} \\ \hline
GD~\cite{DBLP:conf/iclr/ZouLG20} & \small{$\Omega(\max\{d_y r \kappa^2 \log(n/\delta)$, $\sqrt{r}\kappa\|\W^*\|/\alpha\gamma,d_x + d_y + \log(1/\delta)\})$} &  \small{$\eta =\O( \frac{1}{La(B_0 +a)}) $} & $( 1 - \O(\frac{1}{{\kappa}}))^t$  \\ [1.4ex] \hline
{NAG} & \small{$\Omega(\max\{d_y r \kappa^5 \log(n/\delta)$, $\sqrt{r}\kappa^{2.5}a\|\W^*\|/\alpha\gamma,d_x + d_y + \log(1/\delta)\})$} & \small{$\eta = \frac{1}{2L a^2}$
$\beta = \frac{3\sqrt{\kappa} - 2}{3\sqrt{\kappa} + 2} $} & $( 1 - \frac{1}{2 \sqrt{\kappa}})^t$ \\ [1.4ex] \hline
\end{tabular}
\end{table*}
\noindent\textbf{Remarks}. NAG is capable of attaining the global minimum for training an $L$-layer linear ResNet, where the convergence rate is faster than that of GD~\cite{DBLP:conf/iclr/ZouLG20}.
Moreover, note that the requirement of the width  for deep linear ResNets has no dependence on $L$.

\section{Conclusion and future work}
In this paper, we analyze the convergence of NAG in training deep linear networks, including deep fully-connected neural networks and deep linear ResNets.
We show that NAG is capable of converging to the global minimum for above two architectures of neural networks.
Moreover, the convergence results demonstrate NAG achieves acceleration over GD.

In future work, we will consider other types of neural networks, such as deep fully-connected neural networks with non-linear activation functions, deep convolution neural networks and so on.
In addition, we will extend our analysis to  other modern momentum methods, such as Adam~\cite{KB15}, AMSGrad~\cite{RKK18} and AdaBound~\cite{LXLS19}.
We hope our results may provide insights to understand the training behavior of momentum methods for neural networks.

\bibliographystyle{elsarticle-num}
\bibliography{NAG_deep}

\begin{thebibliography}{10}
\expandafter\ifx\csname url\endcsname\relax
  \def\url#1{\texttt{#1}}\fi
\expandafter\ifx\csname urlprefix\endcsname\relax\def\urlprefix{URL }\fi
\expandafter\ifx\csname href\endcsname\relax
  \def\href#1#2{#2} \def\path#1{#1}\fi

\bibitem{KrizhevskySH12}
A.~Krizhevsky, I.~Sutskever, G.~E. Hinton,
  \href{https://proceedings.neurips.cc/paper/2012/hash/c399862d3b9d6b76c8436e924a68c45b-Abstract.html}{Imagenet
  classification with deep convolutional neural networks}, in: Advances in
  Neural Information Processing Systems, 2012.
\newline\urlprefix\url{https://proceedings.neurips.cc/paper/2012/hash/c399862d3b9d6b76c8436e924a68c45b-Abstract.html}

\bibitem{OtterMK21}
D.~W. Otter, J.~R. Medina, J.~K. Kalita,
  \href{https://doi.org/10.1109/TNNLS.2020.2979670}{A survey of the usages of
  deep learning for natural language processing}, {IEEE} Trans. Neural Networks
  Learn. Syst. 32~(2) (2021) 604--624.
\newblock \href {http://dx.doi.org/10.1109/TNNLS.2020.2979670}
  {\path{doi:10.1109/TNNLS.2020.2979670}}.
\newline\urlprefix\url{https://doi.org/10.1109/TNNLS.2020.2979670}

\bibitem{SilverHMGSDSAPL16}
D.~Silver, A.~Huang, C.~J. Maddison, A.~Guez, L.~Sifre, G.~van~den Driessche,
  J.~Schrittwieser, I.~Antonoglou, V.~Panneershelvam, M.~Lanctot, S.~Dieleman,
  D.~Grewe, J.~Nham, N.~Kalchbrenner, I.~Sutskever, T.~P. Lillicrap, M.~Leach,
  K.~Kavukcuoglu, T.~Graepel, D.~Hassabis,
  \href{https://doi.org/10.1038/nature16961}{Mastering the game of go with deep
  neural networks and tree search}, Nature 529~(7587) (2016) 484--489.
\newblock \href {http://dx.doi.org/10.1038/nature16961}
  {\path{doi:10.1038/nature16961}}.
\newline\urlprefix\url{https://doi.org/10.1038/nature16961}

\bibitem{SimonyanZ14a}
K.~Simonyan, A.~Zisserman, \href{http://arxiv.org/abs/1409.1556}{Very deep
  convolutional networks for large-scale image recognition}, in: International
  Conference on Learning Representations, 2015.
\newline\urlprefix\url{http://arxiv.org/abs/1409.1556}

\bibitem{brown2020language}
T.~Brown, B.~Mann, N.~Ryder, M.~Subbiah, J.~D. Kaplan, P.~Dhariwal,
  A.~Neelakantan, P.~Shyam, G.~Sastry, A.~Askell, et~al., Language models are
  few-shot learners, in: Advances in Neural Information Processing Systems,
  2020.

\bibitem{cauchy1847methode}
A.~Cauchy, et~al., M{\'e}thode g{\'e}n{\'e}rale pour la r{\'e}solution des
  systemes d’{\'e}quations simultan{\'e}es, Comp. Rend. Sci. Paris 25~(1847)
  (1847) 536--538.

\bibitem{P64}
B.~Polyak, Some methods of speeding up the convergence of iteration methods,
  USSR Computational Mathematics and Mathematical Physics.

\bibitem{nesterov1983method}
Y.~E. Nesterov, A method for solving the convex programming problem with
  convergence rate o (1/k\^{} 2), in: Dokl. akad. nauk Sssr, Vol. 269, 1983,
  pp. 543--547.

\bibitem{SMDH13}
I.~Sutskever, J.~Martens, G.~Dahl, G.~Hinton, On the importance of
  initialization and momentum in deep learning, in: International Conference on
  Machine Learning, 2013.

\bibitem{Schmidt2021}
R.~M. Schmidt, F.~Schneider, P.~Hennig,
  \href{http://proceedings.mlr.press/v139/schmidt21a/schmidt21a.pdf}{Descending
  through a crowded valley - benchmarking deep learning optimizers}, in:
  International Conference on Machine Learning, 2021.
\newline\urlprefix\url{http://proceedings.mlr.press/v139/schmidt21a/schmidt21a.pdf}

\bibitem{DBLP:conf/nips/PaszkeGMLBCKLGA19}
A.~Paszke, S.~Gross, F.~Massa, A.~Lerer, J.~Bradbury, G.~Chanan, T.~Killeen,
  Z.~Lin, N.~Gimelshein, L.~Antiga, A.~Desmaison, A.~K{\"{o}}pf, E.~Z. Yang,
  Z.~DeVito, M.~Raison, A.~Tejani, S.~Chilamkurthy, B.~Steiner, L.~Fang,
  J.~Bai, S.~Chintala,
  \href{https://proceedings.neurips.cc/paper/2019/hash/bdbca288fee7f92f2bfa9f7012727740-Abstract.html}{Pytorch:
  An imperative style, high-performance deep learning library}, in: Advances in
  Neural Information Processing Systems, 2019.
\newline\urlprefix\url{https://proceedings.neurips.cc/paper/2019/hash/bdbca288fee7f92f2bfa9f7012727740-Abstract.html}

\bibitem{gulli2017deep}
A.~Gulli, S.~Pal, Deep learning with Keras, Packt Publishing Ltd, 2017.

\bibitem{DBLP:conf/osdi/AbadiBCCDDDGIIK16}
M.~Abadi, P.~Barham, J.~Chen, Z.~Chen, A.~Davis, J.~Dean, M.~Devin,
  S.~Ghemawat, G.~Irving, M.~Isard, M.~Kudlur, J.~Levenberg, R.~Monga,
  S.~Moore, D.~G. Murray, B.~Steiner, P.~A. Tucker, V.~Vasudevan, P.~Warden,
  M.~Wicke, Y.~Yu, X.~Zheng,
  \href{https://www.usenix.org/conference/osdi16/technical-sessions/presentation/abadi}{Tensorflow:
  {A} system for large-scale machine learning}, in: Symposium on Operating
  Systems Design and Implementation, 2016, pp. 265--283.
\newline\urlprefix\url{https://www.usenix.org/conference/osdi16/technical-sessions/presentation/abadi}

\bibitem{ZBHRV17}
C.~Zhang, S.~Bengio, M.~Hardt, B.~Recht, O.~Vinyals, Understanding deep
  learning requires rethinking generalization, in: International Conference on
  Machine Learning, 2017.

\bibitem{DBLP:journals/mp/MurtyK87}
K.~G. Murty, S.~N. Kabadi, \href{https://doi.org/10.1007/BF02592948}{Some
  np-complete problems in quadratic and nonlinear programming}, Mathematical
  Programming 39~(2) (1987) 117--129.
\newblock \href {http://dx.doi.org/10.1007/BF02592948}
  {\path{doi:10.1007/BF02592948}}.
\newline\urlprefix\url{https://doi.org/10.1007/BF02592948}

\bibitem{K16}
K.~Kawaguchi, Deep learning without poor local minima, in: Advances in Neural
  Information Processing Systems, 2016.

\bibitem{HM16}
M.~Hardt, T.~Ma, Identity matters in deep learning, in: International
  Conference on Learning Representations, 2016.

\bibitem{LK17}
H.~Lu, K.~Kawaguchi, Depth creates no bad local minima, arXiv:1702.08580.

\bibitem{ZL18}
Y.~Zhou, Y.~Liang, Critical points of linear neural networks: Analytical forms
  and landscape, 2018.

\bibitem{LvB18}
T.~Laurent, J.~von Brecht, Deep linear networks with arbitrary loss: All local
  minima are global, in: International Conference on Machine Learning, 2018.

\bibitem{ACGH19}
S.~Arora, N.~Cohen, N.~Golowich, W.~Hu, A convergence analysis of gradient
  descent for deep linear neural networks, in: International Conference on
  Learning Representations, 2019.

\bibitem{LY17}
Y.~Li, Y.~Yuan, Convergence analysis of two-layer neural networks with relu
  activation, in: Advances in Neural Information Processing Systems, 2017.

\bibitem{ADHLSW19_icml}
S.~Arora, S.~S. Du, W.~Hu, Z.~Li, R.~Wang, Fine-grained analysis of
  optimization and generalization for overparameterized two-layer neural
  networks, in: Advances in Neural Information Processing Systems, 2019.

\bibitem{DH19}
S.~S. Du, W.~Hu, Width provably matters in optimization for deep linear neural
  networks, in: International Conference on Machine Learning, 2019.

\bibitem{HXP20}
W.~Hu, L.~Xiao, J.~Pennington, Provable benefit of orthogonal initialization in
  optimizing deep linear networks, in: International Conference on Learning
  Representations, 2020.

\bibitem{bu2020dynamical}
Z.~Bu, S.~Xu, K.~Chen, \href{http://proceedings.mlr.press/v130/bu21a.html}{A
  dynamical view on optimization algorithms of overparameterized neural
  networks}, in: International Conference on Artificial Intelligence and
  Statistics, 2021.
\newline\urlprefix\url{http://proceedings.mlr.press/v130/bu21a.html}

\bibitem{DBLP:journals/corr/abs-2107-01832}
X.~Liu, Z.~Pan, \href{https://arxiv.org/abs/2107.01832}{Provable convergence of
  nesterov accelerated method for over-parameterized neural networks}, CoRR
  abs/2107.01832.
\newblock \href {http://arxiv.org/abs/2107.01832} {\path{arXiv:2107.01832}}.
\newline\urlprefix\url{https://arxiv.org/abs/2107.01832}

\bibitem{wang2020provable}
J.~Wang, C.~Lin, J.~D. Abernethy,
  \href{http://proceedings.mlr.press/v139/wang21n.html}{A modular analysis of
  provable acceleration via polyak's momentum: Training a wide relu network and
  a deep linear network}, in: International Conference on Machine Learning,
  2021, pp. 10816--10827.
\newline\urlprefix\url{http://proceedings.mlr.press/v139/wang21n.html}

\bibitem{DBLP:conf/iclr/ZouLG20}
D.~Zou, P.~M. Long, Q.~Gu, \href{https://openreview.net/forum?id=HJxEhREKDH}{On
  the global convergence of training deep linear resnets}, in: International
  Conference on Learning Representations, 2020.
\newline\urlprefix\url{https://openreview.net/forum?id=HJxEhREKDH}

\bibitem{LRP16}
L.~Lessard, B.~Recht, A.~Packard, Analysis and design of optimization
  algorithms via integral quadratic constraints, SIAM Journal on Optimization.

\bibitem{chen2018convergence}
X.~Chen, S.~Liu, R.~Sun, M.~Hong, On the convergence of a class of adam-type
  algorithms for non-convex optimization, in: International Conference on
  Learning Representations, 2018.

\bibitem{carmon2017convex}
Y.~Carmon, J.~C. Duchi, O.~Hinder, A.~Sidford, Convex until proven guilty:
  Dimension-free acceleration of gradient descent on non-convex functions, in:
  International Conference on Machine Learning, 2017.

\bibitem{DBLP:conf/iclr/0001B17}
T.~Laurent, J.~von Brecht, \href{https://openreview.net/forum?id=S1dIzvclg}{A
  recurrent neural network without chaos}, in: International Conference on
  Learning Representations, 2017.
\newline\urlprefix\url{https://openreview.net/forum?id=S1dIzvclg}

\bibitem{GlorotB10}
X.~Glorot, Y.~Bengio,
  \href{http://proceedings.mlr.press/v9/glorot10a.html}{Understanding the
  difficulty of training deep feedforward neural networks}, in: International
  Conference on Artificial Intelligence and Statistics, 2010.
\newline\urlprefix\url{http://proceedings.mlr.press/v9/glorot10a.html}

\bibitem{Rnet16}
K.~He, X.~Zhang, S.~Ren, J.~Sun, Deep residual learning for image recognition,
  in: Conference on Computer Vision and Pattern Recognition, 2016.

\bibitem{KB15}
D.~P. Kingma, J.~Ba, Adam: A method for stochastic optimization, in:
  International Conference on Learning Representations, 2015.

\bibitem{RKK18}
S.~J. Reddi, S.~Kale, S.~Kumar, On the convergence of adam and beyond, in:
  International Conference on Learning Representations, 2018.

\bibitem{LXLS19}
L.~Luo, Y.~Xiong, Y.~Liu, X.~Sun, Adaptive gradient methods with dynamic bound
  of learning rate, in: International Conference on Learning Representations,
  2019.

\end{thebibliography}

\appendix

\onecolumn

\section{Deep fully-connected linear neural network
 }

\subsection{Proof of Lemma~\ref{deepl: lemma1}}
\begin{proof} 
According to the update rule (\ref{procudure: NAG_2}) of NAG,
it has
\begin{equation} \label{deepl: 1}
\W^{L:1}_{t+1} = \Pi_l \left( \W^{l}_t + \M_t^l \right)
= \W^{L:1}_t + \sum_{l=1}^L \W^{L:l+1}_t \M_t^l \W^{l-1:1} + \Phi_t,
\end{equation}
where $ \M_t^l = \beta ( \W^{l}_t - \W^{l}_{t-1} ) - \eta\beta(\frac{ \partial \ell(\W^{L:1}_t)}{ \partial \W^{l}_t } - \frac{ \partial \ell(\W^{L:1}_{t-1})}{ \partial \W^{l}_{t-1} })  - \eta \frac{ \partial \ell(\W^{L:1}_t)}{ \partial \W^{l}_t }
$ denotes the momentum term for the layer $l$ at iteration $t$,
and 
$\Phi_t$ contains all the high-order terms of the momentum term.
Then (\ref{deepl: 1}) can be rewritten as
\begin{eqnarray}
\W^{L:1}_{t+1} 
\!\!\!\!& =& \!\!\!\!\W^{L:1}_t - (1+\beta)\eta \sum_{l=1}^L \W^{L:l+1}_t \frac{ \partial \ell(\W^{L:1}_t)}{ \partial \W^{l}_t } \W^{l-1:1}_t +  \eta \beta \sum_{l=1}^L \W^{L:l+1}_t \frac{ \partial \ell(\W^{L:1}_{t-1})}{ \partial \W^{l}_{t-1} } \W^{l-1:1}_t \nonumber \\
&+& \sum_{l=1}^L \W^{L:l+1}_t \beta ( \W^{l}_t - \W^{l}_{t-1} ) \W^{l-1:1}_t \nonumber\\ 
& = & \W^{L:1}_t - \eta(1+\beta) \sum_{l=1}^L \W^{L:l+1}_t \frac{ \partial \ell(\W^{L:1}_t)}{ \partial \W^{l}_t } \W^{l-1:1}_t + \beta ( \W^{L:1}_t - \W^{L:1}_{t-1} ) \nonumber \\
\!\!\!\!&+& \!\!\!\eta\beta \sum_{l=1}^L \W^{L:l+1}_{t-1} \frac{ \partial \ell(\W^{L:1}_{t-1})}{ \partial \W^{l}_{t-1} } \W^{l-1:1}_{t-1} + (L-1) \beta \W^{L:1}_{t} + \beta  \W^{L:1}_{t-1}
- \beta \sum_{l=1}^L \W^{L:l+1}_t \W^{l}_{t-1} \W^{l-1:1}_{t} \nonumber\\ 
&+& \eta\beta(\sum_{l=1}^L \W^{L:l+1}_{t} \frac{ \partial \ell(\W^{L:1}_{t-1})}{ \partial \W^{l}_{t-1} } \W^{l-1:1}_{t} - \sum_{l=1}^L \W^{L:l+1}_{t-1} \frac{ \partial \ell(\W^{L:1}_{t-1})}{ \partial \W^{l}_{t-1} } \W^{l-1:1}_{t-1}) + \Phi_t. \nonumber 
\end{eqnarray}
Multiplying both sides of the above equality with $\frac{1}{ \sqrt{ m^{L-1} d_y} } \X$, it has
\begin{eqnarray}
\U_{t+1} \!\!\!\!& = &\!\!\!\! \U_t - \frac{\eta(1+\beta)}{ m^{L-1} d_y } \sum_{l=1}^L \W^{L:l+1}_t  (\W^{L:l+1}_t)^\top
( \U_t -\Y )   (\W^{l-1:1}_t \X)^\top  \W^{l-1:1}_t \X  + \beta  (\U_t - \U_{t-1}) \nonumber \\ 
& + &  \frac{\eta\beta}{ m^{L-1} d_y } \sum_{l=1}^L \W^{L:l+1}_{t-1}  (\W^{L:l+1}_{t-1})^\top
( \U_{t-1} -\Y )   (\W^{l-1:1}_{t-1} \X)^\top  \W^{l-1:1}_{t-1} \X  \nonumber\\
& +&\!\!\!\! \frac{1}{ \sqrt{ m^{L-1} d_y} } \left( (L-1) \beta \W^{L:1}_{t} + \beta  \W^{L:1}_{t-1}
- \beta \sum_{l=1}^L \W^{L:l+1}_t \W^{l}_{t-1} \W^{l-1:1}_{t} \right) \X
+
\frac{1}{ \sqrt{ m^{L-1} d_y} } \Phi_t \X \nonumber\\
& + &\frac{\eta\beta}{\sqrt{ m^{L-1} d_y}} (\sum_{l=1}^L \W^{L:l+1}_{t} \frac{ \partial \ell(\W^{L:1}_{t-1})}{ \partial \W^{l}_{t-1} } \W^{l-1:1}_{t} - \sum_{l=1}^L \W^{L:l+1}_{t-1} \frac{ \partial \ell(\W^{L:1}_{t-1})}{ \partial \W^{l}_{t-1} } \W^{l-1:1}_{t-1})\X. \nonumber
\end{eqnarray}
Using $\text{vec}(\A\bm{C}\B) = (\B^\top \otimes \A) \text{vec}(\bm{C})$, it has 
\begin{eqnarray} \label{deepl: 2}
&&\v(\U_{t+1}) - \v(\U_t)\nonumber \\
&  =& \!\!\!\!  - \eta(1+\beta) \H_t^{lin} \v( \U_t -\Y ) +
\beta  \left( \v(\U_{t}) - \v(\U_{t-1})  \right) + \eta\beta \H_{t-1}^{lin}\v(\U_{t-1} -\Y)
\nonumber\\
 & + &
\v\left(  \frac{1}{ \sqrt{ m^{L-1} d_y} } ( (L-1) \beta \W^{L:1}_{t} + \beta  \W^{L:1}_{t-1}
- \beta \sum_{l=1}^L \W^{L:l+1}_t \W^{l}_{t-1} \W^{l-1:1}_{t} ) \X \right) \nonumber\\
 &  + & \!\!\!\! \v\left(\frac{\eta\beta}{\sqrt{ m^{L-1} d_y}} (\sum_{l=1}^L \W^{L:l+1}_{t} \frac{ \partial \ell(\W^{L:1}_{t-1})}{ \partial \W^{l}_{t-1} } \W^{l-1:1}_{t} - \sum_{l=1}^L \W^{L:l+1}_{t-1} \frac{ \partial \ell(\W^{L:1}_{t-1})}{ \partial \W^{l}_{t-1} } \W^{l-1:1}_{t-1})\X\right)\nonumber\\
 & +&
\frac{1}{ \sqrt{ m^{L-1} d_y} } \v( \Phi_t \X),
\end{eqnarray}
where 
\begin{equation}
\H_t^{lin} = \frac{1}{  m^{L-1} d_y } \sum_{l=1}^L \left[ \left( (\W^{l-1:1}_t \X)^\top (\W^{l-1:1}_t \X)
 \right)  \otimes \W^{L:l+1}_t (\W^{L:l+1}_t)^\top    \right].
\end{equation} 

Then (\ref{deepl: 2}) can be reformulated as
\begin{equation} \label{eq:L2}
\begin{split}
\begin{bmatrix}
\xib_{t+1} \\
\xib_{t} 
\end{bmatrix}
& = 
\begin{bmatrix}
(1+\beta)(\I_{d_y n} - \eta \H_t^{lin}) & \beta ( -\I_{d_y n} + \eta \H_{t-1}^{lin}  ) \\
\I_{d_y n} & \0_{d_y n} 
\end{bmatrix}
\begin{bmatrix}
\xib_{t} \\
\xib_{t-1} 
\end{bmatrix}
+
\begin{bmatrix}
\phib_t + \psib_t \\ \0_{d_y n}
\end{bmatrix}
\\ & = 
\begin{bmatrix}
(1+\beta)(\I_{d_y n} - \eta \H_0^{lin}) & \beta ( -\I_{d_y n} + \eta \H_{0}^{lin}  )   \\
\I_{d_y n} & \0_{d_y n} 
\end{bmatrix}
\begin{bmatrix}
\xib_{t} \\
\xib_{t-1} 
\end{bmatrix}
+
\begin{bmatrix}
\varphib_t \\ \0_{d_y n}
\end{bmatrix}
,
\end{split}
\end{equation}
where $\varphib_t = \phib_t + \psib_t + \iotab_t \in \reals^{d_y n}$ and 
 $\I_{d_y n}$ is the $d_y n \times d_y n$-dimensional identity matrix.
\end{proof}

Before presenting the proof of our main results, we introduce some supporting lemmas.

\begin{lemma}{(Proposition 6.2 and Proposition 6.3 in \cite{DH19})}\label{deepl: lemma4}
For any $i \in (1, L]$ and $j \in [1, L)$, with probability at least $1 - exp(-\Omega(m/L))$, it has
\begin{eqnarray}
	\sigma_{max}(\W^{L:i}_0) &\leq& 1.2m^{\frac{L-i+1}{2}}, \;\; \sigma_{min}(\W^{L:i}_0) \geq 0.8m^{\frac{L-i+1}{2}} \nonumber\\
	\sigma_{max}(\W^{j:1}_0\X) &\leq& 1.2m^{\frac{j}{2}}\sigma_{max}(\X), \;\; \sigma_{min}(\W^{j:1}_0 \X) \geq 0.8m^{\frac{j}{2}}\sigma_{min}(\X) \nonumber
\end{eqnarray}
\end{lemma}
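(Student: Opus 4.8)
The plan is to prove each inequality as a uniform statement over a unit sphere by combining a fixed-vector norm-concentration bound with an $\epsilon$-net argument, exploiting the fact that multiplication by a standard Gaussian matrix approximately scales the norm of any fixed vector by $\sqrt{m}$, and that for a product of \emph{independent} Gaussian factors these scalings compose multiplicatively and independently. Concretely, I would fix a unit vector $v$, set $u_0 = v$ and $u_s = \W^s_0 u_{s-1}$, and condition sequentially: given $u_{s-1}$, which is measurable with respect to $\W^1_0,\dots,\W^{s-1}_0$ and hence independent of $\W^s_0$, the vector $u_s$ has i.i.d.\ $\mathcal{N}(0,\|u_{s-1}\|^2)$ entries, so the ratios $Z_s := \|u_s\|^2/\|u_{s-1}\|^2$ are independent $\chi^2_m$ variables. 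Telescoping yields $\|\W^{j:1}_0 v\|^2 = \|v\|^2 \prod_{s=1}^j Z_s$, and the same computation applied to $(\W^{L:i}_0)^\top$ acting on a unit $u \in \reals^{d_y}$ gives $\|(\W^{L:i}_0)^\top u\|^2 = \|u\|^2\prod_s Z_s$ with $p := L-i+1$ independent $\chi^2_m$ factors; since the singular values of $\W^{L:i}_0$ coincide with those of its transpose, controlling this tall matrix suffices.

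Next I would control the product $\prod_s Z_s$ via its logarithm $\sum_s \log Z_s$, where each summand has mean near $\log m$ and is sub-exponential with variance $O(1/m)$; a Bernstein-type bound on the sum of $p \le L$ such terms shows that, with probability $1 - \exp(-\Omega(m/L))$, one has $\prod_s Z_s \in [0.81,\,1.21]\, m^{p}$, i.e.\ the norm is preserved up to the factor $[0.9,1.1]$ after dividing by $m^{p/2}$. I would then upgrade this per-vector estimate to a uniform operator-norm / smallest-singular-value bound by a standard $\epsilon$-net argument: for $\W^{L:i}_0$ the net lives on the $d_y$-dimensional sphere, while for $\W^{j:1}_0\X$ I would restrict to the $r$-dimensional column space of $\X$ so the net lives on an $r$-dimensional sphere. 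In each case the net has cardinality $(3/\epsilon)^{\dim}$; union-bounding over it, and over the $O(L)$ choices of $i,j$, keeps the failure probability at $\exp(-\Omega(m/L))$ exactly when $m = \tilde{\Omega}(L\max\{d_y,r\})$, and the discretization slack is absorbed by relaxing the isometry constants from $[0.9,1.1]$ to $[0.8,1.2]$.

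For the two statements involving $\X$, the concluding step is to combine the isometry property with the spectrum of $\X$. Having established $\|\W^{j:1}_0 z\| \in [0.8,1.2]\,m^{j/2}\|z\|$ uniformly for $z$ in the column space of $\X$, I would take $z = \X w$ and use $\|\X w\| \in [\sigma_{\min}(\X),\sigma_{\max}(\X)]\,\|w\|$ (valid because $\X$ has full column rank $r$) to obtain $\sigma_{\max}(\W^{j:1}_0\X) \le 1.2\,m^{j/2}\sigma_{\max}(\X)$ and $\sigma_{\min}(\W^{j:1}_0\X) \ge 0.8\,m^{j/2}\sigma_{\min}(\X)$ by maximizing and minimizing over unit $w$. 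The hard part will be the quantitative concentration of the length-$L$ product of $\chi^2_m$ variables: the multiplicative errors of the individual factors must be prevented from compounding beyond the fixed window $[0.8,1.2]$ across all $\Theta(L)$ layers while still delivering the sharp tail $\exp(-\Omega(m/L))$. This forces a careful sub-exponential (rather than sub-Gaussian) treatment of $\sum_s \log Z_s$, which is precisely what pins down the linear-in-$L$ width requirement, and balancing that tail against the net cardinality $(3/\epsilon)^{\max\{d_y,r\}}$ is the delicate bookkeeping to which the remainder of the argument reduces.
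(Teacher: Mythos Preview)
The paper does not supply its own proof of this lemma; it is simply quoted as Propositions~6.2 and~6.3 of~\cite{DH19} and used as a black box. Your proposed argument---sequential conditioning to reduce $\|\W^{j:1}_0 v\|^2$ to a product of independent $\chi^2_m$ variables, sub-exponential concentration of $\sum_s \log Z_s$, and an $\epsilon$-net over the low-dimensional sphere (dimension $d_y$ for $\W^{L:i}_0$ via the transpose, dimension $r$ for $\W^{j:1}_0\X$ via the column space of $\X$)---is correct and is essentially the proof given in the original reference.
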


\begin{lemma}{(Proposition 6.4 in \cite{DH19})}
\label{deepl: lemma_5}
For any $1 < i\leq j < L$,with probability at least $1 - exp(-\Omega(m/L))$, it has
\begin{eqnarray}
\|\W^{j:i}_0\| \leq \O(\sqrt{L}m^{\frac{j-i+1}{2}})
\end{eqnarray}
\end{lemma}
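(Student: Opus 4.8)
The plan is to recognize $\W^{j:i}_0 = \W^j_0\W^{j-1}_0\cdots\W^i_0$ as a product of $k := j-i+1$ independent $m\times m$ matrices with i.i.d.\ $\mathcal{N}(0,1)$ entries (all factors are interior layers, since $1<i\le j<L$ forces $i,\dots,j\in\{2,\dots,L-1\}$, each of which is square) and to bound its spectral norm by combining a per-direction concentration estimate with a covering argument over the unit sphere. This is precisely Proposition 6.4 of \cite{DH19}, which we restate; below I outline a self-contained route of the same flavour as Lemma~\ref{deepl: lemma4}.

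First I would fix a unit vector $v\in\reals^m$ and track the iterates $u_0=v$, $u_s=\W^{i+s-1}_0 u_{s-1}$, so that $u_k=\W^{j:i}_0 v$. Conditioned on $u_{s-1}$, the vector $u_s$ is Gaussian with covariance $\|u_{s-1}\|^2\I_m$; hence $\|u_s\|^2=\|u_{s-1}\|^2 Z_s$ with $Z_s\sim\chi^2_m$ independent across $s$, giving the exact identity $\|\W^{j:i}_0 v\|^2=\prod_{s=1}^k Z_s$. Since $\mathbb{E}[\log Z_s]=\log m-O(1/m)$ and each $\log Z_s$ is sub-exponential with parameter $O(1/\sqrt m)$, a Chernoff bound on $\sum_{s=1}^k\log Z_s$ yields $\|\W^{j:i}_0 v\|\le m^{k/2}\cdot(\text{mild factor})$ with failure probability exponentially small in $m$. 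I would then upgrade this directional estimate to the operator norm via a $\tfrac12$-net $\mathcal N$ of the unit sphere with $|\mathcal N|\le 5^m$: on the event that the bound holds simultaneously at every net point (a union bound over $5^m$ points, absorbed by the $e^{-\Omega(m)}$ per-point failure), the standard net inequality $\|\W^{j:i}_0\|\le 2\max_{v\in\mathcal N}\|\W^{j:i}_0 v\|$ transfers the estimate to all directions, with an overall failure probability $1-\exp(-\Omega(m/L))$ once $m=\tilde\Omega(L)$, matching the high-probability bounds in Lemma~\ref{deepl: lemma4}.

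The main obstacle is obtaining the claimed $\O(\sqrt L)$ prefactor rather than an exponential-in-$\sqrt L$ factor. A crude net forces a per-point deviation of order $\sqrt k=\sqrt{j-i+1}$ in $\log\|\W^{j:i}_0 v\|^2$ merely to beat the $5^m$ union bound, which only yields a multiplicative $e^{\Theta(\sqrt L)}$; moreover the full-product singular-value bounds of Lemma~\ref{deepl: lemma4} do not help, because the interior block $\W^{j:i}_0$ acts on all of $\reals^m$ while sandwiching it between $\W^{L:j+1}_0$ (fat, with nontrivial kernel) or $\W^{i-1:1}_0\X$ (rank $\le r<m$) only controls a low-dimensional slice. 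The tight $\sqrt L$ factor is in fact the free-probabilistic edge of the product, $\sigma_{\max}^2(\W^{j:i}_0)\asymp m^{k}\,(k+1)^{k+1}/k^k\asymp e\,k\,m^{k}$, and capturing it requires a higher-moment computation: one bounds $\mathbb{E}\,\mathrm{tr}\big(((\W^{j:i}_0)^\top\W^{j:i}_0)^p\big)$ combinatorially for $p=\Theta(m)$ and applies Markov, which both locates the edge at $\Theta(\sqrt L)\,m^{k/2}$ and delivers the $e^{-\Omega(m)}$ tail needed for the union bound. For the present work it suffices to invoke this bound directly from \cite{DH19}.
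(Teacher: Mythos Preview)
Your proposal is consistent with the paper's treatment: the paper does not prove this lemma at all but simply imports it verbatim as Proposition~6.4 of \cite{DH19}, so your concluding sentence---invoke the bound directly from \cite{DH19}---is precisely the paper's ``proof.'' The additional exposition you supply (chi-squared product identity, $\varepsilon$-net, and the observation that the tight $\sqrt{L}$ prefactor is the Fuss--Catalan edge requiring a high-moment computation rather than a crude union bound) is accurate and goes well beyond what the paper offers, but it is not needed for the present work.
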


\begin{lemma}{(Proposition 6.5 in \cite{DH19})}\label{deepl: lemma2.5}
Suppose $m \geq CL\log L$ for a sufficiently large constant $C > 0$.
With probability at least $1- exp(-\Omega(m/L)) -\delta/2 $, it has
\begin{equation}
	\ell(0)\leq B_0^2 = \O(\max\{1, \frac{\log(r/\delta)}{d_y}, \|\W^*\|^2\}\|\X\|_F^2)
\end{equation}
\end{lemma}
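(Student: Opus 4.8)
The plan is to identify $\ell(0) = \tfrac12\|\U_0-\Y\|_F^2$, where $\U_0 = \tfrac{1}{\sqrt{m^{L-1}d_y}}\W^{L:1}_0\X$ is the output matrix at initialization and $\Y = \W^*\X$, and then split by the triangle inequality, $\ell(0)\le \|\U_0\|_F^2 + \|\Y\|_F^2$. The label term is immediate: $\|\Y\|_F^2 = \|\W^*\X\|_F^2 \le \|\W^*\|^2\|\X\|_F^2$, which supplies the $\|\W^*\|^2\|\X\|_F^2$ contribution to $B_0^2$. Everything then reduces to the probabilistic bound $\|\U_0\|_F^2 \le \O(\max\{1,\log(r/\delta)/d_y\})\,\|\X\|_F^2$, to be shown with probability at least $1-\delta/2$.

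To control the output term I would first record that the scaling $\tfrac{1}{\sqrt{m^{L-1}d_y}}$ makes $\mathbb{E}\|\U_0\|_F^2 = \|\X\|_F^2$: peeling one layer at a time and using $\mathbb{E}\|\W^l_0\bm{V}\|_F^2 = m\|\bm{V}\|_F^2$ for a fixed $\bm{V}$ at each $m\times m$ layer (and $d_y\|\bm{V}\|_F^2$ at the $d_y\times m$ output layer) exactly cancels the factor $m^{L-1}d_y$. To upgrade this to high probability I would condition on the hidden layers and set $\bm{Z} = \W^{L-1:1}_0\X$, so that $\|\U_0\|_F^2 = \tfrac{1}{m^{L-1}d_y}\|\W^L_0\bm{Z}\|_F^2$. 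Conditioned on $\bm{Z}$, the quantity $\|\W^L_0\bm{Z}\|_F^2 = \sum_{k=1}^{d_y}(\W^L_0)_k\,\bm{Z}\bm{Z}^\top(\W^L_0)_k^\top$ (with $(\W^L_0)_k$ the $k$-th row) is a sum of $d_y$ independent Gaussian quadratic forms of mean $d_y\|\bm{Z}\|_F^2$, so a Hanson--Wright / Bernstein inequality for chi-square sums gives, with probability $1-\delta/4$, a deviation of order $\sqrt{d_y}\,\|\bm{Z}\bm{Z}^\top\|_F\sqrt{\log(1/\delta)} + \|\bm{Z}\bm{Z}^\top\|\log(1/\delta)$. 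Using $\|\bm{Z}\bm{Z}^\top\| = \sigma_{\max}^2(\bm{Z})$, the elementary inequality $\|\bm{Z}\bm{Z}^\top\|_F \le \sigma_{\max}(\bm{Z})\|\bm{Z}\|_F$, and the bound $\sigma_{\max}(\bm{Z}) \le 1.2\,m^{(L-1)/2}\sigma_{\max}(\X)$ from Lemma~\ref{deepl: lemma4}, the $\|\bm{Z}\bm{Z}^\top\|\log(1/\delta)$ term yields $\O(\sigma_{\max}^2(\X)\log(1/\delta)/d_y) \le \O(\|\X\|_F^2\log(r/\delta)/d_y)$ and the cross term yields $\O(\sqrt{\log(1/\delta)/d_y})\,\|\X\|_F^2$, the latter absorbed into $\max\{1,\log(r/\delta)/d_y\}$ via $\sqrt{x}\le\max\{1,x\}$. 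The leading term is $\|\bm{Z}\|_F^2/m^{L-1}$, which is $\O(1)\,\|\X\|_F^2$ provided $\|\bm{Z}\|_F^2 \le \O(m^{L-1})\,\|\X\|_F^2$.

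The crux, and the main obstacle, is precisely this last ingredient: a constant-factor upper bound on $\|\bm{Z}\|_F^2 = \|\W^{L-1:1}_0\X\|_F^2$ with a width requirement that is only \emph{linear} in $L$. The singular-value estimates of Lemma~\ref{deepl: lemma4} give only $\|\bm{Z}\|_F^2 \le r\,\sigma_{\max}^2(\bm{Z}) \le \O(r\,m^{L-1})\sigma_{\max}^2(\X)$, which loses a spurious factor $r$, while the crude recursion $\|\W^{l:1}_0\X\|_F \le \sigma_{\max}(\W^l_0)\,\|\W^{l-1:1}_0\X\|_F$ compounds to a useless $4^L$ factor. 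The right device is to track the normalized layerwise Frobenius norm $X_l = \|\W^{l:1}_0\X\|_F^2/m^l$. Since $\mathbb{E}[X_l\mid\mathcal{F}_{l-1}] = X_{l-1}$, where $\mathcal{F}_{l-1}$ is the $\sigma$-algebra generated by the first $l-1$ layers, $\{X_l\}$ is a martingale with $X_0 = \|\X\|_F^2$ and per-step relative variance $\O(1/m)$, the boundedness of the variance following from $\sum_i\sigma_i^4/(\sum_i\sigma_i^2)^2 \le 1$. A Freedman-type concentration applied to $\log X_l$ then shows that $\log X_{L-1}$ deviates from $\log\|\X\|_F^2$ by only $\O(\sqrt{L\log(1/\delta)/m})$, because the per-layer variances add \emph{linearly} rather than the per-layer deviations adding linearly; this is exactly what turns the naive quadratic-in-$L$ requirement into the stated linear one, giving $\|\bm{Z}\|_F^2 \le \O(m^{L-1}\|\X\|_F^2)$ once $m = \tilde{\Omega}(L)$.

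Finally I would collect the three contributions to $\|\U_0\|_F^2$, add the $\|\W^*\|^2\|\X\|_F^2$ term, and take a union bound over the failure event of Lemma~\ref{deepl: lemma4} (probability $e^{-\Omega(m/L)}$) and of the two concentration steps (total $\delta/2$), which reproduces the stated bound $\ell(0)\le B_0^2$ with probability at least $1-e^{-\Omega(m/L)}-\delta/2$.
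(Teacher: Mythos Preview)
The paper does not prove this lemma at all: it is quoted verbatim as Proposition~6.5 of~\cite{DH19} and used as a black box, so there is no ``paper's own proof'' to compare against. Your proposal is therefore not a comparison target but an independent reconstruction of the argument from~\cite{DH19}.

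As a reconstruction, your outline is on the right track and broadly matches the original Du--Hu argument: split $\ell(0)$ into the output norm and the label norm, handle the last layer by a Hanson--Wright/chi-square tail conditional on the hidden features, and control the hidden-feature Frobenius norm via the martingale $X_l = \|\W^{l:1}_0\X\|_F^2/m^l$ so that only $m=\tilde\Omega(L)$ is needed. One small point to tighten: the appearance of $r$ inside the logarithm in the stated bound comes from a union bound over the $r$ columns of $\X$ (recall the paper's reduction to $\X\in\reals^{d_x\times r}$), i.e.\ one applies the per-column concentration and then union-bounds, rather than from the single Hanson--Wright step you describe; your inequality $\sigma_{\max}^2(\X)\log(1/\delta)/d_y \le \|\X\|_F^2\log(r/\delta)/d_y$ is true but does not by itself explain why the $r$ is there.
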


Therefore, according to the properties of the Kronecker product, it has the bounds for the spectrum of the matrix $\H_0$ as
\begin{eqnarray}\label{deep1: bound1}
	\lambda_{min}(\H_0^{lin}) \geq \frac{0.8^4L\sigma^2_{min}(\X)}{d_y}, \;\; \lambda_{max}(\H_0^{lin}) \leq \frac{1.2^4 L \sigma^2_{max}(\X)}{d_y}
\end{eqnarray}
For abuse of the notation, we define $\lambda_{min} = \frac{0.8^4L\sigma^2_{min}(\X)}{d_y}$, $\lambda_{max} = \frac{1.2^4 L \sigma^2_{max}(\X)}{d_y}$ and $\kappa = \lambda_{max}/\lambda_{min}$.

\begin{lemma}{(Claim 7.2 in~\cite{DH19})}\label{deep1: lemm4.5}
Suppose $m = \Omega(L\max\{r \kappa^5 d_y(1+\|\W^*\|^2), r\kappa^5\log\frac{r}{\delta}, \log L\})$.
 Assume $\|\W^{k}_t - \W^{k}_0\|_F \leq R^{lin}= \frac{792 \| \X \| B_0 \sqrt{d_y\kappa}}{ L \sigma_{\min}^2(\X) } $
 for any $k \in [L]$ and any $t$, it has
\begin{eqnarray}
\label{deepl: bound_param_t}
	\sigma_{max}(\W^{L:i}_t) \leq 1.25 m^{\frac{L-i+1}{2}} &,& \;\; \sigma_{min}(\W^{L:i}_t) \geq 0.75 m^{\frac{L-i+1}{2}} \;\; \forall 1< i\leq L \nonumber\\
	\sigma_{max}(\W^{j:1}_t\X) \leq 1.25 m^{\frac{j}{2}}\sigma_{max}(\X) &,& \;\; \sigma_{min}(\W^{j:1}_t \X) \geq 0.75 m^{\frac{j}{2}}\sigma_{min}(\X) \;\; \forall 1 \leq j < L \nonumber \\
	\|\W^{j:i}_t\| &\leq& \O(\sqrt{L}m^{\frac{j-i+1}{2}}) \;\; \forall 1 <i\leq j < L
\end{eqnarray}
\end{lemma}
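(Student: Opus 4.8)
The plan is to treat Lemma~\ref{deep1: lemm4.5} as a purely deterministic matrix-perturbation statement: conditioned on the high-probability event that the initialization bounds of Lemma~\ref{deepl: lemma4} and Lemma~\ref{deepl: lemma_5} hold, and granting the hypothesis $\|\W^{k}_t - \W^{k}_0\|_F \leq R^{lin}$ for every layer $k$ and every $t$, I would show that each product $\W^{L:i}_t$, $\W^{j:1}_t\X$ and $\W^{j:i}_t$ stays within a small spectral-norm ball around its value at initialization, and then transfer this to singular values by Weyl's inequality $|\sigma_p(P) - \sigma_p(Q)| \leq \|P - Q\|$. Concretely, since $\sigma_{max}(\W^{L:i}_t) \leq \sigma_{max}(\W^{L:i}_0) + \|\W^{L:i}_t - \W^{L:i}_0\|$ and $\sigma_{min}(\W^{L:i}_t) \geq \sigma_{min}(\W^{L:i}_0) - \|\W^{L:i}_t - \W^{L:i}_0\|$, it suffices to prove $\|\W^{L:i}_t - \W^{L:i}_0\| \leq 0.05\, m^{(L-i+1)/2}$; the gap between the initialization constants $1.2/0.8$ of Lemma~\ref{deepl: lemma4} and the target constants $1.25/0.75$ is exactly this $0.05$ slack.

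The heart of the argument is bounding $\|\W^{L:i}_t - \W^{L:i}_0\|$ without incurring a factor exponential in $L$. Writing $\Delta^l = \W^{l}_t - \W^{l}_0$, so that $\|\Delta^l\| \leq \|\Delta^l\|_F \leq R^{lin}$, I would expand the product $\W^{L:i}_t = \prod_{l=i}^L(\W^{l}_0 + \Delta^l)$ and group the resulting terms by the set of positions carrying a $\Delta$. A term with $s$ factors $\Delta$ leaves the retained $\W_0$'s in at most $s+1$ contiguous blocks, and the key point is that each such block is a sub-product $\W^{b:a}_0$ whose spectral norm is controlled \emph{directly} by Lemma~\ref{deepl: lemma4} as $1.2\, m^{(\mathrm{length})/2}$ — not by the product of individual single-layer bounds, which is what would manufacture the spurious $1.2^L$. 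Summing the $\binom{L-i+1}{s}$ placements yields a binomial sum of the form $1.2\big[(m^{1/2} + 1.2\,R^{lin})^{\,k} - m^{k/2}\big]$ with $k = L-i+1$, which I would bound by $\O(k\,R^{lin}\,m^{(k-1)/2})$ whenever $k\,R^{lin}/m^{1/2}$ is small.

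It remains to verify that the width requirement forces this quantity below the $0.05$ slack. Substituting $R^{lin} = 792\|\X\|B_0\sqrt{d_y\kappa}/(L\sigma_{\min}^2(\X))$, the product $L\,R^{lin}$ is independent of $L$, so $k\,R^{lin} \leq L\,R^{lin} = \O(\|\X\|B_0\sqrt{d_y\kappa}/\sigma_{\min}^2(\X))$; the hypothesis $m = \Omega(L\max\{r\kappa^5 d_y(1+\|\W^*\|^2), r\kappa^5\log(r/\delta),\log L\})$ together with $B_0^2 = \O(\max\{1,\log(r/\delta)/d_y,\|\W^*\|^2\}\|\X\|_F^2)$ then makes $k\,R^{lin}/m^{1/2} = o(1)$, in fact small enough that $\O(k\,R^{lin}\,m^{(k-1)/2}) \leq 0.05\, m^{k/2}$. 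For the products $\W^{j:1}_t\X$ I would carry $\X$ as a rightmost factor and use the sub-product bounds for $\W^{j:1}_0\X$; here the deviation is controlled by $\|\W^{j:1}_t - \W^{j:1}_0\|\cdot\|\X\|$, so one pays an extra factor $\|\X\|/\sigma_{\min}(\X) = \O(\sqrt{\kappa})$ to reach the tighter target $0.75\,m^{j/2}\sigma_{\min}(\X)$, and this is absorbed by the $\kappa^5$ in the width. For the intermediate blocks $\W^{j:i}_t$ with $1<i\leq j<L$ the base bound is the weaker $\O(\sqrt{L}m^{(j-i+1)/2})$ from Lemma~\ref{deepl: lemma_5}, and the identical perturbation estimate shows the deviation is lower order, so the $\O(\sqrt{L}m^{(j-i+1)/2})$ bound persists at iteration $t$.

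The main obstacle is the block-structured expansion in the second step. A naive telescoping identity $\W^{L:i}_t - \W^{L:i}_0 = \sum_{l=i}^L \W^{L:l+1}_t \Delta^l \W^{l-1:i}_0$ reintroduces the iteration-$t$ products $\W^{L:l+1}_t$ that we are trying to bound, creating a circular dependence; and a crude expansion that bounds every retained factor by its single-layer norm $1.2\sqrt{m}$ accumulates an unacceptable $1.2^L$. Both pitfalls are avoided only by expanding fully into initial matrices and invoking Lemma~\ref{deepl: lemma4}'s bounds on \emph{contiguous sub-products}, so that each summand carries at most $1.2^{s+1}$ rather than $1.2^{k}$. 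Getting the bookkeeping of block counts, binomial coefficients, and the resulting geometric sum correct — and checking that it closes against the stated width requirement — is the technical crux; once this spectral bound is in hand, the singular-value conclusions follow immediately from Weyl's inequality.
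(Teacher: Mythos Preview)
Your overall strategy is exactly the paper's: expand $\W^{L:i}_t=\prod_{l=i}^L(\W^l_0+\Delta^l)$, group terms by the number $s$ of $\Delta$-factors, bound each retained contiguous sub-product of initial weights, sum via the binomial theorem, and then transfer to singular values by Weyl's inequality. The structure of the argument, including the treatment of $\W^{j:1}_t\X$ and of the intermediate blocks $\W^{j:i}_t$, also mirrors the paper.

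There is, however, one concrete error in your middle step. You claim that ``each such block is a sub-product $\W^{b:a}_0$ whose spectral norm is controlled directly by Lemma~\ref{deepl: lemma4} as $1.2\, m^{(\mathrm{length})/2}$'', so that a term with $s$ perturbations carries only a factor $1.2^{s+1}$. That is not what Lemma~\ref{deepl: lemma4} says: it controls only the \emph{endpoint} products $\W^{L:i}_0$ and $\W^{j:1}_0\X$. For the purely intermediate sub-products $\W^{b:a}_0$ with $1<a\le b<L$ the available bound is Lemma~\ref{deepl: lemma_5}, which gives $\|\W^{b:a}_0\|\le \O(\sqrt{L})\,m^{(b-a+1)/2}$, not $1.2\,m^{(b-a+1)/2}$. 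In the expansion of $\W^{L:i}_t-\W^{L:i}_0$, a term with $s$ $\Delta$'s has at most one endpoint block (the one touching layer $L$) but up to $s$ intermediate blocks, so the correct per-term constant is $1.2\cdot(\O(\sqrt{L}))^{s}$, not $1.2^{s+1}$. Consequently your binomial sum $1.2\big[(m^{1/2}+1.2R^{lin})^{k}-m^{k/2}\big]$ is not justified; the paper obtains instead
\[
\|\W^{L:i}_t-\W^{L:i}_0\|\le 1.2\,(\sqrt{m})^{L-i+1}\Big[\big(1+\O(R^{lin}\sqrt{L})/\sqrt{m}\big)^{L}-1\Big].
\]
This costs an extra $\sqrt{L}$ inside the bracket, so the smallness you need is $L\cdot \O(R^{lin}\sqrt{L})/\sqrt{m}\ll 1$, i.e.\ $m\gtrsim L^{3}(R^{lin})^{2}$ (indeed $m\gtrsim L^{3}(R^{lin})^{2}\kappa^{2}$ to get the $\tfrac{1}{1500\kappa}$ constant the paper later uses), rather than the $m\gtrsim L^{2}(R^{lin})^{2}$ that your check ``$kR^{lin}/m^{1/2}=o(1)$'' would give. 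Fortunately the stated width assumption still absorbs this: since $L R^{lin}$ is $L$-free and $\|\X\|^{2}/\sigma_{\min}^{2}(\X)=\O(\kappa)$, one has $L^{3}(R^{lin})^{2}\kappa^{2}=\O\big(L\,r\kappa^{5}\max\{d_y(1+\|\W^*\|^{2}),\log(r/\delta)\}\big)$, which is exactly the hypothesis on $m$. With this single correction your argument coincides with the paper's proof.
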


\begin{proof}
For completeness, we replicate the proof in~\cite{DH19}, but consider a universe $R^{lin}$.

For the bounds of  $\|\W^{L:i}_t - \W^{L:i}_0\|$ with any $1<i\leq L$, it has
\begin{eqnarray} \label{deep1: lemm4.51}
\| \W^{L:i}_t - \W^{L:i}_0 \|_F & \leq& 
1.2\sum_{l=1}^{L-i+1} { L-i+1 \choose l } (R^{lin})^l (\O(\sqrt{L}))^{l}m^{\frac{L-i+1-l}{2}}
\nonumber\\ & \leq &1.2(\sqrt{m})^{L-i+1} \left(  (1+\O(R^{lin}\sqrt{L})/\sqrt{m})^{L-i+1} -1  \right)\nonumber\\
&\leq& 1.2(\sqrt{m})^{L-i+1} \left(  (1+\O(R^{lin}\sqrt{L})/\sqrt{m})^{L} -1  \right)
\nonumber\\ &  
\overset{(a)}{\leq}& 1.2\left(  (1+\frac{1}{C_1L\kappa})^{L} -1  \right)( \sqrt{m} )^{L-i+1} \nonumber\\ 
& \overset{(b)}{\leq} & \!\!\!\!\!\!1.2\left( \!1 \!+\! (e\!-\!1)\frac{1}{C_1\kappa} \!-\! 1\right)\!( \sqrt{m} )^{L-i+1}  \overset{(c)}{\leq} \!\!
 \frac{1}{1500 \kappa } ( \sqrt{m} )^{L-i+1} ,
\end{eqnarray}
where (a) 
uses $m \geq CL^3(R^{lin})^2\kappa^2$, which leads to $O(R^{lin}\sqrt{L})/\sqrt{m} \leq 1/(C_1 L \kappa)$, for some sufficiently large $C>0$ and $C_1 > 0$
(b) uses $(1+x/n)^n\leq e^x, \forall x \geq 0, n >0$ and  Bernoulli's inequality $e^r\leq 1 + (e-1)r,\forall 0 \leq r\leq 1$, and (c) uses any sufficiently larger $C'$. 
Combining Lemma~\ref{deepl: lemma4}, it proves the first line of $\ref{deepl: bound_param_t}$, where the proof of the second line result follows a similar approach.

For any $\|\W^{j:i}_{t}- \W^{j:i}_{0}\|$ with $L > j \geq i> 1$, denotes $\Delta^{k}_t = \W^{k}_t - \W^{k}_0$, it has
\begin{equation}
	\W^{j:i}_t = \left( \W^{j}_0  + \Delta^{j}_t \right)
 \cdots \left( \W^{i}_0 + \Delta^{i}_t  \right).
\end{equation}
Applying binomial theorem, $\|\Delta^{k}_t\| \leq \|\W^{k}_t - \W^{k}_0\|_F \leq R$ and Lemma~\ref{deepl: lemma_5}, it has
\begin{equation} \label{deep1: lemm4.53}
\begin{split}
\| \W^{j:i}_t - \W^{j:i}_0 \|_F & \leq 
\sum_{l=1}^{j-i+1} { j-i+1 \choose l } (R^{lin})^l (\O(\sqrt{L}))^{l+1}m^{\frac{j-i+1-l}{2}}
\\ & \leq \O(\sqrt{L})(\sqrt{m})^{j-i+1} \left(  (1+\O(R^{lin}\sqrt{L})/\sqrt{m})^{j-i+1} -1  \right)\\
&\leq \O(\sqrt{L})(\sqrt{m})^{j-i+1} \left(  (1+\O(R^{lin}\sqrt{L})/\sqrt{m})^{L} -1  \right)
\\ & 
\overset{(a)}{\leq} \O(\sqrt{L}m^{\frac{j-i+1}{2}} ) 
\end{split}
\end{equation}
where (a) 
uses $m > C L^3(R^{lin})^2$ for a sufficiently large $C$.
Moreover, it is noted that $m \geq C_1 L \kappa^5 r \max\{d_y(1+\|\W^*\|^2), \log(r/\delta)\} \geq C_2L^3(R^{lin})^2\kappa^2$ using $\|X\|_F \leq \sqrt{r}\|X\|$ for some sufficient large constants $C_1$ and $C_2$.
Combining the bound of $m$ in Lemma~\ref{deepl: lemma_5}, we complete the proof.
\end{proof}

\subsection{Proof of Lemma~\ref{deepl: lemma2}}
\begin{proof}
According to Lemma~\ref{deepl: lemma1},
$\varphib_t = \phib_t + \psib_t  + \iotab_t \in \reals^{d_y n}$,
where
\begin{eqnarray}
\label{deep1: lemma5_1}
 \phib_t \!\!=\!\! \frac{1}{\sqrt{m^{L-1} d_y}} \v( \Phi_t \X)
\text{ , with } 
\Phi_t   = \Pi_l ( \W^{l}_t + \M_t^l )
\!-\! \W^{L:1}_t \!-\! \sum_{l=1}^L \W^{L:l+1}_t \M_t^l \W^{l-1:1}_t,\nonumber\\
\end{eqnarray}
and
\begin{eqnarray}
 \psib_t \!\!\!& = &\!\!\!\frac{1}{\sqrt{m^{L-1} d_y} } 
\v\left( (L-1) \beta \W^{L:1}_{t}  \X + \beta  \W^{L:1}_{t-1} \X 
 - \beta \sum_{l=1}^L \W^{L:l+1}_t \W^{l}_{t-1} \W^{l-1:1}_{t} \X \right) \nonumber\\&+&\!\!\! \frac{\eta\beta}{\sqrt{ m^{L-1} d_y}}\v\left( (\sum_{l=1}^L \W^{L:l+1}_{t} \frac{ \partial \ell(\W^{L:1}_{t-1})}{ \partial \W^{l}_{t-1} } \W^{l-1:1}_{t} - \sum_{l=1}^L \W^{L:l+1}_{t-1} \frac{ \partial \ell(\W^{L:1}_{t-1})}{ \partial \W^{l}_{t-1} } \W^{l-1:1}_{t-1})\X\right).\nonumber\\
\end{eqnarray}
and
\begin{eqnarray}
& \iotab_t =  -\eta(1+\beta) (\H_t^{lin} - \H_0^{lin}) \xib_t + \eta\beta(\H_{t-1}^{lin} - \H_0^{lin})\xib_{t-1}.
\end{eqnarray}

Applying the subadditivity $\| \varphib_t \| \leq \| \phib_t \| + \|\psib_t \| + \| \iotab_t \|$, we can bound $\|\varphib_t\|$ by separately deriving the bounds of $\| \phib_t \|$, $\|\psib_t \|$ and $\| \iotab_t \|$.

Firstly, we consider the upper bound of the momentum term $\|\M_{t, l}\|$, where $\M_{t, l}$ is composed of the gradients $\frac{ \partial \ell(\W^{L:1}_st)}{ \partial \W^{l}_t}$ and $\frac{ \partial \ell(\W^{L:1}_{t-1})}{ \partial \W^{t-1}_s }$ as shown in (\ref{procudure: NAG_2}).
For any $\frac{ \partial \ell(\W^{L:1}_s)}{ \partial \W^{l}_s } $ with $s\leq t$, it has the bound as
\begin{equation} \label{deepl: lemma5_2}
\begin{split}
 \| \frac{ \partial \ell(\W^{L:1}_s)}{ \partial \W^{l}_s } \|_F
 &
\leq \frac{1}{\sqrt{m^{L-1} d_y} } \| \W^{L:l+1}_s \| \| \U_s -\Y \|_F \| \W^{l-1:1}_s \X\| 
\\ &
\leq \frac{1}{\sqrt{m^{L-1} d_y} } 1.25 m^{\frac{L-l}{2}}  
\theta^s 24\sqrt{\kappa} \sqrt{2} \| \U_0 -\Y \|_F
 1.25 m^{\frac{l-1}{2}} \| \X \|
\\ &
\leq
\frac{54 \| \X \| \sqrt{\kappa}}{\sqrt{d_y}} \theta^s  \| \U_0 -\Y \|_F,
\end{split}
\end{equation}
where the second inequality uses Lemma~\ref{deep1: lemm4.5}, the induction hypothesis 
$\left\| \begin{bmatrix} \xib_s \\ \xib_{s-1} \end{bmatrix} \right\| \leq 24\sqrt{\kappa}\theta^s  \left\| \begin{bmatrix} \xib_0 \\ \xib_{-1} \end{bmatrix} \right\|$ and $\|\xib_{-1}\| = \|\xib_{0}\|$.
Then $\M_t^l$ can be bounded by
\begin{equation}\label{deepl: lemma5_3}
	\begin{split}
	\|\M_t^l\| &= 
\|-\eta\sum_{s=0}^t \beta^{t-s}\left( \frac{ \partial \ell(\W^{L:1}_s)}{ \partial \W^{l}_s } + \beta(\frac{ \partial \ell(\W^{L:1}_s)}{ \partial \W^{l}_s } - \frac{ \partial \ell(\W^{L:1}_{s-1})}{ \partial \W^{l}_{s-1} })\right) \| \\
&\leq \eta(1+\beta)\sum_{s=0}^t\|\beta^{t-s} \frac{ \partial \ell(\W^{L:1}_s)}{ \partial \W^{l}_s }\| + \eta\beta \sum_{s=0}^t\|\beta^{t-s} \frac{ \partial \ell(\W^{L:1}_{s-1})}{ \partial \W^{l}_{s-1} }\| \\
&\leq \frac{54 \| \X \| \sqrt{\kappa}}{\sqrt{d_y}} \eta  \| \U_0 -\Y \|_F \left((1+\beta)\sum_{s=0}^t \beta^{t-s} \theta^s  + \beta \sum_{s=0}^t \beta^{t-s} \theta^{s-1} \right) \\
&\leq  
\frac{168 \| \X \|\sqrt{\kappa}}{\sqrt{d_y}}  \frac{ \theta^{t} }{1 -\theta}     \| \U_0 -\Y \|_F,
 	\end{split}
\end{equation}
where the last inequality uses $\beta \leq \theta^2$.

From the definition of  $\Phi_t$ in (\ref{deep1: lemma5_1}), it includes the summation of all the high-order momentum terms, e.g. 
$\frac{1}{ \sqrt{ m^{L-1} d_y} }\beta \W^{L:k_j+1}_{t} \cdot  \M_t^{k_j} \W^{k_j-1:k_{j-1}+1}_{t} 
\cdot \M_t^{k_{j-1}} \cdots   \M_t^{k_{1}} \cdot \W^{k_1-1:1}_{t} $, where $1 \leq k_1 < \cdots < k_j \leq L$ for any $j \geq 2$.

Thus we can derive the upper bound of $\| \frac{1}{\sqrt{m^{L-1} d_y} } \Phi_t \X \|_F$ as
\begin{eqnarray}\label{deep1: lemma5_4}  
&& \| \frac{1}{\sqrt{m^{L-1} d_y} } \Phi_t \X \|_F
\nonumber\\ 
&\overset{(a)}{\leq}& \frac{1}{ \sqrt{ m^{L-1} d_y}  } \sum_{j=2}^L {L \choose j} 
\left( \eta 
\frac{168 \| \X \|\sqrt{\kappa}}{\sqrt{d_y}}  \frac{ \theta^{t} }{1 -\theta}      \| \U_0 -\Y \|_F  \right)^j (1.25)^2( \O(\sqrt{L}))^{j-1}  m^{\frac{L-j}{2}} \| \X \|
\nonumber\\ &
\overset{(b)}{\leq} & \frac{1.25^2}{ \O(\sqrt{L})\sqrt{ m^{L-1} d_y}  } \sum_{j=2}^L L^j 
\left( \eta
\frac{168 \| \X \|\sqrt{\kappa}}{\sqrt{d_y}}  \frac{ \theta^{t} }{1 -\theta}  \| \U_0 -\Y \|_F \O(\sqrt{L})\right)^j  m^{\frac{L-j}{2}} \| \X \|
\nonumber\\ &
\leq &\frac{1.25^2}{\O(\sqrt{L})}  \sqrt{ \frac{ m}{ d_y}  } \| \X \| \sum_{j=2}^L  
\left( \eta
\frac{ 168 L\| \X \|\sqrt{\kappa}}{\sqrt{m d_y}}  \frac{ \theta^{t} }{1 -\theta}   \| \U_0 -\Y \|_F\O(\sqrt{L})\right)^j,   
\end{eqnarray}
where (a) uses (\ref{deepl: lemma5_3}) and Lemma~\ref{deep1: lemm4.5}
for
 and (b) uses that ${L \choose j  } \leq \frac{L^j}{j!} $

With the specific $\eta = \frac{1}{2\lambda_{max}}=\frac{d_y}{2*1.2^4L\sigma^2_{max}(\X)} $ , it is easy to derive the bound of $\eta
\frac{ 168 L\| \X \|\sqrt{\kappa}}{\sqrt{m d_y}}  \frac{ \theta^{t} }{1 -\theta}  \| \U_0 -\Y \|_F \O(\sqrt{L})$ in (\ref{deep1: lemma5_4}) as
\begin{eqnarray}\label{deepl: lemma5bm}
\eta \frac{ 168 L\| \X \|\sqrt{\kappa}}{\sqrt{m d_y}}  \frac{ \theta^{t} }{1 -\theta}  \| \U_0 -\Y \|_F \O(\sqrt{L})
 &
\leq& 48
\sqrt{ \frac{  d_y\kappa}{ m }  } \frac{1}{ \| \X \| }  \frac{ \theta^{t} }{1 -\theta}   \| \U_0 -\Y \|_F \O(\sqrt{L})
\nonumber\\ & \leq& 0.5,  
\end{eqnarray}
where the last inequality uses the lower bound on the width $m$ as $m\geq C \frac{d_yB_0^2\kappa^2L}{\|\X\|^2}$ with a sufficent large constant ${C}$.
As a result, $\phib_t$ has the bound as
\begin{eqnarray} \label{deepl: phi}
\| \phib_t \| &  = &\| \frac{1}{ \sqrt{ m^{L-1} d_y}  } \Phi_t \X \|_F
\nonumber\\
 & \leq& \frac{1.25^2}{ \O(\sqrt{L})}  \sqrt{ \frac{ m}{ d_y}  } \| \X \| 
\left( \eta
\frac{168 L\| \X \|\sqrt{\kappa}}{\sqrt{m d_y}}  \frac{ \theta^{t} }{1 -\theta}  \| \U_0 -\Y \|_F \O(\sqrt{L}) \right)^2  
\sum_{j=2}^{L-2} 
\left(  0.5 \right)^{j-2}   
\nonumber\\ & \leq& \frac{1.25^2}{ \O(\sqrt{L})}  \sqrt{ \frac{ m}{ d_y}  } \| \X \|
\left( \eta
\frac{168 L\| \X \|\sqrt{\kappa}}{\sqrt{m d_y}}  \frac{ \theta^{t} }{1 -\theta}    \| \U_0 -\Y \|_F \O(\sqrt{L}) \right)^2   
\nonumber\\ 
& \leq & \frac{ \O(\sqrt{L})  }{ \| \X \|}\sqrt{\frac{d_y}{m}}
\left( 
 \frac{ \theta^{t} }{1 -\theta} 64\sqrt{\kappa}  \| \U_0 -\Y \|_F  \right)^2
\overset{}{\leq}\frac{1}{180\sqrt{\kappa}}\theta^{2t}\|\U_0 -\Y\|_F,
\end{eqnarray}
where the last inequality uses $m \geq C\frac{Ld_yB_0^2\kappa^5}{\|\X\|^2}$ for a sufficently large $C \geq 0$.

Then we turn to analyze $\|\psib_t\|$, which is composed of two parts:
$\frac{1}{ \sqrt{ m^{L-1} d_y} }\beta (L-1)  \W^{L:1}_{t} \X + \frac{1}{ \sqrt{ m^{L-1} d_y} }\beta  \W^{L:1}_{t-1} \X
- \frac{1}{ \sqrt{ m^{L-1} d_y} } \beta \sum_{l=1}^L \W^{L:l+1}_t \W^{l}_{t-1} \W^{l-1:1}_{t} \X$ 

\noindent and $ \frac{\eta\beta}{\sqrt{ m^{L-1} d_y}} (\sum_{l=1}^L \W^{L:l+1}_{t} \frac{ \partial \ell(\W^{L:1}_{t-1})}{ \partial \W^{l}_{t-1} } \W^{l-1:1}_{t} - \sum_{l=1}^L \W^{L:l+1}_{t-1} \frac{ \partial \ell(\W^{L:1}_{t-1})}{ \partial \W^{l}_{t-1} } \W^{l-1:1}_{t-1})\X$.

The first part can be rewritten as
\begin{equation} \label{deep1: lemma5psi1}
\begin{aligned}
& \underbrace{\frac{1}{ \sqrt{ m^{L-1} d_y} }   \beta (L-1)  \cdot \Pi_{l=1}^L \left( \W^{l}_{t-1} + \M_{t-1}^{l} \right) \X }_{\text{first term} } + \underbrace{ \frac{1}{ \sqrt{ m^{L-1} d_y} }\beta  \W^{L:1}_{t-1} \X }_{\text{second term}}
\\ & 
\underbrace{
- \frac{1}{ \sqrt{ m^{L-1} d_y} } \beta \sum_{l=1}^L \Pi_{i=l+1}^L \left( \W^{i}_{t-1} + \M_{t-1}^{i} \right)  \W^{l}_{t-1} \Pi_{j=1}^{l-1} \left( \W^{j}_{t-1} + \M_{t-1}^{j} \right)  \X }_{\text{third term}}.
\end{aligned} 
\end{equation}
According to the numbers of momentum terms, we can reformulate (\ref{deep1: lemma5psi1}) as $\E_0 +  \E_1 +  \E_2 + \dots +  \E_L$, where $\E_i$ is composed of the multiplication of $i$ distinct $\M_{t-1}^{l}$ for $1 \leq l \leq L$.
Specifically, $\E_0$ and $\E_1$ are 0 due to
\begin{equation}
\begin{split}
\E_0 & = \underbrace{ \frac{1}{ \sqrt{ m^{L-1} d_y} }\beta(L-1)  \W^{L:1}_{t-1} \X }_{ \text{belongs to the first term} }+ 
 \underbrace{  \frac{1}{ \sqrt{ m^{L-1} d_y} }\beta  \W^{L:1}_{t-1} \X }_{ \text{belongs to the second term} }
 \underbrace{
- \frac{1}{ \sqrt{ m^{L-1} d_y} }\beta L \W^{L:1}_{t-1} \X }_{ \text{belongs to the third term} } = 0
\\
\E_1 & = \underbrace{ - \frac{1}{ \sqrt{ m^{L-1} d_y} } \beta(L-1) \sum_{l=1}^L \W^{L:l+1}_{t-1}  \M_{t-1}^{l} \W^{l-1:1}_{t-1} }_{ \text{belongs to the first term} }
+ \underbrace{ \frac{1}{ \sqrt{ m^{L-1} d_y} }\beta \sum_{l=1}^L \sum_{k \neq l}
\W^{L:k+1}_{t-1}  \M_{t-1}^{k} \W^{k-1:1}_{t-1} }_{ \text{belongs to the third term} }
= 0.
\end{split}
\end{equation}
Then we anaylze the high-order terms.
Consider the $p$-th order term ($p\geq 2$), the first and third terms on (\ref{deep1: lemma5psi1}) provide the coefficients
$-\frac{1}{ \sqrt{ m^{L-1} d_y} }\beta(L-1)$ and  
$\frac{1}{ \sqrt{ m^{L-1} d_y} }\beta(L-p)$
respectively, which results in the coefficient for all the $p$-th order terms as  $\frac{1}{ \sqrt{ m^{L-1} d_y} }\beta(1-p)$.

Applying the bound of the momentum term in (\ref{deepl: lemma5_3}), it has
\begin{eqnarray} \label{deepl: lemma5psi11}
&& \| \frac{1}{ \sqrt{ m^{L-1} d_y} }\beta (L-1)  \W^{L:1}_{t} \X + \frac{1}{ \sqrt{ m^{L-1} d_y} }\beta  \W^{L:1}_{t-1} \X
- \frac{1}{ \sqrt{ m^{L-1} d_y} } \beta \sum_{l=1}^L \W^{L:l+1}_t \W^{l}_{t-1} \W^{l-1:1}_{t} \X \|_F \nonumber\\
 &\overset{}{\leq}& \frac{\beta}{ \sqrt{ m^{L-1} d_y}  } \sum_{j=2}^L \left(j-1\right) {L \choose j} 
\left( \eta 
\frac{336 \| \X \|\sqrt{\kappa}}{\sqrt{d_y}}  \frac{ \theta^{t-1} }{1 -\theta}    \| \U_0 -\Y \|_F  \right)^j 1.25^2( \O(\sqrt{L}))^{j-1}  m^{\frac{L-j}{2}} \| \X \|\nonumber\\ 
&\overset{(a)}{\leq}& 1.25^2 \frac{\beta}{ \O(\sqrt{L}) \sqrt{ m^{L-1} d_y}  } \sum_{j=2}^L L^j 
\left( \eta
\frac{168 \| \X \|\sqrt{\kappa}}{\sqrt{d_y}}  \frac{ \theta^{t-1} }{1 -\theta}    \| \U_0 -\Y \|_F \O(\sqrt{L})\right)^j  m^{\frac{L-j}{2}} \| \X \|
\nonumber\\ 
&\overset{(b)}{\leq}& \frac{1.25^2}{\O(\sqrt{L})} \beta \sqrt{ \frac{ m}{ d_y}  } \| \X \| \sum_{j=2}^L  
\left( \eta
\frac{ 168 L \| \X \|\sqrt{\kappa}}{\sqrt{m d_y}}  \frac{ \theta^{t-1} }{1 -\theta}     \| \U_0 -\Y \|_F \O(\sqrt{L})\right)^j, \nonumber\\ 
& \leq& \frac{1.25^2}{\O(\sqrt{L})} \beta \sqrt{ \frac{ m}{ d_y}  } \| \X \| 
\left( \eta
\frac{168 L \| \X \|\sqrt{\kappa}}{\sqrt{m d_y}}  \frac{ \theta^{t-1} }{1 -\theta}   \| \U_0 -\Y \|_F \O(\sqrt{L}) \right)^2  
\sum_{j=2}^{L-2} 
\left(  0.5 \right)^{j-2}   \nonumber\\ 
& \overset{(c)}{\leq}& \frac{2*1.25^2}{\O(\sqrt{L})} \beta \sqrt{ \frac{ m}{ d_y}  } \| \X \|
\left( \eta
\frac{168 L \| \X \|\sqrt{\kappa}}{\sqrt{m d_y}}  \frac{ \theta^{t-1} }{1 -\theta}   \| \U_0 -\Y \|_F \O(\sqrt{L})\right)^2   \nonumber\\ 
& \overset{(d)}{\leq} & \frac{ \O(\sqrt{L})   }{  \| \X \|}\sqrt{\frac{d_y}{m}}   
\left( 
 \frac{ \theta^{t-1} }{1 -\theta} 90\sqrt{\kappa}   \| \U_0 -\Y \|_F \right)^2 \nonumber\\ 
 &\leq & \frac{1}{90\sqrt{\kappa}} \theta^{2t}\| \U_0 -\Y \|_F,
\end{eqnarray}
where (a) uses ${L \choose j  } \leq \frac{L^j}{j!} $ and (b) uses the same analysis way as (\ref{deepl: lemma5bm}) 
(c) uses $\eta=\frac{d_y}{2*1.2^4L\sigma^2_{max}(\X)}$(d) uses $m \geq C\frac{Ld_yB_0^2\kappa^5}{\|\X\|^2}$ for some sufficently large $C \geq 0$.

For the bound of the second part, it has
\begin{equation}\label{deepl: lemma5psi21}
	\begin{split}
& \frac{\eta\beta}{\sqrt{ m^{L-1} d_y}} \|\sum_{l=1}^L \W^{L:l+1}_{t} \frac{ \partial \ell(\W^{L:1}_{t-1})}{ \partial \W^{l}_{t-1} } \W^{l-1:1}_{t}\X - \sum_{l=1}^L \W^{L:l+1}_{t-1} \frac{ \partial \ell(\W^{L:1}_{t-1})}{ \partial \W^{l}_{t-1} } \W^{l-1:1}_{t-1}\X\|\\
&\leq   \frac{\eta\beta}{\sqrt{ m^{L-1} d_y}} \sum_{l=1}^L \left( \|\underbrace{(\W^{L:l+1}_{t}- \W^{L:l+1}_{t-1})\frac{ \partial \ell(\W^{L:1}_{t-1})}{ \partial \W^{l}_{t-1} } \W^{l-1:1}_{t}\X\|}_{\text{first term}} + \underbrace{\|\W^{L:l+1}_{t-1} \frac{ \partial \ell(\W^{L:1}_{t-1})}{ \partial \W^{l}_{t-1} } ( \W^{l-1:1}_{t}-  \W^{l-1:1}_{t-1})\X\|}_{\text{second term}}  \right).
	\end{split}
\end{equation}
Considering the bound of the first term in (\ref{deepl: lemma5psi21}), it has
\begin{equation}
\begin{split}
	\|(\W^{L:l+1}_{t}- \W^{L:l+1}_{t-1})\frac{ \partial \ell(\W^{L:1}_{t-1})}{ \partial \W^{l}_{t-1} } \W^{l-1:1}_{t}\X\|_F &\leq \|\W^{L:l+1}_{t}- \W^{L:l+1}_{t-1}\| \|\frac{ \partial \ell(\W^{L:1}_{t-1})}{ \partial \W^{l}_{t-1} }\|_F \|\W^{l-1:1}_{t}\X\|.
\end{split}
\end{equation}

Then, it has
\begin{equation}\label{deepl: lemma5psi23}
\begin{split}
\|\W^{L:l+1}_{t}- \W^{L:l+1}_{t-1}\| &\leq \|\W^{L:l+1}_{t}-\W^{L:l+1}_{0} + \W^{L:l+1}_{0}-\W^{L:l+1}_{t-1}\| \\
&\leq \|\W^{L:l+1}_{t}-\W^{L:l+1}_{0}\| +\| \W^{L:l+1}_{0}-\W^{L:l+1}_{t-1}\| \\
&\leq \frac{1}{750\kappa} m^{\frac{L-l}{2}},
\end{split}
\end{equation}
and
\begin{equation}\label{deepl: lemma5psi24}
\|\W^{l-1:1}_{t}\X\| \leq 1.25 m^{\frac{l-1}{2}} \|\X\|,
\end{equation}
where the above two inequalities all use Lemma~\ref{deep1: lemm4.5}.
Based on (\ref{deepl: lemma5psi23}), (\ref{deepl: lemma5psi24}) and (\ref{deepl: lemma5_2}), it has
\begin{eqnarray}\label{deepl: lemma5psi25}
&&\!\!\!\!\!\!\!\!\! \frac{\eta\beta}{\sqrt{ m^{L-1} d_y}} \|\sum_{l=1}^L \W^{L:l+1}_{t} \frac{ \partial \ell(\W^{L:1}_{t-1})}{ \partial \W^{l}_{t-1} } \W^{l-1:1}_{t}\X - \sum_{l=1}^L \W^{L:l+1}_{t-1} \frac{ \partial \ell(\W^{L:1}_{t-1})}{ \partial \W^{l}_{t-1} } \W^{l-1:1}_{t-1}\X\| \nonumber\\
&\leq& \!\!\!\!\frac{\eta\beta}{\sqrt{ m^{L-1} d_y}} \sum_{l=1}^L \left( \|\underbrace{(\W^{L:l+1}_{t} \!-\! \W^{L:l+1}_{t-1})\frac{ \partial \ell(\W^{L:1}_{t-1})}{ \partial \W^{l}_{t-1} } \W^{l-1:1}_{t}\X\|}_{\text{first term}} \!+\! \underbrace{\|\W^{L:l+1}_{t-1} \frac{ \partial \ell(\W^{L:1}_{t-1})}{ \partial \W^{l}_{t-1} } ( \W^{l-1:1}_{t}\!-\!  \W^{l-1:1}_{t-1})\X\|}_{\text{second term}}  \right) \nonumber\\
&\leq& \frac{\eta\beta}{\sqrt{ m^{L-1} d_y}} \sum_{l=1}^L(\frac{1}{750\kappa}m^{\frac{L-l}{2}}\frac{54 \| \X \| \sqrt{\kappa}}{\sqrt{d_y}} \theta^{t-1}  \| \U_0 -\Y \|_F 1.25 m^{\frac{l-1}{2}}\|\X\| \nonumber\\
&+& 1.25 m^{\frac{L-l}{2}}\frac{54 \| \X \| \sqrt{\kappa}}{\sqrt{d_y}} \theta^{t-1} \| \U_0 -\Y \|_F\frac{1}{750\kappa}m^{\frac{l-1}{2}}\|\X\|) \nonumber\\
&\leq& \frac{\eta\beta}{\sqrt{  d_y}} \sum_{l=1}^L(\frac{9\|\X\|^2}{50\sqrt{\kappa d_y}} \|\U_0-Y\|_F \theta^{t-1}) \leq \frac{1}{23\sqrt{\kappa}} \theta^{t-1}\|\U_0-Y\|_F \leq \frac{2}{23\sqrt{\kappa}}\theta^t \|\U_0 -\Y\|_F,
\end{eqnarray}
where the last inequality uses  $\beta < \theta^2$ and $\theta = 1 - \frac{1}{2\sqrt{\kappa}} \geq 1/2$.

Combining (\ref{deepl: lemma5psi11}) and (\ref{deepl: lemma5psi25}), it has
\begin{equation}\label{deepl: psi}
\begin{split}
	\|\psib_t\| \leq \frac{1}{90\sqrt{\kappa}} \theta^{2t}\| \U_0 -\Y \|_F + \frac{2}{23\sqrt{\kappa}}\theta^t \|\U_0 -\Y\|_F
\end{split}
\end{equation}

Finally we analyze the term $\| \iotab_t \| \leq \|\eta(1+\beta) (\H_t - \H_0) \xib_t\| + \|\eta\beta(\H_{t-1} - \H_0)\xib_{t-1}\|$, which is closely related to the bound of $\|(\H_i - \H_0)\xib_i\|$ for $i\leq t$.
It has
\begin{eqnarray}  \label{deepl: lemma5iota1}
&&\|(\H_i - \H_0)\xib_i\|\nonumber\\ 
& =&
\frac{1}{m^{L-1} d_y} 
\| \sum_{l=1}^L \W^{L:l+1}_i (\W^{L:l+1}_i)^\top ( \U_i -\Y) (\W^{l-1:1}_i \X)^\top \W^{l-1:1}_i \X \nonumber \\
&-&
\sum_{l=1}^L \W^{L:l+1}_0 (\W^{L:l+1}_0)^\top ( \U_i -\Y) (\W^{l-1:1}_0 \X)^\top \W^{l-1:1}_0 \X\|_F \nonumber\\ 
& \leq &
\frac{1 }{m^{L-1} d_y} 
\sum_{l=1}^L  
\| \W^{L:l+1}_i (\W^{L:l+1}_i)^\top ( \U_i -\Y) (\W^{l-1:1}_i \X)^\top \W^{l-1:1}_i  \X \nonumber\\
& -&
\W^{L:l+1}_0  (\W^{L:l+1}_0 )^\top ( \U_i -\Y) (\W^{l-1:1}_0 \X)^\top \W^{l-1:1}_0 \X\|_F \nonumber\\ 
& \leq &
\frac{1}{m^{L-1} d_y} 
\sum_{l=1}^L  \big( 
\underbrace{ 
\|\left( \W^{L:l+1}_i (\W^{L:l+1}_i)^\top - \W^{L:l+1}_0 (\W^{L:l+1}_0)^\top \right)  ( \U_i -\Y) (\W^{l-1:1}_i \X)^\top \W^{l-1:1}_i  \X \|_F }_{\text{ first term} }\nonumber\\ 
&+& 
\underbrace{ 
\| \W^{L:l+1}_0 (\W^{L:l+1}_0)^\top  ( \U_i -\Y) \left( \W^{l-1:1}_i \X)^\top \W^{l-1:1}_i \X  - (\W^{l-1:1}_0 \X)^\top \W^{l-1:1}_0 \X   \right)  \|_F \big)
}_{\text{ second term} }.
\end{eqnarray}

For the first term, it has
\begin{eqnarray} \label{deepl: lemma5iota2}
&&\underbrace{ \|\left( \W^{L:l+1}_i (\W^{L:l+1}_i)^\top - \W^{L:l+1}_0 (\W^{L:l+1}_0)^\top \right)  ( \U_i -\Y) (\W^{l-1:1}_i \X)^\top \W^{l-1:1}_i \X  \|_F }_{\text{ first term} }
\nonumber\\ & \leq &
\| \W^{L:l+1}_i (\W^{L:l+1}_i)^\top - \W^{L:l+1}_0 (\W^{L:l+1}_0)^\top \|
\| \U_i -\Y \|_F \|  (\W^{l-1:1}_t \X)^\top \W^{l-1:1}_t \X  \|.\nonumber\\
\end{eqnarray}

Using $\|\W^{L:i}_t - \W^{L:i}_0\| \leq \frac{1}{1500 \kappa } ( \sqrt{m} )^{L-i+1}$ as proved in (\ref{deep1: lemm4.51}), we have
\begin{eqnarray} \label{deepl: lemma5iota3}
& &\| \W^{L:l+1}_i (\W^{L:l+1}_i)^\top - \W^{L:l+1}_0 (\W^{L:l+1}_0)^\top \| \nonumber\\ 
& \leq & \| (\W^{L:l+1}_i - \W^{L:l+1}_0)(\W^{L:l+1}_i)^\top  +  \W^{L:l+1}_i (\W^{L:l+1}_i - \W^{L:l+1}_0)^\top \nonumber \\
&+& (\W^{L:l+1}_i - \W^{L:l+1}_0) ( \W^{L:l+1}_i - \W^{L:l+1}_0)^\top \| \nonumber\\ 
& \leq & 2 \| \W^{L:l+1}_i - \W^{L:l+1}_0 \| \cdot \sigma_{\max} (\W^{L:l+1}_t)  +
\| \W^{L:l+1}_i - \W^{L:l+1}_0\|^2 \nonumber\\ 
& \leq& (\frac{2.5}{1500\kappa} + \frac{1}{(1500\kappa)^2}) m^{L-l},
\end{eqnarray}
where the last inequality uses Lemma~\ref{deep1: lemm4.5}.

For $\|  (\W^{l-1:1}_i \X)^\top \W^{l-1:1}_i \X  \|$, with 
Lemma~\ref{deep1: lemm4.5},
it has
\begin{equation} \label{deepl: lemma5iota4}
\| (\W^{l-1:1}_i \X)^\top \W^{l-1:1}_i \X \| 
\leq \left( \sigma_{\max}( \W^{l-1:1}_i \X) \right)^2 \leq
\left( 1.25 m^{\frac{l-1}{2}} \sigma_{\max}(\X) \right)^2.
\end{equation}
Thus, 
\begin{equation} \label{deepl: lemma5iota5}
\begin{split}
&\underbrace{ \|\left( \W^{L:l+1}_t (\W^{L:l+1}_t)^\top - \W^{L:l+1}_0 (\W^{L:l+1}_0)^\top \right)  ( \U_i -\Y) (\W^{l-1:1}_t \X)^\top \W^{l-1:1}_t \X  \|_F }_{\text{ first term} }
\\ & 
\leq (\frac{2.5}{1500\kappa} + \frac{1}{(1500\kappa)^2}) m^{L-l} 
\left( 1.25 m^{\frac{l-1}{2}} \sigma_{\max}(\X) \right)^2 \| \U_i -\Y \|_F
\\ & 
\leq \frac{\sigma_{\min}^2(\X)}{1940} m^{L-1}  \| \U_i -\Y \|_F,
\end{split}
\end{equation}
where the last inequality uses $\kappa = (1.2/0.8)^4\frac{\sigma_{\max}^2(\X)}{\sigma_{\min}^2(\X) }$.

Following a similar approach, the second part can be bounded by
\begin{eqnarray} \label{deepl: lemma5iota6}
&& \underbrace{ 
\| (\W^{L:l+1}_0 (\W^{L:l+1}_0)^\top  ( \U_i -\Y) \left( \W^{l-1:1}_i \X)^\top \W^{l-1:1}_i \X  - (\W^{l-1:1}_0 \X)^\top \W^{l-1:1}_0 \X   \right)  \|_F \big)
}_{\text{ second term} }
\nonumber\\ 
& \leq& 
\| (\W^{L:l+1}_0 (\W^{L:l+1}_0)^\top \| \| \U_i -\Y \|_F 
\| (\W^{l-1:1}_i \X)^\top \W^{l-1:1}_i \X  - (\W^{l-1:1}_0 \X)^\top \W^{l-1:1}_0 \X   \|
\nonumber\\
&\leq &(1.25m^{\frac{L-l}{2}})^2 (\frac{2.5}{1500\kappa} + \frac{1}{(1500\kappa)^2}) m^{l-1}\sigma_{max}^2(\X) \|\U_i -\Y\|_F
\nonumber\\
&\leq& \frac{\sigma_{\min}^2(\X)}{1940} m^{L-1}  \| \U_i -\Y \|_F.
\end{eqnarray}
Combining (\ref{deepl: lemma5iota5}) and (\ref{deepl: lemma5iota6}), it has
\begin{eqnarray}  \label{deepl: lemma5iota7}
&&\|(\H_i - \H_0)\xib_i\| \nonumber\\ 
& \leq &\!\!\!\!
\frac{1}{m^{L-1} d_y} 
\sum_{l=1}^L  \big( 
\underbrace{ 
\|\left( \W^{L:l+1}_i (\W^{L:l+1}_i)^\top - \W^{L:l+1}_0 (\W^{L:l+1}_0)^\top \right)  ( \U_i -\Y) (\W^{l-1:1}_i \X)^\top \W^{l-1:1}_i  \X \|_F }_{\text{ first term} }
\nonumber\\ &   + &
\underbrace{ 
\| \W^{L:l+1}_0 (\W^{L:l+1}_0)^\top  ( \U_i -\Y) \left( \W^{l-1:1}_i \X)^\top \W^{l-1:1}_i \X  - (\W^{l-1:1}_0 \X)^\top \W^{l-1:1}_0 \X   \right)  \|_F \big)
}_{\text{ second term} }
\nonumber\\&\leq& \frac{1}{m^{L-1} d_y} 
\sum_{l=1}^L 2\frac{\sigma_{\min}^2(\X)}{1940} m^{L-1}  \| \U_i -\Y \|_F \leq \frac{12\sqrt{2} L\sigma_{\min}^2(\X)\sqrt{\kappa}}{485 d_y}\theta^i  \|\U_0 -\Y\|_F.
\end{eqnarray}

Then, it has
\begin{eqnarray} \label{deepl: iota}
\| \iotab_t \| &\leq &\|\eta(1+\beta) (\H_t - \H_0) \xib_t\| + \|\eta\beta(\H_{t-1} - \H_0)\xib_{t-1}\| \nonumber\\
& \leq & \frac{12\sqrt{2}}{(0.8)^4 485\sqrt{\kappa}}\theta^t  \|\U_0 -\Y\|_F + \frac{12\sqrt{2}\theta}{2(0.8)^4 485\sqrt{\kappa}}\theta^t  \|\U_0 -\Y\|_F \nonumber\\
&\leq& \frac{5}{39\sqrt{\kappa}}\theta^t  \|\U_0 -\Y\|_F,
\end{eqnarray} 
where the second inequality uses $\eta =\frac{d_y}{2*1.2^4L\sigma_{max}^2(\X)}$, $\beta \leq \theta^2$, and $\kappa = (1.2/0.8)^4 \frac{\sigma_{max}^2(\X)}{\sigma_{min}^2(\X)}$.

Combining (\ref{deepl: phi}), (\ref{deepl: psi}), and  (\ref{deepl: iota}),  
it has
\begin{eqnarray}
\| \varphib_t \| & \leq& 
\| \phib_t \| + \|\psib_t \| + \| \iotab_t \| \nonumber\\ 
& \leq &\!\!\!
\frac{1}{180\sqrt{\kappa}} \theta^{2t}\| \U_0 -\Y \|_F + \frac{1}{90\sqrt{\kappa}} \theta^{2t}\| \U_0 -\Y \|_F
  + \frac{2}{23\sqrt{\kappa}}\theta^{t} \|\U_0 -\Y\|_F  +\frac{5}{39\sqrt{\kappa}}\theta^t \|\U_0 -\Y\|_F
\nonumber\\
&\leq& \frac{1}{60\sqrt{\kappa}} \theta^{2t}\| \U_0 -\Y \|_F+
 \frac{5}{23\sqrt{\kappa}} \theta^t \|\U_0 -\Y\|_F .
\end{eqnarray}
\end{proof}

\subsection{Proof of Lemma~~\ref{deepl: lemma_distance}}
\begin{proof}
We have 
\begin{eqnarray}
 \| \W^{l}_{t+1} - \W^{l}_0 \|_F 
& \overset{(a)}{\leq} &
 \sum_{s=0}^t \|  \M_{s,l} \|_F \nonumber\\ 
& \overset{(b)}{=} &
\eta \sum_{s=0}^t
\| \sum_{\tau=0}^s \beta^{s-\tau}   \left( \frac{ \partial \ell(\W^{L:1}_{\tau})}{ \partial \W^{l}_{\tau} } + \beta(\frac{ \partial \ell(\W^{L:1}_{\tau})}{ \partial \W^{l}_{\tau} } - \frac{ \partial \ell(\W^{L:1}_{\tau-1})}{ \partial \W^{l}_{\tau-1} })\right)  \|_F \nonumber \\
&\leq &
\eta \sum_{s=0}^t \sum_{\tau=0}^s (1+\beta)\beta^{s-\tau} 
\| \frac{ \partial \ell(\W^{L:1}_{\tau})}{ \partial \W^{l}_{\tau} }   \|_F + \eta \sum_{s=0}^t \sum_{\tau=0}^s \beta^{s-\tau+1} 
\| \frac{ \partial \ell(\W^{L:1}_{\tau-1})}{ \partial \W^{l}_{\tau-1} }   \|_F
\nonumber\\ 
&\overset{(c)}{\leq}& \eta(1+\beta + \theta)\frac{54\| \X \|\sqrt{\kappa}}{\sqrt{d_y}}\| \U_0 -\Y \|_F \sum_{s=0}^t \sum_{\tau=0}^s \theta^{2(s-\tau)} 
 \theta^{\tau}  .
\nonumber\\
&\leq& 
\frac{162\| \X \|\sqrt{\kappa} \eta }{\sqrt{d_y}}\| \U_0 -\Y \|_F \frac{1}{(1 - \theta)^2}
\nonumber\\ 
&\overset{(d)}{\leq}& \frac{792\| \X \| B_0 \sqrt{d_y\kappa}}{ L \sigma_{\min}^2(\X) }  ,
\end{eqnarray}
where (a) is by recursively using (\ref{procudure: NAG_2}), (b) uses $\M_t^l= \sum_{s=0}^t \beta^{t-s} \frac{ \partial \ell(W_{L:1})}{ \partial \W^{l}_s }$, (c) uses $\|\frac{ \partial \ell(W_{L:1})}{ \partial \W^{l}_s }\|_F = 
\frac{108\| \X \|\sqrt{\kappa}}{\sqrt{d_y}} \theta^{s}  \| \U_0 -\Y \|_F$ as (\ref{deepl: lemma5_2}) and $\beta \leq \theta^2$,
(d) uses $\frac{1}{(1-\theta)^2} = \frac{2}{\eta \lambda_{min}}$,
 the upper-bound $B_0 \geq \| \U_0 -\Y \|$ defined in Lemma~\ref{deepl: lemma2.5} and $\lambda_{min}=(0.8)^4 L \sigma^2_{min}(\X) / d_y$.
\end{proof}

\subsection{Proof of Theorem~\ref{thm:LinearNet}} \label{sec:linear}
\begin{proof} 
We prove the theorem by induction.
The base case $s = 0$ holds.
Assume $
\left\|
\begin{bmatrix}
\xib_{s} \\
\xib_{s-1} 
\end{bmatrix}
\right\|
\leq \theta^{s} 24\sqrt{\kappa}  \left\|
\begin{bmatrix}
\xib_{0} \\
\xib_{-1} 
\end{bmatrix}
\right\|
$ holds for $s \leq t-1$.

Based on Lemma~\ref{deepl: lemma1}, it is noted that
\begin{eqnarray}
\begin{bmatrix}
\xib_{t} \\
\xib_{t-1} 
\end{bmatrix}
= 
\mathbf{M}
\begin{bmatrix}
\xib_{t-1} \\
\xib_{t-2} 
\end{bmatrix}
+
\begin{bmatrix}
\varphib_{t-1} \\ 0_{d_y n}
\end{bmatrix}, \nonumber
\end{eqnarray}
where $\G = \begin{bmatrix}
(1+\beta)(\I_{d_y n} - \eta \H_0^{lin}) & \beta ( -\I_{d_y n} + \eta \H_{0}^{lin}  )   \\
\I_{d_y n} & \0_{d_y n} 
\end{bmatrix}$.
By recursively using above equation, it has
\begin{eqnarray}
\label{deepl: theorem_1}
	\begin{bmatrix}
\xib_{t} \\
\xib_{t-1} 
\end{bmatrix}	= \G^t \begin{bmatrix}
\xib_{0} \\
\xib_{-1} 
\end{bmatrix} + \sum_{s=0}^{t-1} \G^{t-s-1} \begin{bmatrix}
\varphib_s \\
0_{d_y n}
\end{bmatrix}.
\end{eqnarray}
From Lemma~\ref{supportlemma1} and Lemma~\ref{supportlemma2}, it has the bound for the first term on the right hand side of (\ref{deepl: theorem_1}) as
\begin{eqnarray}
\label{deepl: theorem_2}
\left\|\G^t \begin{bmatrix}
\xib_{0} \\
\xib_{-1} 
\end{bmatrix}  \right\| \leq 12\sqrt{\kappa}\rho^t  \left\|\begin{bmatrix}
\xib_{0} \\
\xib_{-1} 
\end{bmatrix} \right\|,
\end{eqnarray}
where $\rho = 1 - \frac{2}{3\sqrt{\kappa}}$.

Applying the inductive hypothesis and Lemma~\ref{deepl: lemma_distance}, it has the upper bound for the distance $\|\W_i^l - \W_0^l\| \leq R^{lin}$ for any $i \leq t$ and $l \in [m]$.
In turn, we can bound the second term on the right hand side of (\ref{deepl: theorem_1}) as
\begin{eqnarray} \label{deepl: theorem_3}
 \left\| \sum_{s=0}^{t-1} \G^{t-1-s} \begin{bmatrix}
\varphib_{s} \\ 0 \end{bmatrix} \right\|
& \overset{(a)}{ \leq}& 
\sum_{s=0}^{t-1} 12\sqrt{\kappa}\rho^{t-1-s}  \| \varphib_s \| \nonumber\\
& \overset{(b)}{ \leq} & \sum_{s=0}^{t-1} \rho^{t-1-s}12\sqrt{\kappa}(\frac{1}{60\sqrt{\kappa}} \theta^{2s}\| \U_0 -\Y \|_F + \frac{5}{23\sqrt{\kappa}}  \theta^s \|\U_0 -\Y\|_F)
\nonumber\\ 
& \overset{(c)}{ \leq} &
12\sqrt{\kappa}  \theta^t (\frac{\sqrt{2}}{20} + \frac{15\sqrt{2}}{23})\left\| \begin{bmatrix} \xib_0 \\ \xib_{-1} \end{bmatrix} \right\|\nonumber\\
& \overset{}{ \leq}&
12\sqrt{\kappa}  \theta^{t}    \left\| \begin{bmatrix} \xib_0 \\ \xib_{-1} \end{bmatrix} \right\|,
\end{eqnarray}
where (a) uses Lemma~\ref{supportlemma1} and Lemma~\ref{supportlemma2},
(b) uses the bound of $\|\varphib_{s}\|$ in Lemma~\ref{deepl: lemma2},
 (c) uses $\sum_{s=0}^{t-1} \rho^{t-1-s} \theta^s = \theta^{t-1} \sum_{s=0}^{t-1} \left( \frac{\rho}{\theta}  \right)^{t-1-s} \leq \theta^{t-1} \frac{1-(\rho/\theta)^t}{1-\rho/\theta} \leq 6\sqrt{\kappa}\theta^t $ and $\|\xib_{-1}\| = \|\xib_0\|$.
 
Combining (\ref{deepl: theorem_2}) and (\ref{deepl: theorem_3}), it completes the proof.

\end{proof}
\clearpage

\section{Deep linear Resnet} \label{resnet}

\subsection{Proof of Lemma~\ref{deepres: lemm1}}
\begin{proof}
According to the update rule of NAG,
it has
\begin{equation} \label{deepres:multiply_params}
\tilde{\W}^{L:1}_{t+1}  = \Pi_{l=1}^L \left( \tilde{\W}^{l)}_{t} + \M_t^l \right)
=   \tilde{\W}^{L:1}_{t}  + \sum_{l=1}^L  \tilde{\W}^{L:l+1}_{t}  \M_t^l   \tilde{\W}^{l-1:1}_{t}  + \Phi_t,
\end{equation}
where 
$\Phi_t$ contains all the high-order multiplication of momentum terms, i.e. second-order $\M_{t, i}\M_{t, j}$ for $\forall i \neq j$ and higher terms.
 Based on the equivalent update expression of NAG and 
$ \M_t^l = - \eta \frac{ \partial \ell(\W^{L:1}_t)}{ \partial \W^{l}_t } - \eta\beta(\frac{ \partial \ell(\W^{L:1}_t)}{ \partial \W^{l}_t } - \frac{ \partial \ell(\W^{L:1}_{t-1})}{ \partial \W^{l}_{t-1} })+
\beta ( \W^{l}_t - \W^{l}_{t-1} )$
we can rewrite (\ref{deepres:multiply_params}) as
\begin{equation}
\begin{aligned}
\tilde{\W}^{L:1}_{t+1} 
& = \tilde{\W}^{L:1}_t - \eta(1+\beta) \sum_{l=1}^L \tilde{\W}^{L:l+1}_t \frac{ \partial \ell(\W^{L:1}_t)}{ \partial \W^{l}_t } \tilde{\W}^{l-1:1}_t +  \eta \beta \sum_{l=1}^L \tilde{\W}^{L:l+1}_t \frac{ \partial \ell(\W^{L:1}_{t-1})}{ \partial \W^{l}_{t-1} } \tilde{\W}^{l-1:1}_t \nonumber \\
&+ \sum_{l=1}^L \tilde{\W}^{L:l+1}_t \beta ( \W^{l}_t - \W^{l}_{t-1} ) \tilde{\W}^{l-1:1}_t + \Phi_t
\\ & = \tilde{\W}^{L:1}_t - \eta(1+\beta) \sum_{l=1}^L \tilde{\W}^{L:l+1}_t \frac{ \partial \ell(\W^{L:1}_t)}{ \partial \W^{l}_t } \tilde{\W}^{l-1:1}_t + \beta ( \tilde{\W}^{L:1}_t - \tilde{\W}^{L:1}_{t-1} ) \\
&+ \eta\beta \sum_{l=1}^L \tilde{\W}^{L:l+1}_{t-1} \frac{ \partial \ell(\W^{L:1}_{t-1})}{ \partial \W^{l}_{t-1} } \tilde{\W}^{l-1:1}_{t-1}
+ (L-1) \beta \tilde{\W}^{L:1}_{t} + \beta  \tilde{\W}^{L:1}_{t-1}
- \beta \sum_{l=1}^L \tilde{\W}^{L:l+1}_t \tilde{\W}^{l}_{t-1} \tilde{\W}^{l-1:1}_{t} \nonumber\\ 
&+ \eta\beta(\sum_{l=1}^L \tilde{\W}^{L:l+1}_{t} \frac{ \partial \ell(\W^{L:1}_{t-1})}{ \partial \W^{l}_{t-1} } \tilde{\W}^{l-1:1}_{t} - \sum_{l=1}^L \tilde{\W}^{L:l+1}_{t-1} \frac{ \partial \ell(\W^{L:1}_{t-1})}{ \partial \W^{l}_{t-1} } \tilde{\W}^{l-1:1}_{t-1}) + \Phi_t. 
\end{aligned}
\end{equation}
Left multiplying the above equality with $\B$ and right with $\A\X$, we get
\begin{eqnarray}
\label{deepres: output}
\U_{t+1} & =& \U_t - \eta(1+\beta) \sum_{l=1}^L \B\tilde{\W}^{L:l+1}_t  (\B\tilde{\W}^{L:l+1}_t)^\top
( \U_t -\Y )   (\tilde{\W}^{l-1:1}_t \A\X)^\top  \tilde{\W}^{l-1:1}_t \A\X \nonumber\\
& +& \beta  (\U_t - \U_{t-1})  +  \eta\beta \sum_{l=1}^L \B\tilde{\W}^{L:l+1}_{t-1}  (\B\tilde{\W}^{L:l+1}_{t-1})^\top
( \U_{t-1} -\Y )   (\tilde{\W}^{l-1:1}_{t-1} \A\X)^\top  \tilde{\W}^{l-1:1}_{t-1} \A\X  \nonumber\\
& +& \B \left( (L-1) \beta \tilde{\W}^{L:1}_{t} + \beta  \tilde{\W}^{L:1}_{t-1}
- \beta \sum_{l=1}^L \tilde{\W}^{L:l+1}_t \tilde{\W}^{l}_{t-1} \tilde{\W}^{l-1:1}_{t} \right) \A\X
+ \B\Phi_t \A\X \nonumber \\
& +& \eta\beta \B \left(\sum_{l=1}^L \tilde{\W}^{L:l+1}_{t} \frac{ \partial \ell(\W^{L:1}_{t-1})}{ \partial \W^{l}_{t-1} } \tilde{\W}^{l-1:1}_{t} - \sum_{l=1}^L \tilde{\W}^{L:l+1}_{t-1} \frac{ \partial \ell(\W^{L:1}_{t-1})}{ \partial \W^{l}_{t-1} } \tilde{\W}^{l-1:1}_{t-1} \right)\A\X.
\end{eqnarray}
With $\text{vec}(\A\bm{C}\B) = (\B^\top \otimes \A) \text{vec}(\bm{C})$, (\ref{deepres: output}) can be vectorized as
\begin{eqnarray} \label{deepres:residual_error}
&&\v(\U_{t+1}) - \v(\U_t)\nonumber\\
&  =& - \eta(1+\beta) \H_t \v( \U_t -\Y ) +
\beta  \left( \v(\U_{t}) - \v(\U_{t-1})  \right) + \eta\beta \H_{t-1}\v(\U_{t-1} -\Y)
\nonumber\\ &  + &
\v\left(  \B \left( (L-1) \beta \tilde{\W}^{L:1}_{t} + \beta  \tilde{\W}^{L:1}_{t-1}
- \beta \sum_{l=1}^L \tilde{\W}^{L:l+1}_t \tilde{\W}^{l}_{t-1} \tilde{\W}^{l-1:1}_{t} \right) \A\X \right)
\nonumber\\ &  + &\v\left(\eta\beta \B \left(\sum_{l=1}^L \tilde{\W}^{L:l+1}_{t} \frac{ \partial \ell(\W^{L:1}_{t-1})}{ \partial \W^{l}_{t-1} } \tilde{\W}^{l-1:1}_{t} - \sum_{l=1}^L \tilde{\W}^{L:l+1}_{t-1} \frac{ \partial \ell(\W^{L:1}_{t-1})}{ \partial \W^{l}_{t-1} } \tilde{\W}^{l-1:1}_{t-1} \right) \A\X\right)
\nonumber\\&  + &
 \v( \B\Phi_t \A\X),
\end{eqnarray}
where 
\begin{equation}
\H_t^{res} =  \sum_{l=1}^L \left[ \left( (\tilde{\W}^{l-1:1}_t \A\X)^\top (\tilde{\W}^{l-1:1}_t \A\X)
 \right)  \otimes \left( \B\tilde{\W}^{L:l+1}_t (\B\tilde{\W}^{L:l+1}_t)^\top \right)   \right].
\end{equation}

Then (\ref{deepres:residual_error}) can be rewritten as
\begin{equation} \label{eq:residual_error2}
\begin{split}
\begin{bmatrix}
\xib_{t+1} \\
\xib_{t} 
\end{bmatrix}
& = 
\begin{bmatrix}
(1+\beta)(\I_{d_y n} - \eta \H_t^{res}) & \beta ( -\I_{d_y n} + \eta \H_{t-1}^{res}  ) \\
\I_{d_y n} & \0_{d_y n} 
\end{bmatrix}
\begin{bmatrix}
\xib_{t} \\
\xib_{t-1} 
\end{bmatrix}
+
\begin{bmatrix}
\phib_t + \psib_t \\ \0_{d_y n}
\end{bmatrix}
\\ & = 
\begin{bmatrix}
(1+\beta)(\I_{d_y n} - \eta \H_0^{res}) & \beta ( -\I_{d_y n} + \eta \H_{0}^{res}  )   \\
\I_{d_y n} & \0_{d_y n} 
\end{bmatrix}
\begin{bmatrix}
\xib_{t} \\
\xib_{t-1} 
\end{bmatrix}
+
\begin{bmatrix}
\varphib_t \\ \0_{d_y n}
\end{bmatrix}
,
\end{split}
\end{equation}
where $\varphib_t = \phib_t + \psib_t + \iotab_t \in \reals^{d_y n}$.

\end{proof}

\begin{lemma} \label{lem:DLresnet_init}{(Proposition 3.3 in~\cite{DBLP:conf/iclr/ZouLG20})}
 By the initialization as shown in Section, with $m \geq C(d_{x}+d_y+\log(1/\delta))$ for some constant $C$, with probability at least $1-\delta$, it has
\begin{eqnarray}
 0.9\alpha\sqrt{m}\leq \sigma_{min}(\A) \leq \sigma_{max}(\A) \leq 1.1\alpha\sqrt{m} \!\!&,&\!\!  0.9\beta\sqrt{m}\leq \sigma_{min}(\B) \leq \sigma_{max}(\B) \leq 1.1\beta\sqrt{m} \nonumber\\
 \lambda_{\min}(\H_0^{res})  \geq  (0.9)^4 L \alpha^2\gamma^2 m^2 \sigma^2_{min}(\X) \!\!&,& \!\!
   \lambda_{\max}(\H_0^{res})  \leq (1.1)^4 L \alpha^2\gamma^2 m^2 \sigma^2_{\max}(\X) ,\nonumber\\ 
   \kappa(\H_0^{res}) \leq \frac{1.1^4\sigma^2_{max}(\X)}{0.9^4\sigma^2_{min}(\X)} \quad,\quad \ell(\W_0) \leq B_0^2 \!\!\!&=&\!\!\! (6.05 \alpha^2\gamma^2 d_y m \log(2n/\delta) + \|\W^*\|^2)\|\X\|_F^2 . \nonumber
\end{eqnarray}
\end{lemma}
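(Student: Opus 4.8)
The crucial simplification is that the hidden layers are initialized at zero, so $\tilde{\W}^l_0 = \I + \W^l_0 = \I$ for every $l\in[L]$, whence $\tilde{\W}^{l-1:1}_0 = \tilde{\W}^{L:l+1}_0 = \I$. Substituting into the definition of $\H_t^{res}$ from Lemma~\ref{deepres: lemm1} collapses all $L$ summands to a common term, giving the exact identity
\[
\H_0^{res} = L\,\big((\A\X)^\top(\A\X)\big)\otimes\big(\B\B^\top\big).
\]
Likewise the network output at initialization is $\U_0 = \B\tilde{\W}^{L:1}_0\A\X = \B\A\X$. Thus the whole lemma reduces to two independent ingredients: (i) controlling the extreme singular values of the Gaussian matrices $\A$ and $\B$, and (ii) a tail bound for the random quadratic form $\|\B\A\X\|_F^2$.

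For ingredient (i) I would invoke the standard non-asymptotic bound for Gaussian matrices (Davidson--Szarek / Vershynin): for $\A\in\reals^{m\times d_x}$ with i.i.d. $\mathcal{N}(0,\alpha^2)$ entries and $m\ge d_x$, with probability at least $1-2e^{-t^2/2}$ one has $\alpha(\sqrt m-\sqrt{d_x}-t)\le\sigma_{\min}(\A)\le\sigma_{\max}(\A)\le\alpha(\sqrt m+\sqrt{d_x}+t)$, and analogously for $\B$ with $d_x$ replaced by $d_y$. Choosing $t=\Theta(\sqrt{\log(1/\delta)})$ and imposing the hypothesis $m\ge C(d_x+d_y+\log(1/\delta))$ forces both $\sqrt{d_x}+t$ and $\sqrt{d_y}+t$ to be at most $0.1\sqrt m$, which yields the stated $0.9\alpha\sqrt m,\,1.1\alpha\sqrt m$ and $0.9\gamma\sqrt m,\,1.1\gamma\sqrt m$ envelopes after a union bound over the two matrices.

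Ingredient (i) for the eigenvalues is then immediate from the Kronecker structure: the spectrum of $A\otimes B$ consists of the products of the eigenvalues, so $\lambda_{\max}(\H_0^{res})=L\,\sigma_{\max}^2(\A\X)\,\sigma_{\max}^2(\B)$ and $\lambda_{\min}(\H_0^{res})=L\,\sigma_{\min}^2(\A\X)\,\sigma_{\min}^2(\B)$. Since $\A$ has full column rank on the range of $\X$ (as $m\ge d_x\ge r$), I would use $\sigma_{\min}(\A\X)\ge\sigma_{\min}(\A)\sigma_{\min}(\X)$ and $\sigma_{\max}(\A\X)\le\sigma_{\max}(\A)\sigma_{\max}(\X)$; inserting the $0.9/1.1$ envelopes gives $\lambda_{\min}\ge 0.9^4L\alpha^2\gamma^2 m^2\sigma_{\min}^2(\X)$ and $\lambda_{\max}\le 1.1^4L\alpha^2\gamma^2 m^2\sigma_{\max}^2(\X)$, and the condition-number bound $\kappa(\H_0^{res})\le 1.1^4\sigma_{\max}^2(\X)/(0.9^4\sigma_{\min}^2(\X))$ follows by taking the ratio, the common factor $L\alpha^2\gamma^2 m^2$ cancelling.

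\textbf{Main obstacle.} The hardest part is the initial-loss bound. Writing $\ell(\W_0)=\tfrac12\|\U_0-\Y\|_F^2=\tfrac12\|(\B\A-\W^*)\X\|_F^2\le\|\B\A\X\|_F^2+\|\W^*\X\|_F^2$, the deterministic term is controlled by $\|\W^*\X\|_F^2\le\|\W^*\|^2\|\X\|_F^2$. The random term $\|\B\A\X\|_F^2=\sum_{i=1}^n\|\B\A\x_i\|^2$ is a product-of-Gaussians quadratic form with mean $\alpha^2\gamma^2 m d_y\|\X\|_F^2$, and the difficulty is to obtain a sharp tail bound matching the advertised constant $6.05$ and the $\log(2n/\delta)$ factor. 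I would treat it columnwise by conditioning: for fixed $\x_i$, $\A\x_i\sim\mathcal{N}(0,\alpha^2\|\x_i\|^2\I_m)$ so $\|\A\x_i\|^2$ concentrates around $\alpha^2 m\|\x_i\|^2$, and conditionally on $\A\x_i$ the quantity $\|\B(\A\x_i)\|^2$ is a $\gamma^2$-scaled chi-square with $d_y$ degrees of freedom. Composing the two sub-exponential tails (Bernstein / Hanson--Wright) and applying a union bound over the $n$ columns produces the $\log(2n/\delta)$ dependence, with the numerical constant absorbing the deviation multipliers. This two-level concentration argument, rather than any part of the spectral analysis, is where the real work lies.
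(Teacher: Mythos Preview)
The paper does not supply its own proof of this lemma; it is quoted verbatim as Proposition~3.3 from~\cite{DBLP:conf/iclr/ZouLG20} and used as a black box. So there is nothing in the present paper to compare against, and your task is really to reconstruct the argument from the cited source.

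That said, your reconstruction is essentially correct and matches the standard route. The key observation that zero initialization collapses $\H_0^{res}$ to $L\big((\A\X)^\top(\A\X)\big)\otimes(\B\B^\top)$ is exactly right, and the Davidson--Szarek bound plus the Kronecker spectrum identity handle the eigenvalue claims cleanly. One small slip: the inequality $\|(\B\A-\W^*)\X\|_F^2\le\|\B\A\X\|_F^2+\|\W^*\X\|_F^2$ is off by a factor of $2$ (you need $(a+b)^2\le 2a^2+2b^2$), but this is absorbed into the constants and does not affect the shape of the bound. Also note that the paper overloads $\beta$ in the statement of this lemma (it is the variance parameter of $\B$, elsewhere written $\gamma$, not the momentum parameter), which you correctly resolved. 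Your identification of the initial-loss concentration as the only nontrivial step is accurate; the two-stage chi-square argument you sketch is the natural way to get the $\log(2n/\delta)$ factor via a union bound over columns.
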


\subsection{Proof of Lemma~\ref{deepres: lemm2}}

\begin{proof}
By Lemma~\ref{deepres: lemm1},
it has $\varphib_t = \phib_t + \psib_t  + \iotab_t \in \reals^{d_y n}$, where
\begin{eqnarray}
 \phib_t & = \v( \B\Phi_t \A\X)
\text{ , with } 
\Phi_t    = \Pi_l ( \tilde{\W}^{l}_t + \M_t^l )
- \tilde{\W}^{L:1}_t  - \sum_{l=1}^L \tilde{\W}^{L:l+1}_t \M_t^l \tilde{\W}^{l-1:1}_t, \nonumber
\end{eqnarray}
and
\begin{eqnarray}
 \psib_t&=& 
\v\left(  \B ( (L-1) \beta \tilde{\W}^{L:1}_{t} + \beta  \tilde{\W}^{L:1}_{t-1}
- \beta \sum_{l=1}^L \tilde{\W}^{L:l+1}_t \tilde{\W}^{l}_{t-1} \tilde{\W}^{l-1:1}_{t} ) \A\X \right) \nonumber\\
& +& \v\left(\eta\beta \B (\sum_{l=1}^L \tilde{\W}^{L:l+1}_{t} \frac{ \partial \ell(\W^{L:1}_{t-1})}{ \partial \W^{l}_{t-1} } \tilde{\W}^{l-1:1}_{t} - \sum_{l=1}^L \tilde{\W}^{L:l+1}_{t-1} \frac{ \partial \ell(\W^{L:1}_{t-1})}{ \partial \W^{l}_{t-1} } \tilde{\W}^{l-1:1}_{t-1}) \A\X\right). \nonumber
\end{eqnarray}
and
\begin{eqnarray}
& \iotab_t=  -\eta(1+\beta) (\H_t^{res} - \H_0^{res}) \xib_t + \eta\beta(\H_{t-1}^{res} - \H_0^{res})\xib_{t-1}. \nonumber
\end{eqnarray}

If we can bound $\| \phib_t \|$, $\| \psib_t \|$, and $\| \iotab_t\|$ respectively, then the bound of $\| \varphib_t \| $ can be derived by the triangle inequality.
\begin{equation} \label{eq:var}
\| \varphib_t \| \leq \| \phib_t \| + \|\psib_t \| + \| \iotab_t \|. 
\end{equation}

Let us first provide the bound of $\| \phib_t \|$.
Note that $\Phi_t$ is the sum of all the high-order momentum terms in the product,
\begin{equation}
\tilde{\W}^{L:1}_{t+1} = \Pi_l \left( \tilde{\W}^{l}_t + \M_t^l \right)
= \tilde{\W}^{L:1}_t + \sum_{l=1}^L \tilde{\W}^{L:l+1}_t \M_t^l \tilde{\W}^{l-1:1} + \Phi_t.
\end{equation}
Using the inductive hypothesis, we can bound the gradient norm of each layer as
\begin{equation} \label{eq:DLresnetnorm-linear}
\begin{split}
 \| \frac{ \partial \ell(\W^{L:1}_s)}{ \partial \W^{l}_s } \|_F &= \|(\B\tilde{\W}_s^{L:l+1})^{\top}(\U_s -\Y)(\tilde{\W}_s^{l-1:1}\A\X)^{\top}\|_F\\
 &
\overset{(a)}{\leq} (1+R^{res})^{L-1}\|\A\|\|\B\|\|\X\|\|U_s -\Y\|_F \\
& \overset{(b)}{\leq} 36\|\A\|\|\B\|\|\X\|\sqrt{\kappa}  \theta^s  \|\U_0 -\Y\|_F,\\
\end{split}
\end{equation}
where (a) uses $\|\tilde{\W}_s^i\| = \|\I + \W_s^i\| \leq 1 + \|\W_s^i\| \leq 1 + \|\W_s^i\| \leq 1 + R^{res}$ for any $s \leq t$, 
(b) uses the induction hypothesis and $(1+R^{res})^{L-1} \leq (1+R^{res})^L \leq exp(1/(2000\kappa)) \leq 1 + \frac{e-1}{2000\kappa} \leq 1.001$.

Thus the momentum term of each layer can be bounded as
\begin{eqnarray}
\label{eq:Dlresnet_M}
	\|\M_t^l\| &=& 
\|-\eta\sum_{s=0}^t \beta^{t-s}\{ \frac{ \partial \ell(\W^{L:1}_s)}{ \partial \W^{l}_s } + \beta(\frac{ \partial \ell(\W^{L:1}_s)}{ \partial \W^{l}_s } - \frac{ \partial \ell(\W^{L:1}_{s-1})}{ \partial \W^{l}_{s-1} })\} \| \nonumber\\
&\leq& \eta(1+\beta)\sum_{s=0}^t\|\beta^{t-s} \frac{ \partial \ell(\W^{L:1}_s)}{ \partial \W^{l}_s }\| + \eta\beta \sum_{s=0}^t\|\beta^{t-s} \frac{ \partial \ell(\W^{L:1}_{s-1})}{ \partial \W^{l}_{s-1} }\| \nonumber\\
&\leq& 36\|\A\|\|\B\|\|\X\|\sqrt{\kappa} \eta \| \U_0 -\Y \|_F \left((1+\beta)\sum_{s=0}^t \beta^{t-s} \theta^s  + \beta \sum_{s=0}^t \beta^{t-s} \theta^{s-1} \right) \nonumber\\
&\overset{(a)}{\leq}& 
36\|\A\|\|\B\|\|\X\|\sqrt{\kappa} \eta \| \U_0 -\Y \|_F(1+\beta+\theta)\frac{\theta^t(1-\theta^{t+1})}{1-\theta} \nonumber\\
&\overset{(b)}{\leq}& 108\|\A\|\|\B\|\|\X\|\sqrt{\kappa} \eta\| \U_0 -\Y \|_F \frac{\theta^t}{1 - \theta},
\end{eqnarray}
where (a) uses $\beta \leq \theta^2$, (b) uses $\beta, \theta \leq 1$.

Combining all these pieces together, we can bound 
$\| \B \Phi_t \A\X \|_F$ as
\begin{eqnarray}  
&& \| \B \Phi_t \A\X \|_F
\nonumber\\ &
\overset{(a)}{\leq}& \|\B\| \sum_{j=2}^L {L \choose j} 
\left( 108\|\A\|\|\B\|\|\X\| \sqrt{\kappa} \eta \| \U_0 -\Y \|_F \frac{\theta^t}{1 - \theta} \right)^j (1+R^{res})^{L-j}\|\A\|\| \X \|
\nonumber\\ &
\overset{(b)}{\leq}&  \sum_{j=2}^L L^j 
\left( \eta
108\|\A\| \|\B\| \|\X\| \sqrt{\kappa} \frac{ \theta^{t} }{1 -\theta}  \| \U_0 -\Y \|_F\right)^j  (1+R^{res})^{L-j} \|\A\|\|\B\|\| \X \|
\nonumber\\ &
\leq & 1.001\|\A\|\|\B\| \| \X \| \sum_{j=2}^L  
\left(L\frac{108\|\A\| \|\B\| \|\X\|\sqrt{\kappa}}{1+R^{res}} \eta
  \frac{ \theta^{t} }{1 -\theta}  \| \U_0 -\Y \|_F \right)^j, \nonumber  
\end{eqnarray}
where (a) uses (\ref{eq:Dlresnet_M}) and $\|\tilde{\W}_t^i\| \leq 1 + R^{res}$
for bounding a $j \geq 2$ higher-order terms like
$\beta \tilde{\W}^{L:k_j+1}_{t} \cdot  \M_{t}^{k_j} \tilde{\W}^{k_j-1:k_{j-1}+1}_{t} 
\cdot \M_{t}^{k_{j-1}} \cdots   \M_{t}^{k_{1}} \cdot \tilde{\W}^{k_1-1:1}_{t} $, where $1 \leq k_1 < \cdots < k_j \leq L$
 and (b) uses that ${L \choose j  } \leq \frac{L^j}{j!} $.

Then we turn to bound 
$L\frac{108\|\A\| \|\B\| \|\X\|\sqrt{\kappa}}{1+R^{res}} \eta
  \frac{ \theta^{t} }{1 -\theta}  \| \U_0 -\Y \|_F$ in the sum above, it has
\begin{eqnarray}
\label{resnets: inner}
L\frac{108\|\A\| \|\B\| \|\X\|\sqrt{\kappa}}{1+R} \eta
  \frac{ \theta^{t} }{1 -\theta}  \| \U_0 -\Y \|_F
 &
\overset{(a)}{\leq}& \frac{145}{1+R}\frac{L\|\A\| \|\B\| \|\X\|\sqrt{\eta \kappa}}{\sqrt{\lambda_{min}}}\|\U_0 -\Y\|_F \nonumber\\
&\leq& \frac{127\sqrt{\kappa}}{(1+R) \alpha\gamma m \sigma_{min}(\X)}\|\U_0 -\Y\|_F \leq 0.5,\nonumber\\  
\end{eqnarray}
where (a) uses $\theta = 1 - \frac{1}{2\sqrt{\kappa}} \leq 1 - \sqrt{\frac{\eta\lambda_{min}}{2}}$, (b) uses $\eta=\frac{1}{2L\|\A\|^2\|\B\|^2\|\X\|^2}$ and $\lambda_{min} = (0.9)^4 L \alpha^2\gamma^2 m^2 \sigma^2_{min}(\X)$, (c) uses Lemma~\ref{lem:DLresnet_init} and $m \geq C \cdot \max\{\frac{d_y\kappa \log(2n/\delta)\|\X\|_F^2}{\sigma_{min}^2(\X)}, \frac{\sqrt{\kappa}\|\W^*\| \|\X\|_F}{\alpha\gamma \sigma_{min}(\X)}\}$ for a sufficient large constant $C >0$.
Combining the above results, we have 
\begin{eqnarray} \label{resnets:phi}
\| \phib_t \| &  =& \| \B   \Phi_t \A\X \|_F
\nonumber\\& \overset{(a)}{\leq}&  1.001 \|\A\|\|\B\| \| \X \| 
\left( \eta L
\frac{108\|\A\| \|\B\| \|\X\|\sqrt{\kappa}}{1+R}  \frac{ \theta^{t} }{1 -\theta}   \| \U_0 -\Y \|_F \right)^2  
\sum_{j=2}^{L} 
\left(  0.5 \right)^{j-2}    
\nonumber\\ & \overset{(b)}{\leq}&  \frac{11676\kappa^2}{ \|\A\| \|\B\| \|\X\|}  
\left( 
 { \theta^{t} }  \| \U_0 -\Y \|_F \right)^2 \nonumber \\  
& \overset{(c)}{\leq}&   \frac{1}{180\sqrt{\kappa}} \theta^{2t} \| \U_0 -\Y \|_F,
\end{eqnarray}
where (a) uses (\ref{resnets: inner}), (b) uses $\eta = \frac{1}{2L\|\A\|^2\|\B\|^2\|\X\|^2}$ and $\theta = 1 - \frac{1}{2\sqrt{\kappa}}$, (c) uses Lemma~~\ref{lem:DLresnet_init} and $m \geq C \cdot \max\{\frac{d_y\kappa^4 \log(2n/\delta)\|\X\|_F^2}{\sigma_{min}^2(\X)}, \frac{\kappa^{2}\|\W^*\| \|\X\|_F}{\alpha\gamma \sigma_{min}(\X)}\}$ for a sufficiently large constant $C >0$.

Then we turn to analyze the bound of $\| \psib_t \|$.
We need to derive the Frobenius norm of
$\B \left( (L-1) \beta \tilde{\W}^{L:1}_{t} + \beta  \tilde{\W}^{L:1}_{t-1}
- \beta \sum_{l=1}^L \tilde{\W}^{L:l+1}_t \tilde{\W}^{l}_{t-1} \tilde{\W}^{l-1:1}_{t} \right) \A\X$ and\\
 $ \eta\beta \B (\sum_{l=1}^L \tilde{\W}^{L:l+1}_{t} \frac{ \partial \ell(\W^{L:1}_{t-1})}{ \partial \W^{l}_{t-1} } \tilde{\W}^{l-1:1}_{t} - \sum_{l=1}^L \tilde{\W}^{L:l+1}_{t-1} \frac{ \partial \ell(\W^{L:1}_{t-1})}{ \partial \W^{l}_{t-1} } \tilde{\W}^{l-1:1}_{t-1}) \A\X$.
The first term can be rewritten as
\begin{eqnarray} \label{eq:Dlresnetimport}
&& \underbrace{\B   \beta (L-1)  \cdot \Pi_{l=1}^L \left( \tilde{\W}^{l}_{t-1} + \M_{t-1,l} \right) \A\X }_{\text{first term} } + \underbrace{ B\beta  \tilde{\W}^{L:1}_{t-1} \A\X }_{\text{second term}} 
\nonumber\\&&\underbrace{
- \B \beta \sum_{l=1}^L \Pi_{i=l+1}^L \left( \tilde{\W}^{i}_{t-1} + \M_{t-1,i} \right)  \tilde{\W}^{l}_{t-1} \Pi_{j=1}^{l-1} \left( \tilde{\W}^{j}_{t-1} + \M_{t-1,j} \right)  \A\X }_{\text{third term}},\nonumber
\end{eqnarray}
which can be further rewritten as $\E_0 +  \E_1 +  \E_2 + \dots +  \E_L$ for some matrices $\E_0,\dots, \E_L \in \reals^{d_y \times n}$, where $\E_i$ is composed of the multiplication of $i$ momentum terms.
Specifically, we have
\begin{equation}
\begin{split}
\E_0 & = \underbrace{ \B(L-1) \beta \tilde{\W}^{L:1}_{t-1} \A\X }_{ \text{due to the first term} }+ 
 \underbrace{  \B\beta  \tilde{\W}^{L:1}_{t-1} \A\X }_{ \text{due to the second term} }
 \underbrace{
- \B\beta L \tilde{\W}^{L:1}_{t-1} \A\X }_{ \text{due to the third term} } = 0
\\
\E_1 & = \underbrace{ - \B(L-1) \beta \sum_{l=1}^L \tilde{\W}^{L:l+1}_{t-1}  \M_{t-1}^{l} \tilde{\W}^{l-1:1}_{t-1} \A\X }_{ \text{due to the first term} }
+ \underbrace{ \B\beta \sum_{l=1}^L \sum_{k \neq l}
\tilde{\W}^{L:k+1}_{t-1}  \M_{t-1}^{k} \tilde{\W}^{k-1:1}_{t-1} \A\X}_{ \text{due to the third term} }
= 0.
\end{split}
\end{equation}
So what remains on (\ref{eq:Dlresnetimport}) are all the higher-order momentum terms, i.e. those with $ \M_{t-1}^{i}$ and  $ \M_{t-1}^{j}$, $\forall i \neq j$ or higher.

To continue, observe that for a fixed $(i,j)$, $i < j$, 
the second-order term $\E_2$ that involves $ \M_{t-1}^{i}$ and $\M_{t-1}^{j}$ on 
(\ref{eq:Dlresnetimport}) is with coefficient
$\beta$,
because the first term on (\ref{eq:Dlresnetimport})
contributes to
$(L-1) \beta $, while the third term on (\ref{eq:Dlresnetimport})
contributes to
$-(L-2) \beta$. 
Furthermore,
for a fixed $(i,j,k)$, $i < j < k$, 
the third-order term that involves $\M_{t-1}^{i}$, $\M_{t-1}^{j}$, and 
$ \M_{t-1}^{k}$
on (\ref{eq:Dlresnetimport}) is with coefficient $-2 \beta$,
as the first term on (\ref{eq:Dlresnetimport})
contributes to
$(L-1) \beta $, while the third term on (\ref{eq:Dlresnetimport})
contributes to
$-(L-3) \beta$. 
Similarly, for a $p$-order term, the coefficient is 
$-(p-1) \beta $.

Combining all the pieces together, we have
\begin{eqnarray} \label{eq:qqq1}
&&\!\!\!\!\!\!\!\!  \| \B \left( (L-1) \beta \tilde{\W}^{L:1}_{t} + \beta  \tilde{\W}^{L:1}_{t-1}
- \beta \sum_{l=1}^L \tilde{\W}^{L:l+1}_t \tilde{\W}^{l}_{t-1} \tilde{\W}^{l-1:1}_{t} \right) \A\X \|_F
\nonumber\\ &
\overset{(a)}{\leq}& \!\!\!\!\!\!  \beta\|\A\|\|\B\| \|\X\| \sum_{j=2}^L \left(j-1\right) {L \choose j} 
\left( \eta 
108\|\A\| \|\B\| \|\X\|\sqrt{\kappa}  \frac{ \theta^{t-1} }{1 -\theta}     \| \U_0 -\Y \|_F  \right)^j (1+R^{res})^{L-j} 
\nonumber\\ &
\overset{(b)}{\leq}& \beta\|\A\|\|\B\| \|\X\| \sum_{j=2}^L L^j 
\left( \eta
108\|\A\| \|\B\| \|\X\|\sqrt{\kappa}  \frac{ \theta^{t-1} }{1 -\theta}     \| \U_0 -\Y \|_F\right)^j  (1+R^{res})^{L-j}
\nonumber\\ &
\leq& 1.001\beta\|\A\|\|\B\| \|\X\| \sum_{j=2}^L  
\left( 
\frac{L\eta 108\|\A\| \|\B\| \|\X\| \sqrt{\kappa}}{1+R}  \frac{ \theta^{t-1} }{1 -\theta}  \| \U_0 -\Y \|_F\right)^j,   
\end{eqnarray}
where (a) uses (\ref{eq:Dlresnet_M}) and   higher-order terms for any $j\geq 2$ have the form as
$\B\beta (j-1) (-1)^{j} \tilde{\W}^{L:k_j+1}_{t-1} \cdot  \M_{t-1}^{k_j} \tilde{\W}^{k_j-1:k_{j-1}+1}_{t-1} 
\cdot \M_{t-1}^{k_{j-1}}\cdots   \M_{t-1}^{k_{1}} \cdot \tilde{\W}^{k_1-1:1}_{t-1} \A\X$, where $1 \leq k_1 < \cdots < k_j \leq L$
 and (b) uses that ${L \choose j  } \leq \frac{L^j}{j!} $

For the term
$\frac{\eta L 108\|\A\| \|\B\| \|\X\| \sqrt{\kappa}}{1+R^{res}}  \frac{ \theta^{t-1} }{1 -\theta}    \| \U_0 -\Y \|_F$ in the sum above, it follows a similar analysis as (\ref{resnets: inner}) to derive its bound as
\begin{equation} \label{resnets: inner2}
\begin{aligned}
\frac{\eta L 108\|\A\| \|\B\| \|\X\|\sqrt{\kappa}}{1+R^{res}}  \frac{ \theta^{t-1} }{1 -\theta}     \| \U_0 -\Y \|_F
 &\leq 0.5,  
\end{aligned}
\end{equation}
with $m \geq C \cdot \max\{\frac{d_y\kappa \log(2n/\delta)\|\X\|_F^2}{\sigma_{min}^2(\X)}, \frac{\sqrt{\kappa}\|\W^*\| \|\X\|_F}{\alpha\gamma \sigma_{min}(\X)}\}$ for a sufficent large constant $C >0$.
Combining (\ref{eq:qqq1}) and (\ref{resnets: inner2}), it has 
\begin{eqnarray} \label{eq:psipart1}
 &&\| \B \left( (L-1) \beta \tilde{\W}^{L:1}_{t} + \beta  \tilde{\W}^{L:1}_{t-1}
- \beta \sum_{l=1}^L \tilde{\W}^{L:l+1}_t \tilde{\W}^{l}_{t-1} \tilde{\W}^{l-1:1}_{t} \right) \A\X \|_F \nonumber\\ 
& \leq &\frac{11676\kappa^2}{\|\A\| \|\B\| \|\X\|}  
\left( 
  \theta^{t-1}\| \U_0 -\Y \|_F \right)^2\nonumber\\
 &
\overset{(a)}{\leq}& \frac{1}{180\sqrt{\kappa}} \theta^{2t-2}\|\U_0 -\Y\|_F 
 \overset{(b)}{\leq} \frac{1}{45\sqrt{\kappa}} \theta^{2t}\|\U_0 -\Y\|_F,
\end{eqnarray}
where (a) uses $m \geq C \cdot \max\{\frac{d_y\kappa^4 \log(2n/\delta)\|\X\|_F^2}{\sigma_{min}^2(\X)}, \frac{\kappa^{2}\|\W^*\| \|\X\|_F}{\alpha\gamma \sigma_{min}(\X)}\}$ for a sufficiently large constant $C >0$, (b) uses $\theta = 1 - \frac{1}{2\sqrt{\kappa}} \geq 1/2$.

Then we turn to bound $ \eta\beta \B \left(\sum_{l=1}^L \tilde{\W}^{L:l+1}_{t} \frac{ \partial \ell(\W^{L:1}_{t-1})}{ \partial \W^{l}_{t-1} } \tilde{\W}^{l-1:1}_{t} - \sum_{l=1}^L \tilde{\W}^{L:l+1}_{t-1} \frac{ \partial \ell(\W^{L:1}_{t-1})}{ \partial \W^{l}_{t-1} } \tilde{\W}^{l-1:1}_{t-1} \right) \A\X $, it has
\begin{eqnarray}
\label{resnets: psipart12}
&& \eta\beta \|\B \left(\sum_{l=1}^L \tilde{\W}^{L:l+1}_{t} \frac{ \partial \ell(\W^{L:1}_{t-1})}{ \partial \W^{l}_{t-1} } \tilde{\W}^{l-1:1}_{t} - \sum_{l=1}^L \tilde{\W}^{L:l+1}_{t-1} \frac{ \partial \ell(\W^{L:1}_{t-1})}{ \partial \W^{l}_{t-1} } \tilde{\W}^{l-1:1}_{t-1} \right) \A\X\|_F \nonumber\\
&\leq&   \eta\beta \sum_{l=1}^L ( \|\underbrace{\B(\tilde{\W}^{L:l+1}_{t}- \tilde{\W}^{L:l+1}_{t-1})\frac{ \partial \ell(\W^{L:1}_{t-1})}{ \partial \W^{l}_{t-1} } \tilde{\W}^{l-1:1}_{t}\A\X\|}_{\text{first term}} \nonumber \\
&& \quad \quad \quad +\underbrace{\|\B\tilde{\W}^{L:l+1}_{t-1} \frac{ \partial \ell(\W^{L:1}_{t-1})}{ \partial \W^{l}_{t-1} } ( \tilde{\W}^{l-1:1}_{t}-  \tilde{\W}^{l-1:1}_{t-1})\A\X\|}_{\text{second term}}  ).
\end{eqnarray}
For the first term of the above formulation, it has
\begin{eqnarray}
\!\!\!\!\|\B(\tilde{\W}^{L:l+1}_{t} \!-\! \tilde{\W}^{L:l+1}_{t-1})\frac{ \partial \ell(\W^{L:1}_{t-1})}{ \partial \W^{l}_{t-1} } \tilde{\W}^{l-1:1}_{t}\A\X\|_F \leq  \|\B\|\|\tilde{\W}^{L:l+1}_{t} \!-\! \tilde{\W}^{L:l+1}_{t-1}\| \|\frac{ \partial \ell(\W^{L:1}_{t-1})}{ \partial \W^{l}_{t-1} }\|_F \|\tilde{\W}^{l-1:1}_{t}\A\X\|. \nonumber
\end{eqnarray}
It is noted that
\begin{eqnarray}
\label{resnets: cum}
	\|\tilde{\W}_t^{j:i} - \tilde{\W}_{t-1}^{j:i}\| &=& \|\tilde{\W}_t^{j:i} - \tilde{\W}_{t-1}^{j:i}\| \nonumber\\
	&\leq& \|\Pi_{l=i}^j(\tilde{\W}_{t-1}^{l} + \M_{t-1, l}) - \tilde{\W}_{t-1}^{j:i}\| \nonumber\\
	&\leq& \sum_{k=1}^{j-i+1} {j-i+1 \choose k}(1+R^{res})^{j-i+1-k}(108\|\A\| \|\B\| \|\X\|\sqrt{\kappa}  \frac{ \theta^{t-1} }{1 -\theta} \eta \| \U_0 -\Y \|_F)^k \nonumber\\
	&\leq&\!\!\!\! (1+R^{res})^{j-i+1} \sum_{k=1}^{j-i+1} (108\|\A\| \|\B\| \|\X\|\sqrt{\kappa} \frac{j-i+1}{1+R^{res}} \frac{ \theta^{t-1} }{1 -\theta} \eta \| \U_0 -\Y \|_F)^k.
\end{eqnarray}
Thus, it has
\begin{eqnarray}
\label{resnets: psipart121}
	&&\eta\beta\sum_{l=1}^L \|\B(\tilde{\W}^{L:l+1}_{t}- \tilde{\W}^{L:l+1}_{t-1})\frac{ \partial \ell(\W^{L:1}_{t-1})}{ \partial \W^{l}_{t-1} } \tilde{\W}^{l-1:1}_{t}\A\X\|_F \nonumber\\
& \leq& \eta\beta \sum_{l=1}^L\|\B\|\|\tilde{\W}^{L:l+1}_{t}- \tilde{\W}^{L:l+1}_{t-1}\| \|\frac{ \partial \ell(\W^{L:1}_{t-1})}{ \partial \W^{l}_{t-1} }\|_F \|\tilde{\W}^{l-1:1}_{t}\A\X\| \nonumber\\
	&\overset{(a)}{\leq}& 36 \sqrt{\kappa}\eta \beta (\|\A\|\|\B\|\|\X\|)^2 \theta^{t-1} \|\U_0 -\Y\|_F (1+R^{res})^{L-1} \nonumber \\
	&&\cdot\sum_{l=1}^L\sum_{k=1}^{L-l}(108\|\A\| \|\B\| \|\X\|\sqrt{\kappa} \frac{L-l}{1+R^{res}} \frac{ \theta^{t-1} }{1 -\theta} \eta \| \U_0 -\Y \|_F)^k \nonumber\\
	&\overset{(b)}{\leq}& 36\sqrt{\kappa} \eta \beta (\|\A\|\|\B\|\|\X\|)^2 \theta^{t-1} \|\U_0 -\Y\|_F (1+R^{res})^{L-1} \nonumber \\
	&&\cdot 108\|\A\| \|\B\| \|\X\|\sqrt{\kappa}\frac{L}{1+R^{res}} \frac{ \theta^{t-1} }{1 -\theta} \eta \| \U_0 -\Y \|_F \sum_{l=1}^L\sum_{k=1}^{L-l}0.5^{k-1} \nonumber\\
	&\overset{(c)}{\leq}& \frac{3920\kappa^{3/2}}{\|\A\| \|\B\| \|\X\| } (\theta^{t-1} \|\U_0 -\Y\|_F)^2 \nonumber\\
	&\overset{(d)}{\leq}&   \frac{1}{360\sqrt{\kappa}} \theta^{2t-2}\|\U_0 -\Y\|_F \leq \frac{1}{90\sqrt{\kappa}} \theta^{2t}\|\U_0 -\Y\|_F,
\end{eqnarray}
where (a) uses (\ref{eq:DLresnetnorm-linear}) and (\ref{resnets: cum}), (b) uses 
 and (\ref{resnets: inner2}), 
(c) uses $\beta \leq 1$, $\eta = \frac{1}{2L\|\A\|^2\|\B\|^2\|\X\|^2}$ and $(1+R^{res})^{L-1} \leq (1+R^{res})^L \leq 1.001$, (d) uses $m \geq C \cdot \max\{\frac{d_y\kappa^3 \log(2n/\delta)\|\X\|_F^2}{\sigma_{min}^2(\X)}, \frac{\kappa^{3/2}\|\W^*\| \|\X\|_F}{\alpha\gamma \sigma_{min}(\X)}\}$

For the second part of (\ref{resnets: psipart12}), it has the same bound as
\begin{eqnarray}
\label{resnets: psipart122}
&& \eta\beta\sum_{l=1}^L \|\B\tilde{\W}^{L:l+1}_{t-1} \frac{ \partial \ell(\W^{L:1}_{t-1})}{ \partial \W^{l}_{t-1} } ( \tilde{\W}^{l-1:1}_{t}-  \tilde{\W}^{l-1:1}_{t-1})\A\X\| \nonumber \\
&\leq& 36\sqrt{\kappa} \eta \beta (\|\A\|\|\B\|\|\X\|)^2 \theta^{t-1}  \|\U_0 -\Y\|_F (1+R^{res})^{L-1} \nonumber\\
&& \cdot \sum_{l=1}^L\sum_{k=1}^{L-l}(108\|\A\| \|\B\| \|\X\|\sqrt{\kappa} \frac{L-l}{1+R^{res}} \frac{ \theta^{t-1} }{1 -\theta} \eta \| \U_0 -\Y \|_F)^k \nonumber\\
&\leq& \frac{1}{90\sqrt{\kappa}} \theta^{2t}\|\U_0 -\Y\|_F.
\end{eqnarray}

Combining (\ref{eq:psipart1}), (\ref{resnets: psipart121}) and (\ref{resnets: psipart122}), it has
\begin{equation}
\label{resnets: psibound}
\begin{aligned}
 \psib_t&= 
\v\left(  \B ( (L-1) \beta \tilde{\W}^{L:1}_{t} + \beta  \tilde{\W}^{L:1}_{t-1}
- \beta \sum_{l=1}^L \tilde{\W}^{L:l+1}_t \tilde{\W}^{l}_{t-1} \tilde{\W}^{l-1:1}_{t} ) \A\X \right)\\
& + \v\left(\eta\beta \B (\sum_{l=1}^L \tilde{\W}^{L:l+1}_{t} \frac{ \partial \ell(\W^{L:1}_{t-1})}{ \partial \W^{l}_{t-1} } \tilde{\W}^{l-1:1}_{t} - \sum_{l=1}^L \tilde{\W}^{L:l+1}_{t-1} \frac{ \partial \ell(\W^{L:1}_{t-1})}{ \partial \W^{l}_{t-1} } \tilde{\W}^{l-1:1}_{t-1}) \A\X\right) \\
\|\psib_t\|_F &\leq   \frac{2}{45\sqrt{\kappa}} \theta^{2t}\|\U_0 -\Y\|_F. 
\end{aligned}
\end{equation}

Now let us switch to bound $\| \iotab_t \| \leq \|\eta(1+\beta) (\H_t - \H_0) \xib_t\| + \|\eta\beta(\H_{t-1} - \H_0)\xib_{t-1}\|$.
It has
\begin{equation} 
\begin{aligned} \label{eq:jj1}
& \| \eta (1+\beta) (\H_t - \H_0) \xib_t \|
\\ & =
\eta(1+\beta)
\| \sum_{l=1}^L \B\tilde{\W}^{L:l+1}_t (\B\tilde{\W}^{L:l+1}_t)^\top ( \U_t -\Y) (\tilde{\W}^{l-1:1}_t \A\X)^\top \tilde{\W}^{l-1:1}_t \A\X  -
\sum_{l=1}^L \B\B^\top ( \U_t -\Y) (A \X)^\top A \X\|_F
\\ & \leq 
\eta (1+\beta) 
\sum_{l=1}^L  
\| \B\tilde{\W}^{L:l+1}_t (\B\tilde{\W}^{L:l+1}_t)^\top ( \U_t -\Y) (\tilde{\W}^{l-1:1}_t \A\X)^\top \tilde{\W}^{l-1:1}_t \A\X  -
\sum_{l=1}^L \B\B^\top ( \U_t -\Y) (A \X)^\top A \X\|_F
\\ & \leq 
\eta (1+\beta) 
\sum_{l=1}^L  \big( 
\underbrace{ 
\| \left(\B\tilde{\W}^{L:l+1}_t (\B\tilde{\W}^{L:l+1}_t)^\top - \B\B^{\top}\right)   ( \U_t -\Y) (\tilde{\W}^{l-1:1}_t \A\X)^\top \tilde{\W}^{l-1:1}_t  \A\X \|_F }_{\text{ first term} }
 \\
 &+ 
\underbrace{ 
\| \B\B^\top  ( \U_t -\Y) \left((\tilde{\W}^{l-1:1}_t \A\X)^\top \tilde{\W}^{l-1:1}_t  \A\X -( \A \X)^\top \A\X \right)   \|_F \big)
}_{\text{ second term} }.
\end{aligned}
\end{equation}

Denote $\Delta_t^{L:l+1} = \B\tilde{\W}^{L:l+1}_t - \B\tilde{\W}^{L:l+1}_0 =  \B\tilde{\W}^{L:l+1}_t - \B$, it has
\begin{eqnarray}
	\|\B\tilde{\W}^{L:l+1}_t (\B\tilde{\W}^{L:l+1}_t)^\top - \B\B^{\top}\| &\leq&  \|(\B+\Delta_t^{L:l+1})(\B+\Delta_t^{L:l+1})^{\top} - \B\B^{\top}\| \nonumber\\
	&\leq& 2\|\B\|\|\Delta_t^{L:l+1}\| + \|\Delta_t^{L:l+1}\|^2 
	\leq \frac{\|\B\|^2}{579\kappa}, \nonumber
\end{eqnarray}
where the last inequality uses
\begin{eqnarray}
	\|\Delta_t^{j:i}\| \leq \|\B\tilde{\W}^{j:i}_t - \B \| &\leq& \|\B\|\|\Pi_{l=i}^j(\W_t^l+\I) -\I \| \leq \|\B\|\sum_{l=1}^{j-i+1} {j-i+1 \choose l} (R^{res})^l \nonumber\\                                                                                                                                                                                                                                                                                                                                                                                                                                                                                                                         &\leq& \|\B\|[(1+R^{res})^{j-i+1} - 1] \leq \frac{\|\B\|}{1160\kappa}.\nonumber
\end{eqnarray}
Similarly, we can derive 
\begin{eqnarray}
	\|(\tilde{\W}^{l-1:1}_t \A\X)^\top (\tilde{\W}^{l-1:1}_t  \A\X) -( \A \X)^\top \A\X\| \leq \frac{\|\A\|^2\|\X\|^2}{579\kappa}.
\end{eqnarray}

Therefore, the first part of (\ref{eq:jj1}) has the bound as
\begin{eqnarray}
	&&\eta (1+\beta) 
\sum_{l=1}^L  
\| \left(\B\tilde{\W}^{L:l+1}_t (\B\tilde{\W}^{L:l+1}_t)^\top - \B\B^{\top}\right)   ( \U_t -\Y) (\tilde{\W}^{l-1:1}_t \A\X)^\top \tilde{\W}^{l-1:1}_t  \A\X \|_F \nonumber \\
& \leq& \eta (1+\beta) \sum_{l=1}^L \frac{\|\B\|^2}{579\kappa}\sqrt{2}\theta^t \nu C_0 \|\U_0 -\Y\|(1+R^{res})^{2l-2}\|\A\|^2\|\X\|^2 \nonumber \\
&\overset{(a)}{\leq} &\frac{1}{17\sqrt{\kappa}} \theta^t \nu C_0 \|\U_0 -\Y\|, \nonumber
\end{eqnarray}
where (a) uses $\eta = \frac{1}{2L\|\A\|^2\|\B\|^2\|\X\|^2}$ and $(1+R^{res})^L \leq 1.001$.

\begin{eqnarray}
	&&\eta(1+\beta)\sum_{l=1}^L\| \B\B^\top  ( \U_t -\Y) \left((\W^{l-1:1}_t \A\X)^\top \W^{l-1:1}_t  \A\X -( \A \X)^\top \A\X \right)   \|_F \nonumber \\
	&\leq&\eta(1+\beta)\sum_{l=1}^L\|\B\|^2\|\U_t -\Y\|_F \frac{\|\A\|^2\|\X\|^2}{579\kappa} \leq \frac{1}{17\sqrt{\kappa}} \theta^t   \|\U_0 -\Y\|_F. \nonumber
\end{eqnarray}

Thus $\| \eta (1+\beta) (\H_t - \H_0) \xib_t \| \leq \frac{2}{17\sqrt{\kappa}} \theta^t \|\U_0 -\Y\|_F$.
The bound of $\|\eta\beta(\H_{t-1} - \H_0)\xib_{t-1}\| \leq \frac{1}{17\sqrt{\kappa}} \theta^{t+1} \|\U_0 -\Y\|_F$ can be derived with a similar way.

Combining the above bounds and $\theta \leq 1$, it has
\begin{equation} \label{resnets:iota}
\begin{split}
\| \iotab_t \| \leq \frac{3}{17\sqrt{\kappa}} \theta^{t} \|\U_0 -\Y\|_F.
\end{split}
\end{equation}

Now we have (\ref{resnets:phi}), (\ref{resnets: psibound}), and  (\ref{resnets:iota}),  
which leads to
\begin{eqnarray}
\| \varphib_t \| & \leq& 
\| \phib_t \| + \|\psib_t \| + \| \iotab_t \| \nonumber\\
&\leq& \frac{1}{180\sqrt{\kappa}} \theta^{2t}\|\U_0 -\Y\|_F + \frac{2}{45\sqrt{\kappa}} \theta^{2t}\|\U_0 -\Y\|_F +\frac{3}{17\sqrt{\kappa}} \theta^{t} \|\U_0 -\Y\|_F \nonumber\\
&\leq& \frac{1}{30\sqrt{\kappa}} \theta^{2t}\|\U_0 -\Y\|_F +\frac{3}{17\sqrt{\kappa}} \theta^{t} \|\U_0 -\Y\|_F. \nonumber
\end{eqnarray}
In addition, it has $m \geq C_1\max\{{d_y r \kappa^5 \log(2n/\delta)}, \frac{\sqrt{r}\kappa^{2.5}\|\W^*\| }{\alpha\gamma }\} \geq C_2 \cdot \max\{\frac{d_y\kappa^4 \log(2n/\delta)\|\X\|_F^2}{\sigma_{min}^2(\X)}, \frac{\kappa^{2}\|\W^*\| \|\X\|_F}{\alpha\gamma \sigma_{min}(\X)}\}$ for some sufficiently large constant $C_1, C_2 >0$  using $\|\X\|_F \leq \sqrt{r}\|\X\|$.
\end{proof}

\begin{lemma}~\label{deepres: lemma_distance}
Following the setting as Lemma~\ref{deepres: lemm2},
for any $s \leq t$, assume the residual dynamics satisfies
$\textstyle \left\|
\begin{bmatrix}
\xib_{s} \\
\xib_{s-1} 
\end{bmatrix}
\right\| \leq \theta^{s} 
\cdot 24\sqrt{\kappa}
\left\|
 \begin{bmatrix}
\xib_{0} \\
\xib_{-1} 
\end{bmatrix}
\right\|,
$ 
then 
\[
\| \W^{l}_t - \W^{l}_0 \|_F \leq R^{res} = 
\frac{1}{2000L\kappa}.
\]
\end{lemma}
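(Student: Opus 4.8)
The plan is to mirror the argument used for the fully-connected case in Lemma~\ref{deepl: lemma_distance}, replacing the per-layer gradient bound of that setting with its ResNet analogue. Since the hidden layers are initialized at zero, $\W^l_0 = \0$, so the claimed bound is equivalent to $\|\W^l_t\|_F \le R^{res}$. First I would unroll the NAG update (\ref{procudure: NAG_2}): because $\W^l_{s+1} = \W^l_s + \M_s^l$ and $\W^l_0 = \0$, the triangle inequality gives $\|\W^l_t - \W^l_0\|_F \le \sum_{s=0}^{t-1} \|\M_s^l\|_F$.

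Next I would expand each momentum term through its defining recursion $\M_s^l = -\eta\sum_{\tau=0}^s \beta^{s-\tau}\big( \frac{\partial \ell(\W^{L:1}_\tau)}{\partial \W^l_\tau} + \beta(\frac{\partial \ell(\W^{L:1}_\tau)}{\partial \W^l_\tau} - \frac{\partial \ell(\W^{L:1}_{\tau-1})}{\partial \W^l_{\tau-1}}) \big)$ and bound every gradient factor by the ResNet estimate (\ref{eq:DLresnetnorm-linear}) established in Lemma~\ref{deepres: lemm2}, namely $\|\frac{\partial \ell(\W^{L:1}_\tau)}{\partial \W^l_\tau}\|_F \le 36\|\A\|\|\B\|\|\X\|\sqrt{\kappa}\,\theta^\tau\|\U_0 - \Y\|_F$. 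This is legitimate at every iteration $\tau \le t-1$ because the parameter bound (b) of Lemma~\ref{deepres: lemm2} is part of the inductive hypothesis carried by the main induction at those earlier steps. Using $\beta \le \theta^2$ to dominate the gradient-difference contribution and collapsing the nested geometric sums via $\sum_{s\ge 0}\sum_{\tau=0}^s \theta^{2(s-\tau)}\theta^\tau \le (1-\theta)^{-2}$ reduces the whole expression to $\|\W^l_t - \W^l_0\|_F \lesssim \eta\,\|\A\|\|\B\|\|\X\|\sqrt{\kappa}\,(1-\theta)^{-2}\|\U_0 - \Y\|_F$.

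Finally I would substitute the constants. With $\theta = 1-\tfrac{1}{2\sqrt{\kappa}}$ one has $(1-\theta)^{-2}=4\kappa$, and with $\eta = \frac{1}{2L\|\A\|^2\|\B\|^2\|\X\|^2}$ and $B_0 \ge \|\U_0 - \Y\|_F$ the bound becomes of order $\frac{\kappa^{3/2}B_0}{L\|\A\|\|\B\|\|\X\|}$. It then remains to check that this quantity falls below $\frac{1}{2000L\kappa}$, and I expect this last step to be the main obstacle: it is not a pure algebraic simplification but requires the over-parameterization. Invoking Lemma~\ref{lem:DLresnet_init} to replace $B_0$, $\|\A\|$, $\|\B\|$ by their high-probability values ($B_0^2 = (6.05\alpha^2\gamma^2 d_y m\log(2n/\delta)+\|\W^*\|^2)\|\X\|_F^2$, $\|\A\|\le 1.1\alpha\sqrt{m}$, $\|\B\|\le 1.1\gamma\sqrt{m}$) and using $\|\X\|_F \le \sqrt{r}\|\X\|$, the two terms inside $B_0$ are driven below the target threshold under a lower bound on $m$ of exactly the form assumed in Theorem~\ref{thm:resnet}, namely $m = \Omega\big(\max\{d_y r \kappa^5 \log(n/\delta),\ \sqrt{r}\kappa^{5/2}\|\W^*\|/(\alpha\gamma)\}\big)$. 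Tracking the explicit constant so that it lands at $2000$ rather than merely $O(1)$ is the only delicate bookkeeping.
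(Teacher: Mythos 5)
Your proposal follows essentially the same route as the paper's proof: telescope the NAG update to get $\|\W^l_t - \W^l_0\|_F \le \sum_s \|\M^l_s\|_F$, bound each momentum term via the per-layer gradient estimate (\ref{eq:DLresnetnorm-linear}) (the paper cites this packaged as (\ref{eq:Dlresnet_M})), collapse the geometric sums using $\beta \le \theta^2$ and $(1-\theta)^{-2} = 4\kappa$, substitute $\eta = \frac{1}{2L\|\A\|^2\|\B\|^2\|\X\|^2}$ to reach a bound of order $\kappa^{3/2}B_0/(L\|\A\|\|\B\|\|\X\|)$, and finally invoke Lemma~\ref{lem:DLresnet_init} together with the width lower bound to push this below $\frac{1}{2000L\kappa}$. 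The constants and the required over-parameterization you identify match the paper's, so the argument is correct and not materially different.
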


\begin{proof}
We have 
\begin{equation}
\begin{split}
 \| \W^{l}_{t+1} - \W^{l}_0 \|_F 
& \overset{(a)}{\leq} 
 \sum_{s=0}^t \|  \M_{s,l} \|_F \\ 
&\overset{(b)}{\leq} 3.003\|\A\|\|\B\|\|\X\|\eta 24\sqrt{\kappa}\| \U_0 -\Y \|_F \sum_{s=0}^t \frac{\theta^s}{1-\theta} 
\\ &
\overset{(c)}{\leq} \frac{145 \kappa^{3/2}}{L\|\A\|\|\B\|\|\X\|}  \| \U_0 -\Y \|_F  
\\&
\overset{(d)}{\leq} \frac{1}{2000L\kappa},
\end{split}
\end{equation}
where (a) uses the update rule of momentum $\W^{l}_{t+1} - \W^{l}_t = - \eta \M_t^l$, where $\M_t^l$,
(b) uses the bound of $\M_t^l$ in (\ref{eq:Dlresnet_M}),
(c) uses $\frac{1}{(1-\theta)^2} = 4\kappa$ and $\eta  = \frac{1}{2L\|\A\|^2\|\B\|^2\|\X\|^2}$,
(d) uses Lemma~\ref{lem:DLresnet_init} and   $m \geq C \cdot \max\{\frac{d_y\kappa^4 \log(2n/\delta)\|\X\|_F^2}{\sigma_{min}^2(\X)}, \frac{\kappa^{2}\|\W^*\| \|\X\|_F}{\alpha\gamma \sigma_{min}(\X)}\}$ for a sufficient large constant $C > 0$.

\end{proof}

\subsection{Proof of Theorem~\ref{thm:resnet}}
\begin{proof}

We prove the theorem by induction.
The base case $s = 0$ holds.
Assume $
\left\|
\begin{bmatrix}
\xib_{s} \\
\xib_{s-1} 
\end{bmatrix}
\right\|
\leq \theta^{s} 24\sqrt{\kappa}  \left\|
\begin{bmatrix}
\xib_{0} \\
\xib_{-1} 
\end{bmatrix}
\right\|
$ holds for $s \leq t-1$.

Based on Lemma~\ref{deepres: lemm1}, it is noted that
\begin{eqnarray}
\begin{bmatrix}
\xib_{t} \\
\xib_{t-1} 
\end{bmatrix}
= 
\G
\begin{bmatrix}
\xib_{t-1} \\
\xib_{t-2} 
\end{bmatrix}
+
\begin{bmatrix}
\varphib_{t-1} \\ \0_{d_y n}
\end{bmatrix}, \nonumber
\end{eqnarray}
where $\G = \begin{bmatrix}
(1+\beta)(\I_{d_y n} - \eta \H_0^{res}) & \beta ( -\I_{d_y n} + \eta \H_{0}^{res}  )   \\
\I_{d_y n} & \0_{d_y n} 
\end{bmatrix}$.
By recursively using above equation, it has
\begin{eqnarray}
\label{deepres: theorem_1}
	\begin{bmatrix}
\xib_{t} \\
\xib_{t-1} 
\end{bmatrix}	= \G^t \begin{bmatrix}
\xib_{0} \\
\xib_{-1} 
\end{bmatrix} + \sum_{s=0}^{t-1} \G^{t-s-1} \begin{bmatrix}
\varphib_s \\
\0_{d_y n}
\end{bmatrix}.
\end{eqnarray}
From Lemma~\ref{supportlemma1} and Lemma~\ref{supportlemma2}, it has the bound for the first term on the right hand side of (\ref{deepres: theorem_1}) as
\begin{eqnarray}
\label{deepres: theorem_2}
\left\|\G^t \begin{bmatrix}
\xib_{0} \\
\xib_{-1} 
\end{bmatrix}  \right\| \leq 12\sqrt{\kappa}\rho^t  \left\|\begin{bmatrix}
\xib_{0} \\
\xib_{-1} 
\end{bmatrix} \right\|,
\end{eqnarray}
where $\rho = 1 - \frac{2}{3\sqrt{\kappa}}$.

Applying the inductive hypothesis and Lemma~\ref{deepres: lemma_distance}, it has the upper bound for the distance $\|\W_i^l - \W_0^l\| \leq R^{res}$ for any $i \leq t$ and $l \in [m]$.
In turn, we can bound the second term on the right hand side of (\ref{deepres: theorem_1}) as

\begin{eqnarray} \label{deepres: theorem_3}
 \left\| \sum_{s=0}^{t-1} \G^{t-1-s} \begin{bmatrix}
\varphib_{s} \\ 0 \end{bmatrix} \right\|
& \overset{(a)}{ \leq}& 
\sum_{s=0}^{t-1} 12\sqrt{\kappa}\rho^{t-1-s}  \| \varphib_s \| \nonumber\\
& \overset{(b)}{ \leq} & \sum_{s=0}^{t-1} \rho^{t-1-s}12\sqrt{\kappa}(\frac{1}{30\sqrt{\kappa}} \theta^{2s}\| \U_0 -\Y \|_F + \frac{3}{17\sqrt{\kappa}}  \theta^s \|\U_0 -\Y\|_F)
\nonumber\\ 
& \overset{(c)}{ \leq} &
12\sqrt{\kappa}  \theta^t (\frac{\sqrt{2}}{10} + \frac{9\sqrt{2}}{17})\left\| \begin{bmatrix} \xib_0 \\ \xib_{-1} \end{bmatrix} \right\|\nonumber\\
& \overset{}{ \leq}&
12\sqrt{\kappa}  \theta^{t}    \left\| \begin{bmatrix} \xib_0 \\ \xib_{-1} \end{bmatrix} \right\|,
\end{eqnarray}
where (a) uses Lemma~\ref{supportlemma1} and Lemma~\ref{supportlemma2},
(b) uses the bound of $\|\varphib_{s}\|$ in Lemma~\ref{deepres: lemm2},
 (c) uses $\sum_{s=0}^{t-1} \rho^{t-1-s} \theta^s = \theta^{t-1} \sum_{s=0}^{t-1} \left( \frac{\rho}{\theta}  \right)^{t-1-s} \leq \theta^{t-1} \frac{1-(\rho/\theta)^t}{1-\rho/\theta} \leq 6\sqrt{\kappa}\theta^t $ and $\|\xib_{-1}\| = \|\xib_0\|$.
 
Combining (\ref{deepres: theorem_2}) and (\ref{deepres: theorem_3}), it completes the proof.

\end{proof}

\section{Supporting Lemmas}
\label{supporting}
\begin{lemma}(Lemma 2 in~\cite{DBLP:journals/corr/abs-2107-01832})
\label{supportlemma1}
  Assume $\H \in \mathbb{R}^{n \times n}$ is a symmetry positive definite matrix.
Let{\small{ $\G = \begin{bmatrix}
   (1+\beta)(\I_n-\eta \H) &
  \beta(-\I_n+\eta \H) \\
   \I_n & \textbf{0}_n 
   \end{bmatrix} \in \mathbb{R}^{2n \times 2n}$}}.
   Suppose a sequence of iterates $\{\bm{v}_i\}$ satisfy $\bm{v}_t = \G \bm{v}_{t-1}$ for any $t \leq T$.
   If $\beta$ and $\eta$ are chosen that satisfy $1 > \beta \geq \frac{1-\sqrt{\eta\lambda_{min}(\H)}}{1+\sqrt{\eta\lambda_{min}(\H)}}$ and $0 < \eta \leq 1/\lambda_{max}(\H)$, then it has the bound at any iteration $k \leq T$ as
\begin{equation}
\label{eq:the bound of matrix vector}
\|\bm{v}_k\| \leq C \big(\sqrt{\beta(1-\eta\lambda_{min}(\H))}\big)^k  \|v_0\|, 
\end{equation}
where $ C  = \frac{2\beta(1-\eta\lambda_{min}(\H)) + 2}{\sqrt{\min\{g(\beta, \eta\lambda_{min}(\H)), g(\beta, \eta\lambda_{max}(\H))\}}}$ and the function $g$ is defined as $g(x, y) = 4x(1-y) - [(1+x)(1-y)]^2$.
\end{lemma}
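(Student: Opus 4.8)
The plan is to diagonalize $\H$ and thereby reduce the $2n$-dimensional recurrence $\bm{v}_t = \G\bm{v}_{t-1}$ to $n$ decoupled two-dimensional recurrences, one per eigenvalue of $\H$. Since $\H$ is symmetric positive definite, write $\H = Q\Lambda Q^\top$ with $Q$ orthogonal and $\Lambda = \mathrm{diag}(\lambda_1,\dots,\lambda_n)$, $\lambda_i>0$. Conjugating $\G$ by the orthogonal matrix $\mathrm{diag}(Q,Q)$ and then applying the permutation that interleaves the $i$-th top and bottom coordinates turns $\G$ into a block-diagonal matrix whose $i$-th block is the $2\times 2$ matrix $G_i = \bigl[\begin{smallmatrix} (1+\beta)(1-\eta\lambda_i) & -\beta(1-\eta\lambda_i) \\ 1 & 0\end{smallmatrix}\bigr]$. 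Because all these transformations are orthogonal, $\|\G^k\bm{v}_0\|\le (\max_i\|G_i^k\|)\,\|\bm{v}_0\|$, so it suffices to bound the powers of each $G_i$.

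For a fixed $i$, set $a_i=1-\eta\lambda_i$, which is nonnegative by $\eta\le 1/\lambda_{max}(\H)$. The characteristic polynomial of $G_i$ is $\mu^2-(1+\beta)a_i\mu+\beta a_i$, with discriminant $(1+\beta)^2a_i^2-4\beta a_i=-g(\beta,\eta\lambda_i)$. The next step is to verify that the hyperparameter conditions force $g(\beta,\eta\lambda_i)\ge 0$ for every $i$, i.e. that $G_i$ has complex-conjugate eigenvalues (the accelerated regime). Writing $u=a_i\in[0,1]$, one has $g=u\bigl(4\beta-(1+\beta)^2u\bigr)$, a downward parabola in $u$ vanishing at $u=0$ and $u=4\beta/(1+\beta)^2$; the assumption $\beta\ge\frac{1-\sqrt{\eta\lambda_{min}}}{1+\sqrt{\eta\lambda_{min}}}$ is exactly $1-\eta\lambda_{min}\le 4\beta/(1+\beta)^2$, which places every $u=1-\eta\lambda_i$ in the interval where $g\ge0$. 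In this regime the two eigenvalues are complex conjugates whose product is $\beta a_i$, so each has modulus $r_i=\sqrt{\beta a_i}=\sqrt{\beta(1-\eta\lambda_i)}\le\sqrt{\beta(1-\eta\lambda_{min})}$, which already identifies the claimed decay rate as the spectral radius.

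To convert the spectral radius into a genuine bound on $\|G_i^k\|$ I would diagonalize $G_i=V_i\,\mathrm{diag}(\mu_+,\mu_-)\,V_i^{-1}$ with eigenvector matrix $V_i=\bigl[\begin{smallmatrix}\mu_+&\mu_-\\ 1&1\end{smallmatrix}\bigr]$, so that $\|G_i^k\|\le \mathrm{cond}(V_i)\,r_i^k$. Here $|\det V_i|=|\mu_+-\mu_-|=\sqrt{g(\beta,\eta\lambda_i)}$, and the entries of $V_i$ and of its adjugate are controlled by $\max\{|\mu_\pm|,1\}$; this yields $\mathrm{cond}(V_i)\le \bigl(2\beta a_i+2\bigr)/\sqrt{g(\beta,\eta\lambda_i)}$ once the arithmetic is carried out. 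Finally, since $g(\beta,\eta\lambda_i)$ is a concave function of $u=1-\eta\lambda_i$ on an interval, its minimum over $i$ is attained at an endpoint, giving $\min_i g=\min\{g(\beta,\eta\lambda_{min}),g(\beta,\eta\lambda_{max})\}$; combining this with $r_i\le\sqrt{\beta(1-\eta\lambda_{min})}$ and maximizing over $i$ produces the stated constant $C$ and completes the bound $\|\bm{v}_k\|\le C\bigl(\sqrt{\beta(1-\eta\lambda_{min})}\bigr)^k\|\bm{v}_0\|$.

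The main obstacle is precisely the non-normality of $G_i$: the spectral radius $r_i$ controls only the asymptotic rate, and the transient factor $\mathrm{cond}(V_i)$ degenerates as $g(\beta,\eta\lambda_i)\to 0$, i.e. near the boundary between the over- and under-damped regimes. Making the constant explicit therefore requires tracking $\sqrt{g}$ in the denominator and arguing, via the concavity of $g$ in $u$ together with the monotonicity of $r_i$ in $\lambda_i$, that the worst case over the whole spectrum is governed by the two extreme eigenvalues $\lambda_{min}$ and $\lambda_{max}$. I would expect the uniform bookkeeping in bounding $\mathrm{cond}(V_i)$—rather than any conceptual difficulty—to be the most delicate part.
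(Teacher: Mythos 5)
The paper does not prove this lemma itself---it is imported verbatim from the cited reference---so there is no in-paper proof to compare against; your argument is correct and coincides with the standard proof of this result: orthogonal block-diagonalization of $\G$ into the $2\times 2$ companion blocks $G_i$, verification that the hypotheses on $\beta$ and $\eta$ force $g(\beta,\eta\lambda_i)\ge 0$ so that each $G_i$ has complex-conjugate eigenvalues of modulus $\sqrt{\beta(1-\eta\lambda_i)}$, and the eigenvector condition-number bound $\|V_i\|_F\|V_i^{-1}\|_F=\big(2\beta(1-\eta\lambda_i)+2\big)/\sqrt{g(\beta,\eta\lambda_i)}$, which together with the concavity of $g$ in $u=1-\eta\lambda$ (so its minimum over the spectrum is attained at $\lambda_{min}$ or $\lambda_{max}$) yields exactly the stated constant $C$. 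The only caveat worth recording is the degenerate case $g(\beta,\eta\lambda_i)=0$ (equality in either hyperparameter condition), where $G_i$ is a nontrivial Jordan block and the diagonalization step fails, but there the stated $C$ is infinite and the bound is vacuous anyway, so the lemma as written is unaffected.
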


\begin{lemma}(Lemma 3 in \cite{DBLP:journals/corr/abs-2107-01832})
\label{supportlemma2}
Assume $0 < \lambda \leq \lambda_{min}(\H) \leq \lambda_{max}(\H) \leq \lambda_{max}$.
Denote ${\kappa} = \lambda_{max}/\lambda$.
With $\eta = 1/2\lambda_{max}$ and $\beta = \frac{3\sqrt{{\kappa}} - 2}{3\sqrt{{\kappa}} + 2}$, it has
\begin{eqnarray}
\sqrt{\beta(1-\eta\lambda_{min}(\H))} \leq 1 - \frac{2}{3\sqrt{{\kappa}}} ,\;\; C \leq 12\sqrt{{\kappa}}.
\end{eqnarray} 
\end{lemma}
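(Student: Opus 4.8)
The plan is to verify the two inequalities separately, relying throughout on the scalar facts $a := \eta\lambda_{min}(\H) \ge \tfrac{1}{2\kappa}$ and $b := \eta\lambda_{max}(\H) \le \tfrac12$, with $a \le b$; these follow at once from $\lambda \le \lambda_{min}(\H) \le \lambda_{max}(\H) \le \lambda_{max}$ together with $\eta = 1/(2\lambda_{max})$ and $\kappa = \lambda_{max}/\lambda$. I write $s := \sqrt{\kappa} \ge 1$. For the first inequality both sides are nonnegative (indeed $1 - \tfrac{2}{3s} \ge \tfrac13$), so I square it to the equivalent form $\beta(1-a) \le (1 - \tfrac{2}{3s})^2 = (3s-2)^2/(9s^2)$. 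Using $1 - a \le 1 - \tfrac{1}{2\kappa} = (2s^2-1)/(2s^2)$ and $\beta = (3s-2)/(3s+2)$, then cancelling the positive factor $3s-2$ and cross-multiplying, the claim reduces to $9(2s^2-1) \le 2(9s^2-4)$, i.e. $-9 \le -8$, which holds; this settles the first bound.

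The second inequality hinges on an explicit form of $g$. Substituting $\beta = (3s-2)/(3s+2)$ and $1+\beta = 6s/(3s+2)$ into $g(x,y)=4x(1-y)-[(1+x)(1-y)]^2$ and using $4(3s-2)(3s+2)=36s^2-16$ collapses it to $g(\beta,y) = (1-y)(36\kappa y - 16)/(3\sqrt{\kappa}+2)^2$. Since $a \ge \tfrac{1}{2\kappa} > \tfrac{4}{9\kappa}$, both $g(\beta,a)$ and $g(\beta,b)$ are positive, and as $g(\beta,\cdot)$ is a downward parabola in $y$ with vertex at $y^{*} = \tfrac12 + \tfrac{2}{9\kappa} > \tfrac12 \ge b$, it is increasing on $[a,b]$; hence $\min\{g(\beta,a),g(\beta,b)\} = g(\beta,a)$. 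Thus $C = (2\beta(1-a)+2)/\sqrt{g(\beta,a)}$, whose numerator decreases and whose denominator increases as $a$ grows on $[\tfrac{1}{2\kappa},\tfrac12]$, so $C$ is maximal at the left endpoint $a = \tfrac{1}{2\kappa}$. There $36\kappa a - 16 = 2$, and after inserting $\beta$ the whole expression collapses to $C = (12s^3 - 3s + 2)/(s\sqrt{2s^2-1})$.

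It then remains to show $C \le 12s$. Squaring (legitimate since both sides are positive for $s \ge 1$) turns this into the polynomial inequality $Q(s) := 144s^6 - 72s^4 - 48s^3 - 9s^2 + 12s - 4 \ge 0$ on $[1,\infty)$, which I would establish by the telescoping bounds $144s^6 - 72s^4 = 72s^4(2s^2-1) \ge 72s^4$, then $72s^4 - 48s^3 = 24s^3(3s-2) \ge 24s^3$, then $24s^3 - 9s^2 = 3s^2(8s-3) \ge 15s^2 \ge 15$, leaving $Q(s) \ge 11 + 12s \ge 23 > 0$. The only genuinely delicate points are the closed-form factorization of $g$ and the observation that the extremal configuration is $\lambda_{min}(\H) = \lambda$ (equivalently $a = \tfrac{1}{2\kappa}$); once these are identified, every remaining estimate is an elementary one-variable inequality.
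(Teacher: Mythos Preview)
Your argument is correct. The paper itself does not prove this lemma at all---it is quoted verbatim as ``Lemma 3 in \cite{DBLP:journals/corr/abs-2107-01832}'' and invoked as a black box, so there is no in-paper proof to compare against; you have supplied a complete self-contained verification. Every step checks out: the reduction of the first inequality to $-9\le-8$ via $1-a\le(2s^2-1)/(2s^2)$, the closed form $g(\beta,y)=(1-y)(36\kappa y-16)/(3\sqrt{\kappa}+2)^2$, the monotonicity of $g(\beta,\cdot)$ on $[a,b]$ (vertex at $1/2+2/(9\kappa)>1/2\ge b$) giving $\min=g(\beta,a)$, the reduction of $C$ to its worst case at $a=1/(2\kappa)$, the resulting expression $(12s^3-3s+2)/(s\sqrt{2s^2-1})$, and the telescoping polynomial bound for $Q(s)\ge 23$ on $s\ge1$. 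The one tacit assumption you rely on, $\kappa\ge1$ (hence $s\ge1$ and $3s-2>0$), is immediate from $\lambda\le\lambda_{\min}(\H)\le\lambda_{\max}(\H)\le\lambda_{\max}$.
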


\end{document}